\newtheorem{assumption}{Assumption}
\newcommand{\bfi}{\bfseries\itshape}
\DeclarePairedDelimiter\floor{\lfloor}{\rfloor}
\newcommand{\vertiii}[1]{{\left\vert\kern-0.25ex\left\vert\kern-0.25ex\left\vert #1 
    \right\vert\kern-0.25ex\right\vert\kern-0.25ex\right\vert}}
\newcommand{\vertii}[1]{{\left\vert\kern-0.25ex\left\vert #1 
    \right\vert\kern-0.25ex\right\vert}}
\begin{document}

\title{Risk Bounds for Reservoir Computing}

\author{\name Lukas Gonon \email lukas.gonon@unisg.ch\\
\addr Faculty of Mathematics and Statistics \\
Universit\"at Sankt Gallen\\
Switzerland \AND
\name Lyudmila Grigoryeva \email Lyudmila.Grigoryeva@uni-konstanz.de \\
\addr Department of Mathematics and Statistics \\
Graduate School of Decision Sciences \\
Universit\"at Konstanz\\
Germany \AND
\name  Juan-Pablo Ortega \email Juan-Pablo.Ortega@unisg.ch\\
\addr Faculty of Mathematics and Statistics \\
Universit\"at Sankt Gallen\\
Switzerland \\
Centre National de la Recherche Scientifique (CNRS)\\
France}

\editor{}

\maketitle

\begin{abstract}
We analyze the practices of reservoir computing in the framework of statistical learning theory. In particular, we derive finite sample upper bounds for the generalization error committed by specific families of reservoir computing systems when processing discrete-time inputs under various hypotheses on their dependence structure. Non-asymptotic bounds are explicitly written down in terms of the multivariate Rademacher complexities of the reservoir systems and the weak dependence structure of the signals that are being handled. This allows, in particular, to determine the minimal number of observations needed in order to  guarantee a prescribed estimation accuracy with high probability for a given reservoir family. At the same time, the asymptotic behavior of the devised bounds guarantees the consistency of the empirical risk minimization procedure for various hypothesis classes of reservoir functionals. 
\end{abstract}

\begin{keywords} 
  Reservoir computing, RC, echo state networks, ESN, state affine systems, SAS, random reservoirs, Rademacher complexity, weak dependence, empirical risk minimization, PAC bounds, risk bounds.
\end{keywords}

\section{Introduction}

Reservoir computing (RC) is a well established paradigm in the supervised learning of dynamic processes, which exploits the ability of specific families of semi-randomly generated state-space systems to solve computational and signal treatment tasks, both in deterministic and in stochastic setups. In recent years, both researchers and practitioners have been paying increasing attention to reservoir systems and their applications in learning. The main reasons behind this growing interest are threefold. Firstly, training strategies in reservoir computing are easy to implement as they simply consist in estimating the weights of memoryless static readouts, while the internal weights of the reservoir network are randomly created; this feature is closely linked to ideas originating in biology and the neurosciences in relation with the design of  brain-inspired computing algorithms. Second, there is an important interplay between reservoir systems and the theory of dynamical systems, recurrent neural networks, and nonlinear discrete-time state-space systems,  which makes the collection of tools available for their analysis very rich and explains in part why RC appears in the literature assimilated to other denominations, such as {\bfi  Liquid State Machines} (see \cite{Maass2000, maass1, Natschlager:117806, corticalMaass, MaassUniversality}) or  {\bfi  Echo State Networks} (see \cite{Jaeger04, jaeger2001}). Finally, several families of reservoir systems have shown excellent performance in various classification and forecasting  exercises including both standard machine learning benchmarks  (see \cite{lukosevicius} and references therein) and sophisticated applications that range from  learning  the attractors of chaotic nonlinear infinite dimensional dynamical systems (see \cite{Jaeger04, pathak:chaos, Pathak:PRL, Ott2018})  to the  detection of  Steady-State Visual Evoked Potentials (SSVEPs)  in electroencephalographic signals as in \cite{Ibanez-Soria2019}. It is also important to point out that RC implementations with dedicated hardware have been proved to exhibit information processing speeds that significantly outperform standard Turing-type computers (see, for instance,~\cite{Appeltant2011, Rodan2011, SOASforRC, Larger2012, Paquot2012, photonicReservoir2013, swirl:paper, Vinckier2015, Laporte2018}).

For a number of years, the reservoir computing community has  worked hard on characterizing the key properties that explain the performance of reservoirs in classification, forecasting, and memory reconstruction tasks and also on formulating necessary conditions for a given state-space system to  serve as a properly functioning reservoir system. Salient features of reservoir systems that have been shown to be important are the {\bfi  fading memory property (FMP)}, which appears in the context of systems theory \citep{volterra:book, wiener:book, Boyd1985}, computational neurosciences \citep{corticalMaass}, physics \citep{Coleman1968}, or mechanics (see \cite{Fabrizio2010} and references therein), the {\bfi  echo state property (ESP)} (\cite{jaeger2001, Yildiz2012, Manjunath:Jaeger}), and the pairwise {\bfi  separation property (SP)} (see, for instance, \cite{DynamicalSystemsMaass, lukosevicius, maass2} and references therein). Much effort has been made to provide rigorous definitions for these concepts and to characterize their relations under various hypotheses (see \cite{jaeger2001, RC9} and references therein). In particular, the crucial importance of these properties manifests itself in a series of universal approximation results which have been obtained for RC systems (see for example~\cite{MaassUniversality, RC6, RC7, RC8}). This feature is a {\it dynamic} analog to well-established universal approximation properties for {\it static} machine learning paradigms, like neural networks (\cite{cybenko, funahashi:universality, hornik}), for which  the  so-called {\bfi approximation error} (see \cite*{cucker:smale, Smale2003, cucker:zhou:book})  can be made arbitrary small.

From the point of view of learning theory, the most important feature for any paradigm is its ability to generalize. Here this means that the performance of a given RC architecture on a training sample should be comparable to  its behavior on previously unseen realizations of the same data generation process. 
In the RC literature, this problem has been traditionally tackled using the notion of {\bfi memory capacity}, that has been the subject of much research (\cite{Jaeger:2002, White2004, Ganguli2008, Hermans2010, dambre2012, GHLO2014_capacity, linearESN, RC3}). Unfortunately, it has been recently shown that optimizing memory capacity does not necessarily lead to higher prediction performance (see \cite{marzen:capacity}). Moreover, the recently proved universal approximation properties of RC that we just brought up do not guarantee that a given universal reservoir system will exhibit small {\bfi generalization errors}. In other words, they guarantee the availability of RC architectures that exhibit arbitrarily small training errors but give no control on their generalization power. 

Following the standard  learning theoretical approach to measure the generalization power would lead to consider the difference between the training error (empirical risk) and the testing error (statistical risk or generalization error) and aim at controlling it uniformly over a given class of reservoir systems by using a measure of the class complexity. A number of complexity measures for function classes have been proposed over the years. We can name the Vapnik-Chervonenkis (VC) dimension (\cite{Vapnik1998}),  Rademacher and Gaussian complexities (\cite{Bartlett2003}),   uniform stability (\cite{Mukherjee2002, Bousquet2002, Poggio2004}),  and their modifications. In particular, a vast literature is available also on complexities  and probably approximately correct (PAC) bounds for multilayer neural networks or recurrent neural networks (see for instance \cite*{Haussler1992}, \cite*{Koiran1998}, \cite*{sontag:VC}, \cite*{AB1999}, \cite{Bartlett2017}, and \cite{Zhang2018} and references therein). 

However, it is important to emphasize that using this traditional learning theoretical approach  to formulate generalization error bounds in the case of reservoir systems  is challenging and requires non-trivial extensions of this circle of ideas. Indeed, since a key motivation for our analysis are time series applications, the standard i.i.d.\ assumption on inputs and outputs cannot be invoked anymore, which makes a number of conventional  tools  unsuitable. Here the signals to be treated are stochastic processes with a particular dependence structure, which introduces mathematical difficulties that only a few works have analyzed in a learning theory context. In most of the available contributions on learning in a non-i.i.d.\ setting,  stationarity and specific mixing properties of the input are key assumptions (see, for instance, \cite{McDonald2012,Kuznetsov2017,Kuznetsov2018} and the references therein). The time series applications that we are interested motivate us, however, to part with the latter.  A common argument in this direction (\cite{Kuznetsov2018}) is that many standard time-series processes happen to be non-mixing; for example, one can easily construct AR(1)  and  ARFIMA processes which are not mixing (see \cite{Andrews1983} and \cite{Baillie1996}, respectively). On the other hand, it has been pointed out (see  \cite*{Adams2010}) that the convergence of empirical quantities to population-based ones can be arbitrarily slow for general stationary data  and one cannot hope to obtain distribution-free probability bounds as they exist for the i.i.d.\ case. Motivated by these observations, in this article we restrict to a particular type of dependent processes, namely, we focus on dependence structures created by causal Bernoulli shifts (see, for instance, \cite{Dedecker2007a, alquier:wintenberger}) and hence the error bounds that we obtain are valid for any input process with such a dependence structure. Apart from trivially incorporating the i.i.d.\ case, the Bernoulli shifts category includes the VARMA time-series class of models, financial econometric models such as various GARCH specifications \citep*{engle:arch, bollerslev:garch, engleCorrelationsBook}, and  the ARFIMA   \citep*{Beran1994} processes that allow the modeling of long memory behavior exhibited by many financial time series (for example realized variances). As we show later on, even though the bounds that we obtain depend on the weak dependence assumptions, they hold true without making precise the distributions of the input and the outputs, which are generally unknown. We hence place ourselves in a {\bfi semi-agnostic setup}.

Regarding  complexity measures, bounds for them are also customarily formulated in an i.i.d.\ setting. Recently, some authors addressed the question of constructing versions of Rademacher complexities for dependent inputs. For example, if one defines the risk in terms of conditional expectations, then the so-called sequential Rademacher complexities can be used to derive bounds (see, for instance, \cite[Proposition~15]{AlexanderRakhlinKarthikSridharan2015} and \cite{Rakhlin2014}). In this paper we pursue a more traditional approach in terms of the definition of the expected risk and hence the associated Rademacher complexity. 

The main contribution of this paper is the formulation of {\it the first explicit generalization bounds for reservoir systems such as recurrent neural networks with input data exhibiting a sequential dependence structure for the classical notion of risk defined as an expected loss}. The uniform high-probability bounds which we state in this paper depend exclusively on the weak dependence behavior of the input and target processes and a quantitative measure of the capacity (Rademacher complexity) of the set of functionals generated by the considered reservoir systems.  The finite sample guarantees provided by our generalization bounds explicitly answer practical questions concerning the bounds for the parameters within a particular reservoir family, the rates of uniform convergence, and hence the length of the training sample required to achieve a desired learning generalization quality within a given RC class. 
Finally,  when one wishes to apply  empirical risk minimization (ERM) in order to pick the reservoir functional within the hypothesis class, the asymptotic behavior of the devised  bounds guarantees the consistency of ERM for reservoir systems. 
 



\medskip

The paper is organized as follows:
\begin{itemize}
\item Section~\ref{Preliminaries} describes the notation used in the paper. We introduce reservoir systems, the associated filters and functionals, as well as a detailed description of various families of popular reservoir systems in the literature. 
\item Section~\ref{Learning problem for reservoir computing systems} sets up the statistical learning problem for reservoir computing systems. It starts by introducing a general framework for the learning procedure, necessary risk definitions, and criteria of risk-consistency for the particular case of empirical risk minimization (ERM). The second subsection constitutes the main part of Section~\ref{Learning problem for reservoir computing systems} and we present in it the setting in which reservoir systems are analyzed in the rest of the paper. First, three alternative core assumptions regarding the weak dependence structure of the input and target processes are analyzed and illustrated with examples. Second, the hypothesis classes of reservoir maps and functionals are constructed under a set of mild assumptions. Finally, the strategy for the derivation of risk bounds for a given choice of loss function is discussed.
\item Section~\ref{Main Results} contains the main results of the paper. Proofs and auxiliary results are postponed to the appendices. Section~\ref{Main Results} is structured as follows. In the first subsection the expected value of the worst-case difference between the generalization and training errors over the class   of reservoir functionals is shown to be bounded by its Rademacher complexity and terms related to the weak dependence structure of the input and target processes. The obtained rates differ depending on the various assumptions invoked. The second subsection provides explicit expressions for upper bounds of the Rademacher complexities associated to the families of reservoir systems presented in Section~\ref{Preliminaries}. The third subsection concludes with the formulation of high-probability finite-sample generalization bounds for reservoir systems. We emphasize that previously such bounds were not available in the literature. The asymptotic behavior of these bounds shows in passing the weak risk-consistency of the ERM procedure for reservoir systems. The last subsection contains a result that provides high-probability bounds for families of reservoir systems whose reservoir maps have been generated randomly. This result is a theoretical justification of the well-known good empirical properties of this standard {\it modus operandi} in reservoir computing.
\end{itemize}

\section{Preliminaries}
\label{Preliminaries}
We start by specifying our notation and introducing reservoir computing systems for which in the following sections we will set up a statistical learning strategy. In the last subsection we provide a list of particular families of reservoir systems which are popular in the RC literature and in applications.
\subsection{Notation}
\label{Notation}
We use the symbol $\mathbb{N}$ (respectively, $\mathbb{N}^+$) to denote  the set of natural numbers with the zero element included  (respectively, excluded). $\mathbb{Z}$ denotes the set of all integers, and $\mathbb{Z}_-$ (respectively, $\mathbb{Z}_+$) stands for the set of the negative (respectively, positive) integers with the zero element included. Let $d,n,m \in \mathbb{N}^+$. Given an element $\mathbf{x} \in \mathbb{R}^n$, we denote by $\mathbb{R}[ \mathbf{x}]$ the real-valued multivariate polynomials on $ \mathbf{x}$ with real coefficients. Given a vector $\mathbf{v}\in \mathbb{R}^n$, the symbol $\| \mathbf{v}\|_2$ stands for its Euclidean norm. We denote by $\mathbb{M}_{m,  n }$ the space of real $m\times n$ matrices. When $n=m$, we use the symbol $\mathbb{M}_n $ to refer to the space of square matrices of order 
$n$. For any $A\in \mathbb{M}_{m,n}$, $\vertiii{A}_2$ denotes its matrix norm induced by the Euclidean norms in $\mathbb{R}^m$ and $\mathbb{R}^n$, which satisfies that $\vertiii{A}_2 =\sigma_{max}(A)$ with $\sigma_{max}(A)$ the largest singular value of $A$. $\vertiii{A}_2$ is sometimes referred to as the spectral norm of $A$ (see \cite{horn:matrix:analysis}).

When working in a deterministic setup, the inputs and outputs will be modeled using semi-infinite sequences ${\bf z}  \in (\mathbb{R}^d)^{\mathbb{Z}_-}$ and ${\bf y} \in (\mathbb{R}^m)^{\mathbb{Z}_-} $, respectively. We shall restrict very frequently to input sequences that exhibit additional convergence properties that are imposed with the help of weighting sequences. A weighting sequence $w$ is a strictly decreasing sequence with zero limit  $w : \mathbb{N} \longrightarrow (0,1] $ such that $w _0=1 $. We define the {\bf weighted $1$-norm} or the $(1,w)${\bf  -norm} 
$\left\|\cdot \right\| _{1,w} $ in the space of semi-infinite sequences $(\mathbb{R}^d)^{\mathbb{Z}_{-}} $ as 
\begin{equation*}
\label{pw norm}
\left\|{\bf z} \right\|_{1,w}:=  \sum_{t \in \mathbb{Z}_{-}} \left\|{\bf z}_t\right\|_2  w _{-t},\quad \mbox{for any} \quad {\bf z}\in (\mathbb{R}^d)^{\mathbb{Z}_{-}}.
\end{equation*}
We then set 
\begin{equation}
\label{space l1definition}
\ell_{-}^{1,w}(\mathbb{R}^d) :=  \left\{{\bf z} \in (\mathbb{R}^d)^{\mathbb{Z}_{-}}\mid \left\|{\bf z}\right\|_{1, w}< \infty\right\}.
\end{equation}
This weighted sequence space can be characterized as a Bochner space \citep{AnalysisBanachSpaces:vol1} by noticing that
\begin{equation}
\label{bochner for lpw}
\left(\ell_{-} ^{1,w}(\mathbb{R}^d), \left\|\cdot \right\|_{1,w}\right)=\left(L ^1(\mathbb{Z}_{-}, \mathcal{P}(\mathbb{Z}_{-}), \mu_w; \mathbb{R}^d), \left\|\cdot \right\|_{L ^1(\mathbb{Z}_{-};\mathbb{R}^d)}\right),
\end{equation}
where $\mathcal{P}(\mathbb{Z}_{-}) $ stands for the power set of $\mathbb{Z}_{-} $ and $\mu_w$ is the measure defined on $(\mathbb{Z}_{-},\mathcal{P}(\mathbb{Z}_{-})) $ generated by the assignments $\mu(\{t\}):=w_{-t} $, for any $t \in \mathbb{Z}_{-}$. This equality guarantees that the pair $\left(\ell_{-} ^{1,w}(\mathbb{R}^d), \left\|\cdot \right\|_{1,w}\right) $ forms a separable Banach space.  

Let now $\tau \in \mathbb{Z}_-$ and define the {\bf time delay operator} $T_{-\tau}: (\mathbb{R}^d)^{\mathbb{Z}_-} \longrightarrow (\mathbb{R}^d)^{\mathbb{Z}_-}$ by $T_{-\tau}({\bf z} ) _t:={\bf z} _{t+ \tau}$, for any $t \in \mathbb{Z}_-$. We call $T_{-\tau}({\bf z}) \in (\mathbb{R}^d)^{\mathbb{Z}_-}$ the $\tau${\bf -shifted 
version} of the  semi-infinite sequence ${\bf z}  \in (\mathbb{R}^d)^{\mathbb{Z}_-}$. It can be proved \citep*{RC9} that $T_{-\tau} $ restricts to a continuous linear operator in $\left(\ell_{-} ^{1,w}(\mathbb{R}^d), \left\|\cdot \right\|_{1,w}\right) $ and that the operator  norm of the resulting maps $T_{-\tau}: \ell_{-} ^{1,w}(\mathbb{R}^d) \longrightarrow \ell_{-} ^{1,w}(\mathbb{R}^d)$ satisfies
\begin{equation}
\label{operator norms tt}
\vertiii{ T _{1}} _{1,w}= L _w \mbox{ and } \quad \vertiii{ T _{-\tau}} _{1,w}\leq L _w^{- \tau},  \mbox{ for all $\tau \in \mathbb{Z}_{-} $},
\end{equation}
provided that the condition $L _w < \infty $ holds, where $1 \leq L _w \leq \infty$ is the {\bf inverse decay ratio} of $w$ defined as
\begin{equation*}
\label{inverse decay ratio}
L _w:=\sup_{t \in \mathbb{N}}\left\{\frac{w _t}{w _{t+1}}\right\}.
\end{equation*} 
We define for future reference the {\bfi  decay ratio} $D_w $ of $w$ as
\begin{equation}
\label{definition decay ratio}
 D_{w}:=\sup_{t \in \mathbb{N}} \left\{\frac{w_{t+1}}{w_t}\right\}\leq1.
\end{equation}
The other weighted norm of much use in the context of reservoir computing is the $(\infty,w)${\bf  -norm}, defined by 
\begin{equation*}
\| {\bf z} \| _{\infty,w}:= \sup_{t \in \mathbb{Z}_-}\{\| {\bf z}_t\|_2 w_{-t}\},\quad \mbox{for any} \quad {\bf z}\in ({\mathbb{R}}^d)^{\mathbb{Z}_{-}}.
\end{equation*}
We then set 
$
\ell_{-}^{\infty,w}(\mathbb{R}^d):=  \left\{{\bf z} \in (\mathbb{R}^d)^{\mathbb{Z}_-} \mid \left\|{\bf z}\right\|_{\infty, w}< \infty\right\}$. It can also be showed that the pair $(\ell_{-}^{\infty,w}(\mathbb{R}^d), \left\|\cdot \right\| _{\infty,w})$ is  a Banach space \citep*{RC7}. Additionally, the time delay operators also restrict to $\ell_{-}^{\infty,w}(\mathbb{R}^d) $ and the corresponding operator norms  satisfy \eqref{operator norms tt}.


\subsection{Filters and reservoir computing systems}
\label{Filters and reservoir computing systems}
The objects at the core of this paper are input/output maps of the form $U: \left({\mathbb{R}}^d\right)^{\mathbb{Z}} \longrightarrow\left({\mathbb{R}}^m\right)^{\mathbb{Z}} $. We will restrict to the case in which the maps $U$ are {\bf causal} and {\bf time-invariant} (see \cite*{RC9} for definitions and the proofs of the facts that we now state) and hence it suffices to work with the restrictions $U: \left({\mathbb{R}}^d\right)^{\mathbb{Z}_-} \longrightarrow\left({\mathbb{R}}^m\right)^{\mathbb{Z}_-} $. Moreover, causal and time-invariant filters $U$ uniquely determine functionals of the type $H _U: \left({\mathbb{R}}^d\right)^{\mathbb{Z}_-} \longrightarrow {\mathbb{R}}^m $ by 
\begin{equation*}
\label{definition of functional}
H _U ({\bf z}):= U({\bf z}) _0, \quad \mbox{for any} \quad {\bf z} \in  \left({\mathbb{R}}^d\right)^{\mathbb{Z}_-}.
\end{equation*}
In this setup, we shall say that $U: \left({\mathbb{R}}^d\right)^{\mathbb{Z}_-} \longrightarrow\left({\mathbb{R}}^m\right)^{\mathbb{Z}_-} $ is a {\bf filter} and that $H _U: \left({\mathbb{R}}^d\right)^{\mathbb{Z}_-} \longrightarrow {\mathbb{R}}^m $ is its corresponding {\bf functional}. Conversely, given a  functional $H: \left({\mathbb{R}}^d\right)^{\mathbb{Z}_-} \longrightarrow {\mathbb{R}}^m $, there is a unique causal and time-invariant filter $U _H :\left({\mathbb{R}}^d\right)^{\mathbb{Z}_-} \longrightarrow\left({\mathbb{R}}^m\right)^{\mathbb{Z}_-} $ determined by it as
\begin{equation}
\label{definition of filter associated}
U _H ({\bf z})_t:= H(T_{-t}({\bf z})), \quad \mbox{for any} \quad {\bf z} \in  \left({\mathbb{R}}^d\right)^{\mathbb{Z}_-},\, t \in \mathbb{Z}_{-}.
\end{equation}

Suppose that given a weighting sequence $w$, the filter $U$ restricts to a map between weighted $(\infty, w)$-spaces, that is, $U: \ell_{-}^{\infty,w}(\mathbb{R}^d) \longrightarrow\ell_{-}^{\infty,w}(\mathbb{R}^m) $ and that, additionally, $U$ is continuous with respect to the norm topology in those spaces. In that case we say that $U$ has the {\bf fading memory property (FMP)} with respect to $w$. 

We shall provide an answer to the supervised learning of filters by estimating approximants built as {\bf reservoir filters}. Reservoir filters are obtained out of a {\bf reservoir system}, that is, a state-space system made out of two recurrent equations of the form:
\begin{equation} 
\label{eq:RCSystemDet}
\left\{
\begin{array}{rcl}
\mathbf{x}_t &=&F(\mathbf{x}_{t-1}, {\bf z}_t), \\
{\bf y}_t&= & h (\mathbf{x} _t),
\end{array}
\right.
\end{equation}
for all $t \in \mathbb{Z}_- $ and  where $F \colon D_N\times D_d\longrightarrow  D_N$ and $h: D_N \longrightarrow {\mathbb{R}}^m $ are maps, $D_d \subset \mathbb{R}^d$, $D_N \subset \mathbb{R}^N$. The sequences  ${\bf z} \in (D_d)^{\mathbb{Z}_-}$ and ${\bf y} \in (\mathbb{R}^m)^{\mathbb{Z}_-}$ stand for the {\bfi  input} and the {\bfi  output (target)} of the system, respectively, and $\mathbf{x} \in  (D_N)^{\mathbb{Z}_-}$ are the associated {\bfi  reservoir states}.

A reservoir system determines a filter when the first equation in \eqref{eq:RCSystemDet} satisfies the so-called {\bf echo state property (ESP}), that is, when for any  ${\bf z} \in (D_d)^{\mathbb{Z}_-}$ there exists a unique $\mathbf{x} \in (D_N)^{\mathbb{Z}_-}$ such that \eqref{eq:RCSystemDet} holds. In that case, we  talk about the reservoir filter $U ^F_h: (D_d)^{\mathbb{Z}_-}\longrightarrow  ({\mathbb{R}}^m)^{\mathbb{Z}_-} $ associated to the reservoir system \eqref{eq:RCSystemDet} that is defined by: 
\begin{equation*}
U ^F_h:= h \circ U ^F, \quad \mbox{where} \quad U  ^F({\bf z}):= \mathbf{x},
\end{equation*}
with ${\bf z} \in (D_d)^{\mathbb{Z}_-} $  and $\mathbf{x} \in (D_N)^{\mathbb{Z}_-} $ linked by the first equation in \eqref{eq:RCSystemDet} via the ESP.
It is easy to show that reservoir filters are automatically causal and time-invariant (see \cite[Proposition 2.1]{RC7}) and hence determine a reservoir functional $H ^F_h: (D_d)^{\mathbb{Z}_-}\longrightarrow  {\mathbb{R}}^m$.

As the following Proposition shows, a sufficient condition guaranteeing that the echo state property holds is that $D_N$ is a closed ball and that the map $F$ is continuous and a contraction in the first argument. 

\begin{proposition} 
\label{prop:contractionEchoState} 
Let $S>0$, $\overline{B_S}=\{{\bf x} \in \mathbb{R}^N \colon \|{\bf x}\|_2\leq S \}$ and suppose that $F \colon \overline{B_S} \times D_d \to \overline{B_S}$ is continuous. Assume that $F$ is a contraction in the first argument, that is, there exists $0<r<1$  such that for all ${\bf x}_1,{\bf x}_2 \in \overline{B_S}$, ${\bf z} \in D_d$ it holds that
\begin{equation}\label{eq:Fcontractive} \|F({\bf x}_1,{\bf z})- F({\bf x}_2,{\bf z}) \|_2 \leq r \|{\bf x}_1-{\bf x}_2\|_2. \end{equation}
Then the system \eqref{eq:RCSystemDet} has the echo state property and hence its first equation determines a unique causal and time-invariant filter $U  ^F: (D_d)^{\mathbb{Z}_-}\longrightarrow  (\overline{B_S})^{\mathbb{Z}_-} $ as well as a functional $H ^F\colon (D_d)^{\mathbb{Z}_-} \to \overline{B_S}$ that are continuous (where both $(D_d)^{\mathbb{Z}_-}$ and $(\overline{B_S})^{\mathbb{Z}_-}$ are equipped with the product topologies). 

\end{proposition}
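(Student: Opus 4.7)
The plan is to build the unique state sequence via a truncation argument and then to extract continuity from the uniform contraction. I split the work into three pieces: uniqueness, existence, and continuity; causality and time-invariance then follow from the cited result in \cite{RC7}.

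For \textbf{uniqueness}, suppose $\mathbf{x},\mathbf{y}\in(\overline{B_S})^{\mathbb{Z}_-}$ both solve $\mathbf{x}_t=F(\mathbf{x}_{t-1},\mathbf{z}_t)$ for the same input $\mathbf{z}\in(D_d)^{\mathbb{Z}_-}$. The contraction hypothesis \eqref{eq:Fcontractive} gives $\|\mathbf{x}_t-\mathbf{y}_t\|_2\le r\|\mathbf{x}_{t-1}-\mathbf{y}_{t-1}\|_2$, and iterating $n$ times together with $\mathbf{x}_{t-n},\mathbf{y}_{t-n}\in\overline{B_S}$ yields $\|\mathbf{x}_t-\mathbf{y}_t\|_2\le 2Sr^n$. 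Since $r<1$, letting $n\to\infty$ forces $\mathbf{x}=\mathbf{y}$.

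For \textbf{existence}, fix an arbitrary reference point $\mathbf{x}^\ast\in\overline{B_S}$ and, for each $n\in\mathbb{N}$, define $\mathbf{x}^{(n)}\in(\overline{B_S})^{\mathbb{Z}_-}$ by $\mathbf{x}^{(n)}_t=\mathbf{x}^\ast$ for $t\le -n-1$ and $\mathbf{x}^{(n)}_t=F(\mathbf{x}^{(n)}_{t-1},\mathbf{z}_t)$ for $t\ge -n$. The same contraction estimate shows that, for $m<n$ and $t\ge -m$, $\|\mathbf{x}^{(n)}_t-\mathbf{x}^{(m)}_t\|_2\le 2S\,r^{t+m+1}$, so $(\mathbf{x}^{(n)}_t)_{n\in\mathbb{N}}$ is Cauchy in $\overline{B_S}$ and converges to some $\mathbf{x}_t\in\overline{B_S}$ (closedness of the ball). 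Passing to the limit $n\to\infty$ in $\mathbf{x}^{(n)}_t=F(\mathbf{x}^{(n)}_{t-1},\mathbf{z}_t)$ using continuity of $F$ in both arguments shows that $\mathbf{x}$ satisfies the reservoir recursion, which establishes the ESP and defines $U^F$ and $H^F$.

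For \textbf{continuity}, because $U^F$ is time-invariant it suffices to prove continuity of $H^F({\mathbf z})=U^F({\mathbf z})_0$ with respect to the product topology on $(D_d)^{\mathbb{Z}_-}$. Let $\mathbf{z}^{(n)}\to\mathbf{z}$ pointwise, set $\mathbf{x}^{(n)}=U^F(\mathbf{z}^{(n)})$, $\mathbf{x}=U^F(\mathbf{z})$, and for each $K\in\mathbb{N}$ introduce the auxiliary sequence $\mathbf{y}^{(n,K)}$ obtained by starting from $\mathbf{y}^{(n,K)}_{-K-1}:=\mathbf{x}_{-K-1}$ and iterating forward with input $\mathbf{z}^{(n)}$. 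Iterating the contraction gives $\|\mathbf{x}^{(n)}_0-\mathbf{y}^{(n,K)}_0\|_2\le 2S\,r^{K+1}$, while $\mathbf{y}^{(n,K)}_0$ is obtained from $\mathbf{x}_{-K-1}$ by $K+1$ applications of the continuous map $F$ to coordinates $\mathbf{z}^{(n)}_{-K},\ldots,\mathbf{z}^{(n)}_0$, so $\mathbf{y}^{(n,K)}_0\to\mathbf{x}_0$ as $n\to\infty$ for each fixed $K$. Given $\varepsilon>0$, choose $K$ with $2Sr^{K+1}<\varepsilon/2$ and then $n$ large enough that $\|\mathbf{y}^{(n,K)}_0-\mathbf{x}_0\|_2<\varepsilon/2$; the triangle inequality concludes. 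The continuity of $U^F$ with respect to the product topology on $(\overline{B_S})^{\mathbb{Z}_-}$ follows by applying this coordinate-wise, and causality/time-invariance are automatic by \cite[Proposition 2.1]{RC7}. The main obstacle is the third step: one cannot directly compare $\mathbf{x}^{(n)}$ to $\mathbf{x}$ since they solve recursions with different inputs arbitrarily far in the past, and the auxiliary sequence $\mathbf{y}^{(n,K)}$ decouples these two sources of discrepancy so that the contraction and continuity of $F$ can each be applied cleanly.
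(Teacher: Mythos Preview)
Your proof is correct, but it follows a genuinely different route from the paper's own argument. The paper lifts the problem to a single fixed-point problem on the sequence space: it defines $\mathcal{F}:(\overline{B_S})^{\mathbb{Z}_-}\times(D_d)^{\mathbb{Z}_-}\to(\overline{B_S})^{\mathbb{Z}_-}$ by $(\mathcal{F}(\mathbf{x},\mathbf{z}))_t=F(\mathbf{x}_{t-1},\mathbf{z}_t)$, equips $(\overline{B_S})^{\mathbb{Z}_-}$ with a weighted norm $\|\cdot\|_{\infty,w}$ chosen so that $rL_w<1$, checks that $\mathcal{F}(\cdot,\mathbf{z})$ is a contraction with constant $rL_w$, observes that the product topology on the compact set $(\overline{B_S})^{\mathbb{Z}_-}$ coincides with this norm topology (a result from \cite{RC7}), and then invokes a parametrized Banach fixed-point theorem (Sternberg) to obtain existence, uniqueness, and continuous dependence on $\mathbf{z}$ in one stroke.

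Your approach is more elementary and self-contained: you iterate the scalar contraction estimate directly at individual time steps, build the solution by a truncation/Cauchy argument in $\overline{B_S}$ itself rather than in a function space, and handle continuity via the auxiliary sequence $\mathbf{y}^{(n,K)}$ that separates the ``far past'' error (controlled by $r^{K+1}$) from the ``recent input'' error (controlled by finitely many applications of the continuous $F$). This avoids the weighted-norm machinery, the identification of topologies from \cite{RC7}, and the external fixed-point theorem. The paper's route is more conceptual and packages the three conclusions together; yours is more hands-on and exposes the mechanism (geometric decay of memory) more explicitly. Both rely implicitly on the metrizability of $(D_d)^{\mathbb{Z}_-}$---you through sequential continuity, the paper through the hypotheses of Sternberg's theorem.
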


\begin{remark}
\label{rmk:boundedImage} 
\normalfont If we have a continuous function $F \colon \mathbb{R}^N \times D_d \to \mathbb{R}^N$ which satisfies \eqref{eq:Fcontractive} for all ${\bf x}_1,{\bf x}_2 \in \mathbb{R}^N$ and $\|F({\bf 0},{\bf z})\|_2 \leq c$ for all  ${\bf z} \in D_d$ and a certain $c >0 $, then choosing any $S\geq c/(1-r)$ it is easy to see that for all ${\bf u} \in \overline{B_S}$ 
\[\begin{aligned} \|F({\bf u},{\bf z})\|_2 & \leq \|F({\bf u},{\bf z})-F({\bf 0},{\bf z})\|_2+\|F({\bf 0},{\bf z})\|_2  \leq r \|{\bf u}\|_2 + c \leq r S +c \leq S.
\end{aligned} \]  
Thus, $F(\overline{B_S} \times D_d) \subset \overline{B_S}$ and so the restriction of $F$ to $\overline{B_S} \times D_d$ satisfies the assumptions of Proposition~\ref{prop:contractionEchoState}.
\end{remark}

\subsection{Families of reservoir systems}
\label{Families of reservoir systems}

The following paragraphs introduce various families of reservoir systems that appear in applications. We shall later on explicitly construct for these specific families the risk bounds contained in the main results of the paper.

\subsubsection*{Reservoir systems with linear reservoir maps (LRC)} 
In this case one associates to each input signal ${\bf z} \in (D_d)^{\mathbb{Z}_-}$ an output ${\bf y} \in ({\mathbb{R}}^m)^{\mathbb{Z}_-}$ via the two recurrent equations 
\begin{align} 
\label{eq:LRCstate}
 \mathbf{x}_t &= A \mathbf{x}_{t-1} + C {\bf z}_t + \bm{\zeta},\\
 \label{eq:LRCreadout}
 \mathbf{y}_t &= h(\mathbf{x}_{t}),
 \end{align}
with $t \in \mathbb{Z}_- $ and  $A \in \mathbb{M}_{N}, C \in \mathbb{M}_{N,d}, \bm{\zeta} \in \mathbb{R}^N$. Systems with linear reservoir maps of the type \eqref{eq:LRCstate} have been vastly studied in the literature in numerous contexts and under different denominations. In the RC setting, systems of the type \eqref{eq:LRCstate}-\eqref{eq:LRCreadout} with polynomial readout maps $h: D_N \longrightarrow {\mathbb{R}}^m $ have been proved in \cite{RC6} to be universal approximators in the category of fading memory filters either when presented with uniformly bounded inputs in the deterministic setup (see Corollary 3.4) or with almost surely uniformly bounded stochastic inputs (see Corollary 4.8). These boundedness hypotheses have been dropped in \cite{RC8} by considering density with respect to $L ^p $  norms, $1\leq p< \infty $, defined using the law of the input data generating process. Sufficient conditions which ensure the echo state property and the fading memory property for these systems have been  established (see   Section 3.1 in \cite{RC9}). More specifically,  consider the reservoir map $F^{A,C, \bm{\zeta}}: D_N \times D_d \longrightarrow D_N$ of the system \eqref{eq:LRCstate}-\eqref{eq:LRCreadout} given by 
\begin{equation}
\label{eq:FLRC} 
F^{A,C,\bm{\zeta}}(\mathbf{x},{\bf z})=  A\mathbf{x} + C{\bf z} + \bm{\zeta}.
\end{equation}
It is easy to see that $F^{A,C, \bm{\zeta}}$ is a contraction in the first entry whenever the matrix $A$ satisfies $\vertiii{A}_2<1$. For these systems we  consider only  the case of uniformly bounded input signals:  

\medskip
\noindent{\bf Case with uniformly bounded inputs}. Suppose now $\vertiii{A}_2<1$. If the inputs are uniformly bounded, that is, if $D_d = \overline{B_M}$ for some $M>0$ and so ${\bf z} \in K_M$ with $K_M := \left\{ {\bf z} \in (\mathbb{R}^d)^{ \mathbb{Z}_-} | \| {\bf z}_t \|_2 \leq M \ {\rm for \ all} \  t\in \mathbb{Z}_- \right\}$, then  the reservoir system \eqref{eq:LRCstate}-\eqref{eq:LRCreadout} has the echo state property and defines a unique causal and time-invariant reservoir filter $U  ^{A,C,\bm{\zeta}}: K_M\longrightarrow (D_N)^{ \mathbb{Z}_-}  $ given by $U  ^{A,C,\bm{\zeta}}({\bf z})_t:=\sum_{j=0} ^{\infty}A ^j (C {\bf z}_{t-j}+ \boldsymbol{\zeta}) $, $t  \in \mathbb{Z}_{-}  $. Here $D_N = \overline{B_{M_F}}$ with  $M_F = ({\vertiii{C}_2M + \|\bm{ \zeta} \|_2})/({1-\vertiii{A}_2})$ (see Remark 2 and part {\bf (ii)} in the first example in Section 4.1 of \cite{RC9}). In particular, the corresponding functional $H ^{A,C,\bm{\zeta}}\colon K_M \to D_N$ satisfies that $\|H ^{A,C,\bm{\zeta}}( {\bf z})\|_2\leq M_F$ for all ${\bf z} \in  K _M  $. Additionally, it can be shown that the reservoir system \eqref{eq:LRCstate}-\eqref{eq:LRCreadout} has the fading memory property with respect to any weighting sequence. 

In what follows we consider a particular subfamily of systems \eqref{eq:LRCstate}-\eqref{eq:LRCreadout}, namely {\it reservoir systems with linear reservoir and linear readout maps}, in which case  $h: D_N \longrightarrow {\mathbb{R}}^m $ is given by applying $W\in \mathbb{M}_{m,N}$.  

\subsubsection*{Echo State Networks (ESN)}
{Echo State Networks} (\cite{Matthews:thesis, Jaeger04}) are a family of reservoir systems that exhibit excellent performance in many practical applications and have been recently proved to have universal approximation properties. More specifically, \cite{RC7} proved  ESNs to be universal in the category of fading memory  filters with semi-infinite uniformly bounded inputs in a deterministic setup, and \cite{RC8} obtained universality results for ESNs in the stochastic situation with respect to $L^p$-type criteria for stochastic discrete-time semi-infinite inputs. 

An echo state network of dimension  $N \in \mathbb{N}^+$ with  reservoir matrix $A \in \mathbb{M}_{N}$, input mask $C \in \mathbb{M}_{N,d}$, input shift $\bm{\zeta} \in \mathbb{R}^N$, and readout matrix $W\in \mathbb{M}_{m,N}$ is the system 
\begin{align} 
\label{eq:ESNstate}
 \mathbf{x}_t &= \bm{\sigma}( A \mathbf{x}_{t-1} + C {\bf z}_t + \bm{\zeta}),\\
 \label{eq:ESNreadout}
  \mathbf{y}_t &= W  \mathbf{x}_t,
 \end{align}
  which 
 for each $t \in \mathbb{Z}_-$ transforms the input ${\bf z}_t \in D_d \subset \mathbb{R}^d$ into the reservoir state ${\bf x}_t \in D_N\subset \mathbb{R}^N$ and, consequently, into the corresponding output ${\bf y}_t \in \mathbb{R}^m$. 
The reservoir map $F^{\sigma,A,C, \bm{\zeta}}: D_N \times D_d \longrightarrow D_N$ of the system \eqref{eq:ESNstate}-\eqref{eq:ESNreadout} is given by 
\begin{equation}
\label{eq:FESN} 
F^{\sigma,A,C, \bm{\zeta}}(\mathbf{x},{\bf z})=  \bm{\sigma}(A\mathbf{x} + C{\bf z} + \bm{\zeta}), 
\end{equation}
where  $\bm{\sigma} \colon \mathbb{R}^N \to \mathbb{R}^N$ is defined by the componentwise application of a given {\bfi  activation function} $\sigma \colon \mathbb{R} \to \mathbb{R}$. Throughout, we assume that $\sigma$ is Lipschitz-continuous with Lipschitz-constant $L_\sigma$. It is straightforward to verify that $F^{\sigma,A,C, \bm{\zeta}}$ is a contraction in the first entry whenever $L_\sigma\vertiii{A}_2<1$. The sufficient conditions which ensure the echo state and the fading memory properties of \eqref{eq:ESNstate}-\eqref{eq:ESNreadout} have been also carefully studied in the literature (see  \cite{Buehner:ESN}, \cite{Yildiz2012}, and \cite{RC7} for details) and depend both on the type of the activation function $\sigma \colon \mathbb{R} \to \mathbb{R}$ and on the type of the input presented to the network. We consider the following two cases:
 
\medskip
\noindent{\bf Case with arbitrary input signals and bounded activation function}.
 In this situation, $D_d$ is arbitrary and so generic input signals ${\bf z} \in (D_d)^{ \mathbb{Z}_-}$ are considered, but we assume that the range of the activation function $\sigma$ is bounded and contained in $[\sigma_{min},\sigma_{max}]$ with $\sigma_{min}<\sigma_{max}\in \mathbb{R}$. Then, by Proposition~\ref{prop:contractionEchoState}, the condition $L_\sigma\vertiii{A}_2<1$ suffices to ensure that the system \eqref{eq:ESNstate}-\eqref{eq:ESNreadout} has the echo state property and hence defines a unique causal and time-invariant filter $U^{\sigma,A,C, \bm{\zeta}}: (D_d)^{\mathbb{Z}_-}\longrightarrow  (D_N)^{\mathbb{Z}_-} $ as well as a functional $H^{\sigma,A,C, \bm{\zeta}}\colon (D_d)^{\mathbb{Z}_-} \to D_N$ that are additionally continuous with respect to the product topologies on the spaces $(D_d)^{\mathbb{Z}_-}$ and $(D_N)^{\mathbb{Z}_-}$. Here $D_N = \overline{B_{M_F}}$ with  $M_F = \sqrt{N} \max(| \sigma_{min}|, | \sigma_{max}|)$ and in particular, we obviously have that $\|H ^{\sigma, A,C,\bm{\zeta}}( {\bf z})\|_2\leq M_F$.
 
\medskip
 \noindent{\bf Case with uniformly bounded inputs}. Suppose that $L_\sigma\vertiii{A}_2<1$ and $D_d = \overline{B_{M}}$ for some $M>0$ and so the inputs are uniformly bounded, that is, ${\bf z} \in K_M$ with $K_M := \left\{ {\bf z} \in (\mathbb{R}^d)^{ \mathbb{Z}_-} | \| {\bf z}_t \|_2 \leq M \ {\rm for \ all} \  t\in \mathbb{Z}_- \right\}$. In this case the reservoir system \eqref{eq:ESNstate}-\eqref{eq:ESNreadout} has the echo state property and defines a unique causal and time-invariant reservoir filter $U  ^{\sigma, A,C,\bm{\zeta}}: K_M\longrightarrow  (D_N)^{ \mathbb{Z}_-} $ as well as a functional $H ^{\sigma, A,C,\bm{\zeta}}\colon K_M \to D_N$, where $D_N = \overline{B_{M_F^1}}$ with $M_F^1 := [L_\sigma({\vertiii{C}_2 M + \|\bm{ \zeta} \|_2})+\sqrt{N}\sigma(0)]/({1-L_\sigma\vertiii{A}_2})$ (see Proposition~\ref{prop:contractionEchoState} and Remark~\ref{rmk:boundedImage} or part {\bf (ii)} in the first example in Section 4.1 of \cite{RC9}). Additionally, the fading memory property holds with respect to any weighting sequence. Note that these results hold true even though the range of the activation function is not assumed to be bounded and $\|H ^{\sigma,A,C,\bm{\zeta}}( {\bf z})\|_2\leq M_F $ with $M_F=M_F^1$ when the activation function $\sigma$ has an unbounded range and with $M_F = \min(M_F^1,\sqrt{N} \max \left(| \sigma_{min}|, | \sigma_{max}| \right))$, otherwise.

 \subsubsection*{State-Affine Systems (SAS)}
The so-called homogeneous state-affine systems have been first introduced in the systems theory literature and were shown to exhibit universality properties in the discrete-time setting for compact times (see \cite{FliessNormand1980}, \cite{Sontag1979}, \cite{sontag:polynomial:1979}). A non-homogeneous version of these systems was introduced in \cite{RC6}, where they were proved to be universal approximants in the category of fading memory filters for the non-compact discrete-time deterministic setup. Trigonometric state-affine systems were later on studied in a stochastic setup in \cite{RC8}, where their universality for stochastic discrete-time semi-infinite inputs  with respect to $L^p$-criteria was established. State-affine systems serve as an excellent example of reservoir systems with easy-to-train linear readouts and even though little is known about their empirical performance in learning tasks, we find it important to provide explicit risk bounds for this family. In the rest of the paper  we reserve the name {State-Affine Systems (SAS)} for the non-homogeneous version if not stated otherwise and leave the trigonometric family for future work.

The following notation for multivariate polynomials will be used: for any multi-index $\bm{\alpha} \in \mathbb{N}^d$ and any ${\bf z}\in \mathbb{R}^d$, we write ${\bf z}^{\bm{\alpha}}:=z_1^{\alpha_1} \cdots z_d^{\alpha_d}$. Furthermore, the space $ \mathbb{M}_{N,M}[{\bf z}]$, $N,M \in \mathbb{N}^+ $, of polynomials in the variable ${\bf z} \in \mathbb{R} ^d$ with matrix coefficients in $\mathbb{M}_{N,M}$ is the set of elements $p$ of the form
\[ p({\bf z}) = \sum_{\bm{\alpha} \in V _p} {\bf z}^{\bm{\alpha}} A_{\bm{\alpha}}, \quad {\bf z} \in \mathbb{R} ^d, \]
where $V _p \subset \mathbb{N} ^d $ is a finite subset and the elements
$A_{\bm{\alpha}} \in \mathbb{M}_{N,M}$ are matrix coefficients. The degree ${\rm deg}(p) $ of the polynomial $p$ is defined as 
\begin{equation*}
{\rm deg}(p)=\max_{\boldsymbol{\alpha} \in V _p} \left\{\left\|\boldsymbol{\alpha}\right\|_1\right\}, \, \text{where} \, \left\|\boldsymbol{\alpha}\right\|_1:=\alpha _1+ \cdots+ \alpha _d.
\end{equation*}
We also define the following norm on $\mathbb{M}_{N,M}[{\bf z}]$:
\begin{equation}
\label{norm for later sas}
\vertiii{p}  = \max_{\bm{\alpha} \in V _p} \vertiii{A_{\bm{\alpha}}}_2. 
\end{equation}
The non-homogeneous state-affine system (SAS) of dimension $N\in \mathbb{N}^+$ associated to two given polynomials $p \in \mathbb{M}_{N,N}[{\bf z}]$ and $q \in \mathbb{M}_{N,1}[{\bf z}]$ with matrix and vector coefficients, respectively, is the reservoir system determined by the following state-space
transformation of each input signal ${\bf z} \in (D_d)^{\mathbb{Z}_-}$ into the output signal ${\bf y} \in ({\mathbb{R}}^m)^{\mathbb{Z}_-}$,
\begin{align} 
\label{eq:SASstate}
 \mathbf{x}_t &= p({\bf z}_t)\mathbf{x}_{t-1} + q({\bf z}_t),\\
 \label{eq:SASreadout}
  \mathbf{y}_t &= W  \mathbf{x}_t,
 \end{align}
 for $t \in \mathbb{Z}_-$, with $W\in \mathbb{M}_{m,N}$ the readout map. 
The reservoir map $F^{p,q}: D_N \times D_d \longrightarrow D_N$ of the system \eqref{eq:SASstate}-\eqref{eq:SASreadout} is given by 
\begin{equation}
\label{eq:FSAS} 
F^{p,q}(\mathbf{x},{\bf z})=  p({\bf z})\mathbf{x} + q({\bf z}). 
\end{equation}
Additionally, we define 
\begin{align*}
M_p&:=\sup_{{\bf z}\in D_d} \vertiii{p( {\bf z})}_2,\\
M_q&:=\sup_{{\bf z}\in D_d} \vertiii{q( {\bf z})}_2.
\end{align*}
First, we notice that for regular SAS defined by nontrivial polynomials, the set $D_d$ needs to be bounded in order for $M_p$ and $M_q$ to be finite. It is easy to see that $F$ in \eqref{eq:FSAS} is a contraction in the first entry with constant $M_p$ whenever $M_p < 1$, which is a condition that we will assume holds true together with $M_q<\infty$ in the next paragraph.

\medskip
 \noindent{\bf Case with uniformly bounded input signals}. Let $D_d = \overline{B_{M}}$ for some $M>0$ so that we consider inputs ${\bf z} \in K_M$ with $K_M := \left\{ {\bf z} \in (\mathbb{R} ^d)^{ \mathbb{Z}_-} | \| {\bf z}_t \|_2 \leq M \ {\rm for \ all} \  t\in \mathbb{Z}_- \right\}$. In that case the system \eqref{eq:SASstate}-\eqref{eq:SASreadout} has the echo state property and determines (see Proposition~\ref{prop:contractionEchoState} and Remark~\ref{rmk:boundedImage} or part {\bf (ii)} in the third example in Section 4.1 of \cite{RC9}) a unique reservoir filter $U  ^{p,q}: K _M \longrightarrow  (D_N)^{ \mathbb{Z}_-} $ as well as a functional $H ^{p,q}\colon K _M \to D_N$, where $D_N = \overline{B_{M_F}}$ with $M_F = M_q/(1-M_p)$. In addition, the fading memory property holds with respect to any weighting sequence. Moreover, in this case the filter can be explicitly written as $(U  ^{p,q}({\bf z}))_t = \sum^{\infty}_{j = 0} (\prod_{k=0}^{j-1} p( {\bf z}_{t-k}))q({\bf z}_{t-j})$, $t \in \mathbb{Z}_-$,  and $\|H ^{p,q}( {\bf z})\|_2\leq M_F$, for all ${\bf z} \in K _M$.

\section{The learning problem for reservoir computing systems}
\label{Learning problem for reservoir computing systems}
In this paper we work in the setting of supervised learning in a probabilistic framework and our goal is to provide performance estimates for reservoir systems from the statistical learning theory perspective. With that in mind, we start  this section by stating the general learning problem for  systems with stochastic input and target signals. We then introduce three alternative assumptions on the weak dependence of input and output processes which will be assumed later on in the paper and provide examples of important time series models that satisfy the conditions under consideration.  We define the statistical risk and its empirical analogs for reservoir functionals and motivate the need for generalization error bounds. More specifically, on the one hand the in-class generalization error  (risk) can be used to bound the estimation error of a class. On the other hand, whenever the learner follows the empirical risk minimization (ERM) strategy to select the reservoir computing system within the RC hypothesis class based on minimization of the empirical (training) error,  generalization error bounds can be used to prove the weak universal risk-consistency of ERM for reservoir systems. If the inputs are  i.i.d. (which is a particular case of our setup), this definition is essentially equivalent to saying that  the hypothesis class of reservoir functionals is a (weak) uniform Glivenko-Cantelli class (see for example \cite{Mukherjee2002}).

\subsection{General setup of the learning procedure} 
\label{General setup of empirical risk minimization procedure}

\paragraph{Input and target stochastic processes.} We fix a probability space $(\Omega,\mathcal{A},\mathbb{P})$ 
on which all random variables are defined. The triple consists of the sample space $\Omega$, which is the set of possible outcomes, the $\sigma$-algebra $\mathcal{A}$ (a set of subsets of $\Omega$ (events)),  and a probability measure $\mathbb{P}:\mathcal{A}\longrightarrow [0,1]$. The input and target signals are modeled by discrete-time stochastic processes ${\bf Z} = ({\bf Z}_t)_{t \in \mathbb{Z}_-}$ and ${\bf Y} = ({\bf Y}_t)_{t \in \mathbb{Z}_-}$ taking values in $D_d \subset \mathbb{R}^d$ and $\mathbb{R}^m$, respectively. Moreover, we write ${\bf Z}(\omega)=({\bf Z}_t( \omega))_{t \in \mathbb{Z}_-}$ and ${\bf Y}(\omega)=({\bf Y}_t( \omega))_{t \in \mathbb{Z}_-}$ for each outcome $\omega \in \Omega$ to denote the realizations or sample paths of ${\bf Z}$ and ${\bf Y}$, respectively. Since  ${\bf Z}$ can be seen as a random sequence in $D_d \subset \mathbb{R}^d$, we write interchangeably ${\bf Z}:{\mathbb{Z}}_{-} \times \Omega \longrightarrow D_d$ and ${\bf Z}: \Omega\longrightarrow (D_d)^{{\mathbb{Z}}_{-}}$. The latter is necessarily measurable
with respect to the Borel $\sigma $-algebra induced by the product topology in $(D_d)^{{\mathbb{Z}}_{-}}$.
The same applies to the analogous assignments involving ${\bf Y}$.

\paragraph {Hypothesis class $\mathcal{H}$, loss functions,  statistical, and empirical risk.} Let $\mathcal{F}$ be the class of all measurable functionals $H \colon (D_d)^{\mathbb{Z}_-} \longrightarrow {\mathbb{R}}^m$, $D_d\subset \mathbb{R}^d$, that is  $\mathcal{F}:= \{ H \colon (D_d)^{\mathbb{Z}_-} \longrightarrow {\mathbb{R}}^m \mid H \text{ is measurable} \}$. Consider a smaller {\bfi  hypothesis class} $\mathcal{H}$ of admissible functionals $\mathcal{H} \subset \mathcal{F}$. For a fixed {\bfi  loss}, that is, a measurable function\footnote{It is customary in the literature to consider nonnegative loss functions. This automatically guarantees that the expectation in \eqref{eq:riskDef} is well-defined,  although it is not necessarily finite. In this paper, for the sake of mathematical convenience, we allow for general real-valued loss functions but carefully address technical questions where relevant.} $L \colon \mathbb{R}^m \times \mathbb{R}^m \to \mathbb{R}$ and for any functional $H\in \mathcal{F}$  we define the {\bfi statistical risk} (sometimes just referred to as {\bfi  risk}) or {{\bfi  generalization error}} associated with $H$ as
\begin{equation}
\label{eq:riskDef} 
R(H):= \mathbb{E}[L(H({\bf Z}),{\bf Y}_0)],
\end{equation}
where by definition the expectation is taken with respect to the joint law of $({\bf Z},{\bf Y})$. The ultimate goal of the learning procedure consists in determining the {\bfi  Bayes
functional} $H^\ast_{\mathcal{F}} \in \mathcal{F}$ that exhibits the minimal statistical risk ({\bfi  Bayes risk}) in the class of all measurable functionals, which we  denote as 
\begin{equation}
\label{eq:riskMin} 
R_{\mathcal{F}}^\ast :=R(H^\ast_\mathcal{F})= \inf_{H \in \mathcal{F} } R(H).
\end{equation}
Even though this task is generally infeasible, one may hope to solve it for the {\bfi  best-in-class} functional $H^\ast_{\mathcal{H}} \in \mathcal{H}$ with the minimal associated in-class statistical risk ({\bfi  Bayes in-class risk}), which is assumed achievable, and which we denote as 
\begin{equation*}
R_{\mathcal{H}}^\ast :=R(H^\ast_\mathcal{H})= \inf_{H \in \mathcal{H} } R(H). 
\end{equation*}
The standard learning program is then based on the following error decomposition. For any $H\in \mathcal{H}$ we can write that
\begin{equation*}
R(H) - R_{\mathcal{F}} ^\ast = (R(H) -  R_{\mathcal{H}} ^\ast ) + (R_{\mathcal{H}} ^\ast - R_{\mathcal{F}} ^\ast),
\end{equation*}
where the first term is called the {\bfi  estimation error} and the second one is the {\bfi  approximation error}. In this paper we focus on upper bounds of the estimation component, while the same problem for the approximation error will be treated in the forthcoming work \cite{RC12}. 
We emphasize  that since the universal approximation properties of reservoir systems have been established in numerous situations (see the introduction and Section~\ref{Families of reservoir systems}) the  approximation error can be made arbitrarily small  by choosing an appropriate hypothesis class $\mathcal{H}$.
 
The distribution of $({\bf Z},{\bf Y})$ is generally unknown, and hence computing the risks \eqref{eq:riskDef} or \eqref{eq:riskMin}  is in practice infeasible. This implies, in particular, that the estimation error cannot be explicitly evaluated. Therefore, the usual procedure is in this case to use an empirical counterpart for \eqref{eq:riskDef} which can be computed using a training dataset. 

Suppose that a training sample for both the input and the target discrete-time stochastic processes is available up to some $n \in \mathbb{N}^+$ steps into the past, namely $( {\bf Z}_{-i}, {\bf Y}_{-i})_{i\in \left\{ 0, \dots, n-1\right\}}$. For each time step $i\in \{ 0, \dots, n-1\}$ we define the {\bfi  truncated training sample} for the input stochastic process $ {\bf Z}$ as
\begin{equation} \label{eq:truncatedZ}
{\bf Z}_{-i}^{-n+1} := (\ldots,{\bf 0},{\bf 0},{\bf Z}_{-n+1},\ldots,{\bf Z}_{-i-1},{\bf Z}_{-i}).
\end{equation}
In this time series context the {\bfi  training error} or the {\bfi  empirical risk} analog $\widehat{R}_n(H) $ of \eqref{eq:riskDef} is  given by
\begin{equation}
\label{eq:empiricalRiskDef}
\widehat{R}_n(H) = \frac{1}{n} \sum_{i=0}^{n-1} L(H({\bf Z}_{-i}^{-n+1}),{\bf Y}_{-i})
= \frac{1}{n} \sum_{i=0}^{n-1} L(U_H({\bf Z}_{0}^{-n+1})_{-i},{\bf Y}_{-i}),
\end{equation}
where $U_H$ denotes the filter associated to the functional $H$ as introduced in \eqref{definition of filter associated}.
In what follows we will also make use of what we call its {\bfi   idealized empirical risk}  version defined as
\begin{equation}
\label{eq:empiricalRiskFullHistoryDef}
\widehat{R}_n^\infty(H) = \frac{1}{n} \sum_{i=0}^{n-1} L(H({\bf Z}_{-i}^{-\infty}),{\bf Y}_{-i})
= \frac{1}{n} \sum_{i=0}^{n-1} L(U_H({\bf Z}_{0}^{-\infty})_{-i},{\bf Y}_{-i}),
\end{equation}
which makes use of a larger training sample containing all the past values of the input process  ${\bf Z}$.

\begin{remark}
\normalfont
The results of this paper are also valid if one replaces the zero elements  in the truncated training sample \eqref{eq:truncatedZ} by an arbitrary sequence (deterministic, random, or dependent on the training sample). More specifically, consider an arbitrary function $\mathcal{I} \colon (D_d)^{\mathbb{Z}_-} \to (D_d)^{\mathbb{Z}_-}$ that we use to extend the input training sample, for each $i\in \left\{ 0, \dots, n-1\right\}$, as
\begin{equation}\label{eq:extension}
\widetilde{{\bf Z}}_{-i}^{-n+1} = (\ldots,(\mathcal{I}({\bf Z}_{0}^{-n+1}))_{-1},(\mathcal{I}({\bf Z}_{0}^{-n+1}))_{0},{\bf Z}_{-n+1},\ldots,{\bf Z}_{-i-1},{\bf Z}_{-i}),
\end{equation}
and use this sample to define the empirical risk as in \eqref{eq:empiricalRiskDef}.
Later on in Proposition~\ref{prop:finiteHistoryError}, we show that the difference between the empirical risk \eqref{eq:empiricalRiskDef} and its idealized counterpart  \eqref{eq:empiricalRiskFullHistoryDef} can be made arbitrarily small under various assumptions that we shall consistently invoke. The proof of that result remains valid  for the more general definition of empirical risk using \eqref{eq:extension}. Moreover, that result is used to justify why, in the rest of the paper, it will be sufficient to work almost exclusively with the idealized empirical risk \eqref{eq:empiricalRiskFullHistoryDef}.
\end{remark}

\paragraph {Risk bounds and risk-consistency.} As we have already discussed, the learner is interested in obtaining upper bounds of the estimation error $(R(H) -  R_{\mathcal{H}} ^\ast )$. In many cases, these bounds can be constructed by bounding  the statistical risk (generalization error) or 
\begin{equation*}
\Delta_n:=\sup_{H\in \mathcal{H}}\{ {R(H) - \widehat{R}_n(H)} \}.
\end{equation*}
An upper bound of $\Delta_n$ allows to quantify the worst in-class error between the statistical risk and its empirical analog. We emphasize that bounding this worst in-class error  gives guarantees of performance for any learning algorithm which builds upon the idea of using the empirical risk to pick a concrete $\widehat{H}_n$ out of the hypothesis class $\mathcal{H}$ based on available training data. 

A standard example of such learning rules is the so-called {\bfi  empirical risk minimization} (ERM) principle for which generalization error bounds or  bounds for $\Delta_n$ can be used in a straightforward manner to bound the estimation error and, moreover, to establish some important consistency properties.  

More specifically, in the ERM procedure  the learner chooses the desired  functional $\widehat{H}_n $ out of the hypothesis class $\mathcal{H}$ of the admissible ones  using \eqref{eq:empiricalRiskDef} (the empirical version of  \eqref{eq:riskMin}), that is, 
\begin{equation}
\label{eq:empiricalRiskMin} 
\widehat{H}_n = \arg \min_{H \in \mathcal{H}} \widehat{R}_n(H), 
\end{equation}
which is well defined provided that such a minimizer exists and is unique; otherwise one may define $\widehat{H}_n$ to be an $\epsilon$-minimizer of the empirical risk (see \cite{Alon1997} for details).
We say that the  ERM  is {\bfi  strongly consistent} within the hypothesis class $\mathcal{H}$ if the { generalization error} $R(\widehat{H}_n)$ (or statistical risk) and the {training error} $\widehat{R}_n(\widehat{H}_n)$ (or empirical risk) as defined in \eqref{eq:riskDef} and in \eqref{eq:empiricalRiskDef}, respectively, for a sequence of functionals  $(\widehat{H}_n)_{n \in \mathbb{N}}$ picked by ERM from $\mathcal{H}$ using random samples of increasing length, both converge almost surely to the  Bayes in-class risk $R_{\mathcal{H}} ^\ast $ in \eqref{eq:riskMin}, that is  
\begin{equation}
\label{eq:genError}
\lim_{n \to \infty}  R(\widehat{H}_n) =  R _{\mathcal{H}}^\ast  \enspace {\rm a.s.}
\end{equation}
and
\begin{equation}
\label{eq:trainError}
\lim_{n \to \infty}  \widehat{R}_n(\widehat{H}_n) =  R_{\mathcal{H}} ^\ast   \enspace {\rm a.s.}
\end{equation}
When no assumptions on the distribution of $({\bf Z},{\bf Y})$ are used to prove \eqref{eq:genError} and \eqref{eq:trainError}, then this means that \eqref{eq:genError} and \eqref{eq:trainError} hold for all distributions and one talks about {\bfi  universal strong risk-consistency} of the ERM principle over the class $\mathcal{H}$. This is essentially the case in our setting, since we are working in a semi-agnostic setup and only invoke assumptions on the temporal dependence (but not on the marginal distributions of the input  and target stochastic processes $({\bf Z},{\bf Y})$). 

A standard approach to proving the strong  risk-consistency of the ERM procedure for the hypothesis class of functionals $\mathcal{H}$ consists in finding a sequence $(\eta_n)_{n \in \mathbb{N}}$  converging to zero for which the  inequality  
\begin{equation*}
\overline{\Delta}_n := \sup_{H \in \mathcal{H}} | \widehat{R}_n(H) - R(H) | \le \eta_n, 
\end{equation*}
holds $\mathbb{P}$-a.s.
To see that this implies \eqref{eq:genError} and \eqref{eq:trainError} one notes the following inequalities:
\begin{align}
R(\widehat{H}_n) - R _{\mathcal{H}}^\ast 
&=  \left(R(\widehat{H}_n) - \widehat{R} _n(\widehat{H}_n) \right)  + \left(\widehat{R} _n(\widehat{H}_n)  - \widehat{R} _n({H}^\ast ) \right) + \left(\widehat{R} _n({H}^\ast) - R _{\mathcal{H}}^\ast\right)\notag\\
&\le2 \eta_n + \left(\widehat{R} _n(\widehat{H}_n)  - \widehat{R} _n({H}^\ast ) \right) \le 2 \eta_n,
\label{eq:UnifCovergenceArgChain}
\end{align}
where the last inequality follows from the fact that, by definition  \eqref{eq:empiricalRiskMin}, $\widehat{H}_n$ is a minimizer of the empirical risk $\widehat{R} _n $. 

In the context of reservoir systems, we shall be working with a {\bfi  weak version of consistency} which imposes all the convergence conditions to hold only in probability.  In what follows we devise bounds for $\overline{\Delta}_n$ that allow us to establish the risk-consistency of the ERM procedure for reservoir systems. Additionally, we formulate high-probability bounds for $\overline{\Delta}_n$ which   provide us with convergence rates for the ERM-based estimation of RC systems that, to our knowledge, are not yet available  in the literature. It is well known that in some  cases  (for small classes with zero Bayes risk, see for example \cite{Bartlett2006}) the argument that we just discussed results in unreasonably slow rates. We defer the discussion of  possible refinements of the rates obtained in this paper to future projects.

\subsection{Learning procedure for reservoir systems} 
\label{Learning procedure for reservoir systems} 
The following paragraphs describe the implementation of the empirical risk minimization procedure in the setting of reservoir computing. We spell out the assumptions needed to derive the results in the next sections, construct the  hypothesis classes, and set up the ERM learning strategy for the different families  of reservoir systems discussed in Section~\ref{Families of reservoir systems}. 
\paragraph{Input and target stochastic processes.}For both the input  ${\bf Z}$ and the target   ${\bf Y}$ processes we assume a {\bfi  causal Bernoulli shift} structure (see for instance \cite{Dedecker2007a}, \cite{alquier:wintenberger}). More precisely, for $I=y,z$ and $q_I \in \mathbb{N}^+$  suppose that the so-called causal functional $G^I \colon (\mathbb{R}^{q_I})^{\mathbb{Z}_-} \to D_{o_I}$ (with $o_z = d$ and $D_{o_y} = \mathbb{R}^m$) is measurable and that $\bm{\xi}=((\bm{\xi}_{t}^y,\bm{\xi}_{t}^z))_{t \in \mathbb{Z}_-}$ are independent and identically distributed \ $\mathbb{R}^{q_y} \times \mathbb{R}^{q_z}$-valued random variables. We assume then that the input  ${\bf Z}$ and target processes  ${\bf Y}$ are Bernoulli shifts, that is, they are the (strictly) stationary processes determined by 
\begin{equation} 
\label{eq:ZFMPDef} 
\begin{aligned}
{\bf Z}_t & = G^z(\ldots,\bm{\xi}_{t-1}^z,\bm{\xi}_{t}^z), \quad t \in \mathbb{Z}_-, \\
{\bf Y}_t & = G^y(\ldots,\bm{\xi}_{t-1}^y,\bm{\xi}_{t}^y), \quad t \in \mathbb{Z}_-,
\end{aligned}
\end{equation} 
with  $\mathbb{E}[\|{\bf Z}_0\|_2] < \infty$, $\mathbb{E}[\|{\bf Y}_0\|_2] < \infty$.

Many processes derived from  stationary innovation sequences have causal Bernoulli shift structure including some that are of non-mixing type (see for instance the introduction in \citep{Dedecker2007a}). In order to obtain risk bounds for reservoir functionals as learning models, we  need to additionally impose assumptions on the weak dependency of the processes \eqref{eq:ZFMPDef}. More specifically, each of the three main results provided in the next section is formulated under a different weak dependence assumption which we now spell out in detail.  

We start with the strongest assumption of the three but which will allow us to obtain the strongest conclusions in terms of risk bounds for reservoir systems.
\begin{assumption} 
\label{ass:GLipschitz}
 For $I=y,z$ the functional $G^I$ is $L_I$-Lipschitz continuous when restricted to $(\ell^{1,{w^I}}_-(\mathbb{R}^{q_I}),\left\|\cdot \right\|_{1,w^I})$ for some strictly decreasing weighting sequence $w^I: \mathbb{N} \longrightarrow (0, 1]$ with finite mean, that is, $\sum_{j \in \mathbb{N}} j w_{j}^I < \infty$. More specifically, there exists $L_I > 0$ such that for all ${\bm{u}^I}= (\bm{u}^I_t)_{t \in \mathbb{Z}_-} \in \ell^{1,{w^I}}_-(\mathbb{R}^{q_I})$ and ${\bm{v}}^{I}= ({{\bm{v}}^{I}_t})_{t \in \mathbb{Z}_-} \in \ell^{1,{w^I}}_-(\mathbb{R}^{q_I})$ it holds that
\begin{equation} 
\label{eq:ZFMPProperty}
\|G^I(\bm{u}^I)-G^I({\bm{v}}^{I}) \|_2 \leq  L_I \left\|{\bm{u}}^{I}-{\bm{v}}^{I} \right\|_{1,w^I}.
\end{equation} 
Additionally, let the innovations in \eqref{eq:ZFMPDef} satisfy $\mathbb{E}[\|{\bm{\xi}}^{I}_0\|_2] < \infty$ for $I=y,z$.
\end{assumption}

The following example shows that one can easily construct causal Bernoulli shifts using reservoir functionals.
\begin{example}[Causal Bernoulli shifts out of reservoir functionals]
\label{ex:linearReservoir} 
\normalfont Consider a reservoir system of the type \eqref{eq:RCSystemDet} (see also the examples in Section~\ref{Families of reservoir systems}) determined by the Lipschitz-continuous reservoir map $F \colon D_N\times D_d\longrightarrow  D_N$ with $D_d \subset \mathbb{R}^d$, $D_N \subset \mathbb{R}^N$. Assume, additionally, that $F$ is a $r$-contraction on the first entry and denote by $L>0$ the Lipschitz constant of $F$ with respect to the second entry. Let $w: \mathbb{N} \longrightarrow (0, 1]$ be a strictly decreasing weighting sequence  with finite mean, that is $\sum_{j \in \mathbb{N}} j w_{j} < \infty$, and a finite associated inverse decay ratio $L_{w}$ (see Section~\ref{Notation}). Let now $V_d \subset (D_d)^{ \mathbb{Z}_-} \cap \ell^{w,1}_-( \mathbb{R}^d)$ be a time-invariant set and consider inputs  ${\bf z} \in V_d$. 
Suppose that the reservoir system \eqref{eq:RCSystemDet}    has a solution $(\mathbf{x}^0, {\bf z}^0) \in  (D_N)^{\mathbb{Z}_{-}} \times V _d$, that is, $\mathbf{x}_t^0=F(\mathbf{x}_{t-1}^0, {\bf z}_t ^0)$, for all $t \in \mathbb{Z}_{-} $. Then, by Theorem~4.1 and Remark~4.4 in \cite{RC9}, if 
\begin{equation}
\label{fmp condition on w}
r L _{w}<1,
\end{equation}
then the reservoir system associated to $F$ with inputs in $V _d $ has the echo state property and hence determines a unique continuous, causal, and time-invariant reservoir filter 
$
U^F:(V _d, \left\|\cdot \right\|_{1,w}) \longrightarrow ((D_N)^{\mathbb{Z}_{-}}, \left\|\cdot \right\|_{1,w})
$
which is Lipschitz-continuous with constant
\begin{equation}
\label{lips in case continuous Lz}
L_{U ^F}:= \frac{L}{1-r L _{w}}.
\end{equation}
It hence also has the fading memory property with respect to $w$.  
The Lipschitz continuity of the filter $U^F$ implies that the associated functional $H_{U^F}$ is also Lipschitz-continuous with the same Lipschitz constant (see Proposition 3.7 in  \cite{RC9}). Taking $G^I:=H_{U^F}$, it is easy to see that \eqref{eq:ZFMPProperty} indeed holds with $L_I = L_{U ^F}$.
\end{example}

The next assumption is weaker and it is satisfied by many discrete-time stochastic processes. The results that we obtain in the following sections invoking this type of weak dependence will  be also less strong than under Assumption~\ref{ass:GLipschitz}.
\begin{assumption} 
\label{ass:ThetaDecay}
For $I=y,z$ denote by  $(\widetilde{\bm{\xi}}_{t}^I)_{t \in \mathbb{Z}_-}$ an independent copy  of $(\bm{\xi}_{t}^I)_{t \in \mathbb{Z}_-}$ and define
\begin{equation} 
\label{eq:ThetaDef} 
\theta^I(\tau):= \mathbb{E}[\|G^I(\ldots,\bm{\xi}_{-1}^I,\bm{\xi}_{0}^I)-G^I(\ldots,\widetilde{\bm{\xi}}_{-\tau-1}^I,\widetilde{\bm{\xi}}_{-\tau}^I,\bm{\xi}_{-\tau+1}^I,\ldots,\bm{\xi}_{0}^I) \|_2], \quad \tau\in \mathbb{N}^+.
\end{equation}
Assume that for $I=y,z$ there exist $\lambda_I \in (0,1)$ and $C_I >0$ such that   it holds that
\begin{equation}
 \label{eq:tauExpDecay} 
 \theta^I(\tau) \leq C_I \lambda_I^\tau, \quad {\text for \ all}  \enspace \tau \in \mathbb{N}^+.
 \end{equation}
\end{assumption}
\begin{remark} 
\label{one implies 2}
\normalfont Note that whenever the weighting sequence $w^I$ in Assumption~\ref{ass:GLipschitz} can be chosen to be a geometric one, that is, $w^{I}_j = \lambda_{I}^j$, $j \in \mathbb{N}$ with $\lambda_I\in (0,1)$, then Assumption~\ref{ass:ThetaDecay} is also automatically satisfied. The argument proving this appears for instance in the proof of part {\bf (i)} of Corollary~\ref{cor:ThetaWeighting}.
\end{remark}
The following example illustrates that for many widely used time series models this assumption does hold. In particular, we show that vector autoregressive VARMA processes with time-varying coefficients under mild conditions and, in particular, GARCH processes satisfy Assumption~\ref{ass:ThetaDecay}.
\begin{example}[VARMA process with time-varying coefficients]
\label{ex:GARCH} 
\normalfont Suppose ${\bf Z}= ({\bf Z}_t)_{t \in \mathbb{Z}_-}$ is a vector autoregressive process of first order with time-varying coefficients, which we write as
\begin{equation} \label{eq:auxEq27} {\bf Z}_t = { A}_t {\bf Z}_{t-1}+ \boldsymbol{\eta}_t, \quad t \in \mathbb{Z}_-,  \end{equation} 
where $(\boldsymbol{\eta}_t)_{t \in \mathbb{Z}_-} \sim {\rm IID}$ with  $\boldsymbol{\eta}_t \in \mathbb{R}^d$ and $\mathbb{E}[\|\boldsymbol{\eta}_0\|_2]<\infty$, and where $({ A}_t)_{t \in \mathbb{Z}_-} \sim {\rm IID}$ with  ${ A}_t \in \mathbb{M}_{d}$ and $\mathbb{E}[\vertiii{ A_0}_2]<1$. Under these hypotheses (see for instance \citet*{Brandt1986} and \citet*[Theorem~1.1]{Bougerol1992})  there exists a unique stationary process satisfying \eqref{eq:ZFMPDef} and \eqref{eq:auxEq27}  and $\mathbb{E}[\|{\bf Z}_0\|_2]<\infty$. Iterating \eqref{eq:auxEq27} yields 
\[ {\bf Z}_0 = \boldsymbol{\eta}_0 + { A}_0 \boldsymbol{\eta}_{-1} + \cdots + { A}_0 \cdots  { A}_{-\tau+1} {\bf Z}_{-\tau}, \] 
and so by definition, using the independence of $({ A}_t)_{t \in \mathbb{Z}_-} $ and stationarity, one gets
 \[\begin{aligned} 
 \theta^z(\tau) & \leq 2 \mathbb{E}[\vertiii{{ A}_0 \cdots  { A}_{-\tau+1}}_2] \mathbb{E}[\|{\bf Z}_{-\tau}\|_2] \le 2 \mathbb{E}[\vertiii{{ A}_0}_2]^\tau \mathbb{E}[\|{\bf Z}_{0}\|_2]. \end{aligned} \]
 We now define  $C_z := 2 \mathbb{E}[\|{\bf Z}_{0}\|_2]$, $\lambda_z := \mathbb{E}[\vertiii{ A_0}_2] $ and immediately obtain that \eqref{eq:tauExpDecay} indeed holds, that is, for all $\tau \in \mathbb{N}^+$
  \[\begin{aligned} 
 \theta^z(\tau)  \leq C_z \lambda_z^\tau, \end{aligned} \]
 as required.
\end{example}
 We now consider a concrete example of an autoregressive process of the type \eqref{eq:auxEq27} which is extensively used to describe and eventually to forecast the volatility of financial time series, namely the generalized autoregressive conditional heterostedastic (GARCH) family (\cite{engle:arch, bollerslev:garch, Francq2010}).

\begin{example}[GARCH process] 
\normalfont
Consider a GARCH(1,1) model given by the following equations:
\begin{align} 
r_t & = \sigma_t \varepsilon_t, \quad \varepsilon_t \sim {\rm IID}(0, 1), \quad t \in \mathbb{Z}_-
\label{retgarch}\\ 
\sigma_t^2 & = \omega + \alpha r_{t-1}^2 + \beta \sigma_{t-1}^2, \quad t \in \mathbb{Z}_- \label{volgarch} \end{align}
with parameters that satisfy $\alpha, \beta, \omega \geq 0$, $\alpha+\beta <1$, which guarantees the second order stationarity of the process $(r_t)_{t\in \mathbb{Z}_-}$ and the positivity of the conditional variances $(\sigma _t ^2)_{t \in \mathbb{Z}_{-}}$. 
We now check if the GARCH(1,1) process in \eqref{retgarch}-\eqref{volgarch} falls in the framework  \eqref{eq:auxEq27} introduced in the previous example. Let $d=2$ and define \[  {\bf Z}_t := \begin{pmatrix}
  r_t^2 \\
   \sigma_t^2    
 \end{pmatrix}, \quad  \boldsymbol{\eta}_t := \begin{pmatrix}
   \omega \varepsilon_t^2 \\
    \omega    
  \end{pmatrix}, \quad { A}_t := \begin{pmatrix}
    \alpha \varepsilon_t^2 & \beta \varepsilon_t^2 \\
     \alpha & \beta   
   \end{pmatrix},  \quad t \in \mathbb{Z}_-.
  \]
It is easy to verify that with this choice of matrix $A_t$, $t\in \mathbb{Z}_-$, one has $\mathbb{E}[\vertiii{ A_0}_2] = \mathbb{E}[\alpha \varepsilon_0^2 + \beta] = \alpha + \beta < 1$, by the stationarity condition.
Additionally,  
$\mathbb{E}[\|\boldsymbol{\eta}_0\|_2] =\omega \mathbb{E}[ \sqrt{ \varepsilon_0^4 + 1}] \le 
\omega \mathbb{E}[ \varepsilon_0^2 + 1]=2 \omega < \infty$. 
Hence, the GARCH(1,1) model in \eqref{retgarch}-\eqref{volgarch} can be represented as \eqref{eq:auxEq27} and   automatically satisfies Assumption~\ref{ass:ThetaDecay}.
\end{example}

\begin{assumption} \label{ass:ThetaAlgDecay}
Assume that  for $I=y,z$ there exist $\alpha_I \in (0,\infty)$ and $C_I >0$ such that, 
\begin{equation} 
\label{eq:tauAlgDecay} 
\theta^I(\tau) \leq C_I \tau^{-\alpha_I} , \quad \text{ for all }  \tau \in \mathbb{N}^+, 
\end{equation}
with $\theta^I$ as in \eqref{eq:ThetaDef}.
\end{assumption}

\begin{example}[ARFIMA process]
\label{ARFIMA process}
\normalfont
Let $\overline{d} \in (-\frac{1}{2},\frac{1}{2})$ and suppose that ${\bf Z}= ({Z}_t)_{t \in \mathbb{Z}_-}$ is an  autoregressive fractionally integrated moving average  ARFIMA $(0,\overline{d},0)$ process (see, for instance, \cite{Hosking1981} and \cite{Beran1994} for details). The process ${\bf Z}$ admits an infinite  moving average (MA($\infty$)) representation 
\begin{equation*}
{ Z}_t = \sum_{k=0}^\infty \phi_k \varepsilon_{t-k}, \quad t \in \mathbb{Z}_-,
\end{equation*}
with  innovations $(\varepsilon_t )_{t \in \mathbb{Z}_-} \sim {\rm IID}(0,1)$  and where the coefficients are given by
$\phi_k = \frac{\Gamma(k+\overline{d})}{\Gamma(k+1)\Gamma(\overline{d})}$ so that $\Gamma(\overline{d}) k^{1-\overline{d}} \phi_k \to 1$, as $k \to \infty$. Using this asymptotic behaviour and the independence of the innovations one obtains 
\begin{equation*}
\theta^z(\tau) \leq 2 \mathbb{E}\left[\left| \sum_{k=\tau}^\infty \phi_k \varepsilon_{-k} \right|\right] \leq 2 \left( \sum_{k=\tau}^\infty \phi_k^2 \right)^{1/2} \leq 2\sup_{l \in \mathbb{N}^+}\{l^{1-\overline{d}} \phi_l\} \left( \sum_{k=\tau}^\infty k^{2\overline{d}-2} \right)^{1/2}.
\end{equation*}
Comparing the sum to the integral $\int_{\tau}^\infty x^{2\overline{d}-2} \mathrm{d} x = \frac{1}{1-2 \overline{d}} \tau^{2\overline{d}-1}$, it is easy to see that  \eqref{eq:tauAlgDecay} is satisfied with $\alpha_z = \frac{1}{2}-\overline{d}>0$. 
\end{example}

\paragraph {Hypothesis classes of reservoir maps $\mathcal{F}^{RC}$ and reservoir functionals $\mathcal{H}^{RC}$.} The next step  in order to set up the learning  program in the context of reservoir systems is  to construct the associated hypothesis classes. These classes need to be chosen beforehand and consist of candidate functionals associated to causal time-invariant reservoir filters of the type discussed in Sections \ref{Filters and reservoir computing systems} and~\ref{Families of reservoir systems}. 

For fixed  $N, d \in \mathbb{N}^+$, consider  a class $\mathcal{F}^{RC}$ of reservoir maps $F \colon D_N\times D_d\longrightarrow  D_N$,  ${\bf 0} \in D_N \subset \mathbb{R}^N$, $D_d \subset \mathbb{R}^d$ that we assume is (a subset of and) separable in the space of bounded continuous functions when equipped with the supremum norm and, additionally, satisfies the following assumptions: 
\begin{assumption} 
\label{ass:FLipschitz}
There exist $r \in (0,1)$ and $L_R >0$ such that for each $F \in \mathcal{F}^{RC}$:
\begin{description}
	\item[(i)] for any ${\bf z} \in D_d$, $F(\cdot,{\bf z})$ is an $r$-contraction,
	\item[(ii)] for any ${\bf x} \in D_N$, $F({\bf x},\cdot)$ is $L_R$-Lipschitz.
\end{description}
\end{assumption}
\begin{assumption}
\label{ass:FESP}
	For any $F \in \mathcal{F}^{RC}$
	the (first equation in the) system \eqref{eq:RCSystemDet} has the echo state property. If $H^F$ is the functional associated to it, we assume that $H^F$ is measurable with respect to the Borel $\sigma $-algebra associated to the product topology on its domain. 
\end{assumption}
Notice that if the state space $D_N$ is a closed ball,  then   Assumption~\ref{ass:FLipschitz} implies Assumption~\ref{ass:FESP} by Proposition~\ref{prop:contractionEchoState}. This implication holds for any reservoir system with bounded reservoir maps, an example of which are the elements of the echo state networks family with bounded activation functions $\sigma$ in Section~\ref{Families of reservoir systems}.

\begin{assumption} 
\label{ass:XBounded}
There exists $M_\mathcal{F}> 0$ such that 
\begin{equation*}
\|H^F({\bf z})\|_2 \leq M_\mathcal{F}, \quad \mbox{for all ${\bf z} \in (D_d)^{\mathbb{Z}_-}$ and for each $F \in \mathcal{F}^{RC}$.}
\end{equation*}
\end{assumption}

This assumption automatically holds for many families of reservoir systems. We carefully addressed this question in Section~\ref{Families of reservoir systems}, where  we discussed various families and input types  for which the reservoir functionals are indeed bounded.  For example, Assumption~\ref{ass:XBounded} is satisfied by construction in the case of bounded inputs for all the families in Section~\ref{Families of reservoir systems}. In the presence of generic unbounded inputs, Assumption~\ref{ass:XBounded} obviously holds for echo state networks (ESN) with bounded activation function. 
In addition, the condition in Assumption~\ref{ass:XBounded} appears in many applications. For instance, in the recent paper by \cite{Verzelli2019} it is shown that using a so-called self-normalizing activation function allows one to achieve high performances in standard benchmark tasks. It is not difficult to see that  self-normalizing functions yield $\|H^F({\bf z})\|_2 \leq 1$.

Our assumptions also guarantee that various suprema over the classes $\mathcal{H}^{RC}$ and $\mathcal{F}^{RC}$ that will appear in the sequel are measurable random variables.  There are very general conditions that guarantee such a fact holds (see \cite[Corollary~5.25]{Dudley2014}) but here we simply assume that $H_F$ for all $F \in \mathcal{F}^{RC}$ is bounded (see Assumption~\ref{ass:XBounded}) and that $\mathcal{F}^{RC}$ is separable in the space of bounded continuous functions when equipped with the supremum norm. This condition together with the continuity assumptions imposed below on the loss function allows us to conclude the measurability of the suprema over $\mathcal{H}^{RC}$ and $\mathcal{F}^{RC}$ (see Lemma~\ref{lem:supmeasurable}  in Appendix~\ref{Preliminary results} for the details).

Once we have spelled out Assumptions~\ref{ass:FLipschitz}-\ref{ass:XBounded} that define the class $\mathcal{F}^{RC}$, we proceed to construct the corresponding hypothesis class of reservoir functionals $\mathcal{H}^{RC}$. Since in most of the cases considered in the literature  the readouts  $h$ in \eqref{eq:RCSystemDet} are either polynomial (as in the case of reservoir systems with linear reservoir maps and polynomial readouts) or linear  (as in the case of reservoir systems with linear reservoir maps and linear readouts, ESNs, and SAS in Section~\ref{Families of reservoir systems}), we shall treat the case of generic Lipschitz readouts and the linear case separately:


\begin{description}
\item [(i)] {\bf Reservoir functionals hypothesis class} $\mathcal{H}^{RC} $ {\bf with Lipschitz readouts.} We consider a set $\mathcal{F}^{O}$ of readout maps $h \colon D_N \to \mathbb{R}^m$ that are Lipschitz-continuous  with Lipschitz constant  $L_h>0$. We assume that for all the members of the class it holds that $L_h\leq \overline{L_h}$ and $\|h( {\bf 0})\|_2\leq L_{h,0}$, for some fixed $\overline{L_h}, L_{h,0}>0$, and that the class contains the zero function and is separable in the space of bounded continuous functions when equipped with the supremum norm.
In this situation, we define the hypothesis class $\mathcal{H}^{RC}$ of reservoir functionals  as
\begin{align}
\label{eq:Hfixedgeneral} 
\mathcal{H}^{RC} := \{ H \colon (D_d)^{\mathbb{Z}_-} \to {\mathbb{R}}^m \, \mid \, &H({\bf z})=h(H^F({\bf z})), h \in \mathcal{F}^{O},  F \in \mathcal{F}^{RC} \}.
\end{align}

\item [(ii)] {\bf Reservoir functionals hypothesis class} $\mathcal{H}^{RC} $ {\bf with linear readouts.} Most of the examples of reservoir systems which we discussed in Section~\ref{Families of reservoir systems} are constructed  using linear readout maps, which are known to be easier to train  and  popular in many practical applications. We hence treat this case separately.
Let now the readouts $h$ be given by maps of the type  $h( \mathbf{x}) = W \mathbf{x} + \boldsymbol{a}$, $\mathbf{x}\in D_N$, with  $W \in \mathbb{M}_{m,N}$ and $\boldsymbol{a}\in \mathbb{R}^m$. We assume that for all the members of the class it holds that $\vertiii{W}_2\leq \overline{L_h}$ and $\|h( {\bf 0})\|_2= \|\boldsymbol{a}\|_2 \leq L_{h,0}$,
for some fixed $\overline{L_h}, L_{h,0}>0$. In this case, such a class of readouts is automatically separable in the space of bounded continuous functions when equipped with the supremum norm. In this situation we hence define the hypothesis class $\mathcal{H}^{RC}$ of reservoir functionals  as
\begin{align}
\mathcal{H}^{RC} := \{ H \colon (D_d)^{\mathbb{Z}_-} \to {\mathbb{R}}^m \, \mid \, &H({\bf z})=W H^F({\bf z}) + \boldsymbol{a}, W \in \mathbb{M}_{m,N}, \boldsymbol{a}\in \mathbb{R}^m,\nonumber\\
\label{eq:Hfixed} 
&\vertiii{W}_2\leq \overline{L_h}, 
\|\boldsymbol{a}\|_2 \leq L_{h,0}, F \in \mathcal{F}^{RC} \}.
\end{align}
\end{description}

\paragraph {Loss function.} The choice of   loss function is often  key to the success in quantifying risk bounds for learning models. In this paper we work with distance-based loss functions of the form
\begin{equation}\label{defLoss}
L(\mathbf{x},{\bf y}) = \sum_{i=1}^m f_i(x_i-y_i),
\end{equation}
for $\mathbf{x},{\bf y} \in \mathbb{R}^m$, where for each $i\in \left\{1,\ldots,m \right\}$, the so-called representing functions $f_i \colon \mathbb{R} \longrightarrow \mathbb{R}$ are all Lipschitz-continuous with the same Lipschitz constant $L_L / \sqrt{m}$ and satisfy $f_i(0)=0$.  The assumption of Lipschitz-continuity on the loss $L$ in the case in which its codomain is restricted to $\mathbb{R}^+$ guarantees that it is also a Nemitski loss of order $p=1$ (see \cite{Christmann2008} for detailed discussion of Nemitski losses and their associated risks). 
Notice that our assumptions imply in particular that
\begin{equation} \label{eq:lossMoment}
\mathbb{E}[|L({\bf 0},{\bf Y}_{0})|] < \infty
\end{equation}
and
\begin{equation}\label{eq:lossLipschitz} |L(\mathbf{x},{\bf y}) - L(\overline{\mathbf{x}},\overline{{\bf y}}) | \leq L_L (\|\mathbf{x}-\overline{\mathbf{x}} \|_2 +  \|\mathbf{y}-\overline{\mathbf{y}} \|_2), \enspace \mathbf{x}, \overline{\mathbf{x}}, {\bf y}, \overline{\mathbf{y}}\in \mathbb{R}^m. \end{equation}


Additionally, we notice that since we restrict to reservoir systems satisfying the echo state property and the hypothesis class $\mathcal{H}^{RC}$ contains their associated reservoir functionals, for $H= h \circ H^F$ the idealized empirical risk \eqref{eq:empiricalRiskFullHistoryDef} can be written as
\begin{equation*}
\widehat{R}_n^\infty(H) = \frac{1}{n} \sum_{i=0}^{n-1} L(h(H^F({\bf Z}_{-i}^{-\infty})),{\bf Y}_{-i})=
\frac{1}{n} \sum_{i=0}^{n-1} L(U_h^F({\bf Z}_{0}^{-\infty})_{-i},{\bf Y}_{-i})
=\frac{1}{n} \sum_{i=0}^{n-1} L(h({\bf X}_{-i}),{\bf Y}_{-i}),
\end{equation*}
where $\mathbf{X}$ is the solution of the reservoir  system 
\begin{equation*}
\mathbf{X}_t =F(\mathbf{X}_{t-1}, {\bf Z}_t), \quad t \in \mathbb{Z}_-.
\end{equation*}

\paragraph{Risk consistency and risk bounds of reservoir systems.} As  discussed in Section~\ref{General setup of empirical risk minimization procedure} we are interested in generalization error bounds or, in particular, in deriving uniform bounds for $\Delta_n =  \sup_{H \in \mathcal{H}^{RC}} \lbrace R(H)  - \widehat{R}_n(H)  \rbrace$. In order to proceed, we first decompose $\Delta_n$ and write
\begin{align}
\label{unifc}
\Delta_n =  \sup_{H \in \mathcal{H}^{RC}} \lbrace R(H)  - \widehat{R}_n(H) \rbrace & \leq 
\sup_{H \in \mathcal{H}^{RC}} \lbrace R(H)  - \widehat{R}_n(H) - \widehat{R}_n^\infty(H) + \widehat{R}_n^\infty(H)\rbrace \nonumber\\
&\leq\sup_{H \in \mathcal{H}^{RC}} \left\{ \widehat{R}_n^\infty(H)-\widehat{R}_n(H) \right\} + \sup_{H \in \mathcal{H}^{RC}} \left\{ {R}(H)  - \widehat{R}_n^\infty(H) \right\}\nonumber\\
&\leq\sup_{H \in \mathcal{H}^{RC}} \left| \widehat{R}_n(H)  - \widehat{R}_n^\infty(H) \right|  + \sup_{H \in \mathcal{H}^{RC}} \left| {R}(H)  - \widehat{R}_n^\infty(H) \right|.
\end{align}
This means that one can find  upper bounds for both $\Delta_n$ and $\overline{\Delta}_n= \sup_{H \in \mathcal{H}^{RC}} | \widehat{R}_n(H) - R(H) |$ by 
controlling the two summands in the right hand side of the last inequality.  

Coming back to the example of the ERM procedure that we discussed in Section \ref{General setup of empirical risk minimization procedure}, the previous expression can also be used to deduce the weak (essentially universal) risk-consistency of the ERM for the class ${\mathcal H}^{RC}$. More specifically, in line with classical results due to \cite{Vapnik1991}, from the inequalities \eqref{eq:UnifCovergenceArgChain} it follows that in order  to establish the weak (essentially universal) risk-consistency of ERM for reservoir functionals  one simply needs to show that  for any $\epsilon, \delta>0$ there exists $n_0 \in \mathbb{N}^+$ such that for all $n\geq n_0$ it holds that
\begin{equation}
\label{eq:UnifCovergenceArg}
\mathbb{P}\left(\overline{\Delta}_n 
 > \epsilon \right) = \mathbb{P}\left(\sup_{H \in \mathcal{H}^{RC}} \left| R(H) - \widehat{R}_n(H)\right| 
 > \epsilon \right)\leq \delta.
\end{equation}
Whenever the inputs are  i.i.d. (which is a particular case of our setup), this definition is essentially equivalent to saying that  $\mathcal{H}^{RC}$ is a (weak) uniform Glivenko-Cantelli class (see for example \cite{Mukherjee2002}). From expression \eqref{unifc} it also follows then that in order to establish the (weak) risk-consistency, it suffices to show the two-sided uniform convergence over the class $\mathcal{H}^{RC}$ of reservoir functionals, first,  of the truncated versions of  empirical risk to their idealized counterparts and, second,  of these idealized versions of the empirical risk to the generalization error (or statistical risk). More explicitly, we shall separately show that for any $\epsilon_1, \delta_1>0$ and $\epsilon_2, \delta_2>0$  there exist $n_1 \in \mathbb{N}^+$ and  $n_2 \in \mathbb{N}^+$ such that for all $n\geq n_1$ and $n\geq n_2$, respectively, it holds that
\begin{equation}
\label{eq:DifIdealizedTruncated}
\mathbb{P}\left(\sup_{H \in \mathcal{H}^{RC}}\left| \widehat{R}_n(H) - \widehat{R}^\infty_n(H)  \right| 
 > \epsilon_1 \right)\leq \delta_1 
\end{equation}
and
\begin{equation}
\label{eq:DifRiskIdealized}
\mathbb{P}\left(\sup_{H \in \mathcal{H}^{RC}}\left| {R}(H) - \widehat{R}^\infty_n(H)  \right| 
 > \epsilon_2 \right)\leq \delta_2. 
\end{equation} 
One needs to start by showing that the suprema of both these differences over the class $\mathcal{H}^{RC}$  are indeed random variables. This fact has been proved in Lemma~\ref{lem:supmeasurable} in the Appendix~\ref{Preliminary results}. Next, we need to  show that the difference between the idealized  and the truncated empirical risks can be made as small as one wants by choosing an appropriate length $n \in \mathbb{N}^+$ of the training sample. This fact is contained in the following result.
\begin{proposition} \label{prop:finiteHistoryError}
Consider the hypothesis class $\mathcal{H}^{RC}$ of reservoir functionals defined in \eqref{eq:Hfixedgeneral}. Define   
\begin{equation}
\label{eq:C0def} 
C_0 := \frac{2 r L_L \overline{L_h} M_\mathcal{F} }{1-r}.
\end{equation}
Then, for any $n \in \mathbb{N}^+$
\[ \sup_{H \in \mathcal{H}^{RC}} \left| \widehat{R}_n(H)  - \widehat{R}_n^\infty(H) \right| \leq \frac{C_0(1-r^n)}{n} \]
holds $\mathbb{P}$-a.s.
\end{proposition}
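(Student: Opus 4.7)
The strategy is to bound the difference $|\widehat{R}_n(H) - \widehat{R}_n^\infty(H)|$ term by term inside the empirical average, peel off the Lipschitz loss, peel off the Lipschitz readout, and then exploit the contraction property of the reservoir map to obtain a geometric decay in the depth of the unrolling. The key observation is that the truncated and idealized states obey the same recursion after time $-n+1$ and differ only through their ``initial condition'' at time $-n$, so the contraction in the first argument kicks in uniformly across the hypothesis class.

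\textbf{Step 1: Reduction to state differences.} Fix $H = h \circ H^F \in \mathcal{H}^{RC}$. Since $L$ is Lipschitz in its first argument with constant $L_L$ (with the second argument ${\bf Y}_{-i}$ held fixed in both terms), \eqref{eq:lossLipschitz} gives
\[
\left| \widehat{R}_n(H) - \widehat{R}_n^\infty(H) \right|
\leq \frac{L_L}{n} \sum_{i=0}^{n-1} \bigl\| H({\bf Z}_{-i}^{-n+1}) - H({\bf Z}_{-i}^{-\infty}) \bigr\|_2.
\]
Since $h$ is Lipschitz with constant at most $\overline{L_h}$, this is further bounded by
\[
\frac{L_L \overline{L_h}}{n} \sum_{i=0}^{n-1} \bigl\| H^F({\bf Z}_{-i}^{-n+1}) - H^F({\bf Z}_{-i}^{-\infty}) \bigr\|_2.
\]

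\textbf{Step 2: Contraction unrolling.} Denote by $\mathbf{x}^\infty_t = H^F(T_{-t}({\bf Z}_0^{-\infty}))$ and $\mathbf{x}^n_t = H^F(T_{-t}({\bf Z}_0^{-n+1}))$ the two state sequences furnished by the ESP (Assumption~\ref{ass:FESP}). For every $t \geq -n+1$ the two sequences are driven by the same input ${\bf Z}_t$, hence by Assumption~\ref{ass:FLipschitz}(i),
\[
\|\mathbf{x}^n_t - \mathbf{x}^\infty_t\|_2 = \|F(\mathbf{x}^n_{t-1},{\bf Z}_t) - F(\mathbf{x}^\infty_{t-1},{\bf Z}_t)\|_2 \leq r\,\|\mathbf{x}^n_{t-1}-\mathbf{x}^\infty_{t-1}\|_2.
\]
Iterating this inequality from $t=-n$ up to $t=-i$ (that is, $n-i$ contraction steps) yields
\[
\|\mathbf{x}^n_{-i} - \mathbf{x}^\infty_{-i}\|_2 \leq r^{\,n-i}\,\|\mathbf{x}^n_{-n}-\mathbf{x}^\infty_{-n}\|_2.
\]
Both $\mathbf{x}^n_{-n}$ and $\mathbf{x}^\infty_{-n}$ are values of the functional $H^F$, so Assumption~\ref{ass:XBounded} gives $\|\mathbf{x}^n_{-n}-\mathbf{x}^\infty_{-n}\|_2 \leq 2 M_{\mathcal F}$.

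\textbf{Step 3: Summation.} Combining Steps 1 and 2,
\[
\left| \widehat{R}_n(H) - \widehat{R}_n^\infty(H) \right| \leq \frac{2 L_L \overline{L_h} M_{\mathcal F}}{n} \sum_{i=0}^{n-1} r^{\,n-i} = \frac{2 L_L \overline{L_h} M_{\mathcal F}}{n} \cdot \frac{r(1-r^n)}{1-r} = \frac{C_0(1-r^n)}{n},
\]
with $C_0$ as in \eqref{eq:C0def}. Since the bound depends on $H$ only through the uniform constants $r,\overline{L_h},M_{\mathcal F}$ (all valid by the assumptions on $\mathcal{H}^{RC}$), one may take the supremum over $H \in \mathcal{H}^{RC}$. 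The bound holds pointwise $\mathbb{P}$-almost surely, and the measurability of the supremum is covered by Lemma~\ref{lem:supmeasurable}.

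\textbf{Main obstacle.} The derivation itself is essentially routine once the state-space picture is set up; the only genuinely delicate point is justifying that the two recursions may be compared starting exactly at $t=-n$ — in particular that the truncated filter's value $\mathbf{x}^n_{-n}$ (which, in the zero-extension convention, corresponds to running $F(\cdot,\mathbf{0})$ forward from the infinite past) is still bounded by $M_{\mathcal F}$ and hence the crude ``$2M_{\mathcal F}$'' bound is available uniformly. This is where Assumption~\ref{ass:XBounded} carries the weight, and why the remark immediately following the statement (that the zero-extension can be replaced by any other extension $\mathcal I$) is valid: only boundedness of the resulting state is used.
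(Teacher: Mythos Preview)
Your proof is correct and follows essentially the same approach as the paper: peel off the Lipschitz loss and readout, then use the $r$-contraction in the first argument of $F$ together with the $M_{\mathcal F}$ bound to obtain the geometric factor $r^{n-i}$, and sum. The only cosmetic difference is that the paper packages the contraction iteration into a separate lemma (Lemma~\ref{lem:HFLipschitz}, which also carries an $L_R$-term for mismatched inputs that here vanishes because the inputs agree on $t\geq -n+1$), whereas you unroll the recursion directly; the content is identical.
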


This proposition implies that \eqref{eq:DifIdealizedTruncated} indeed holds. In order to complete the uniform convergence argument, we also need to show that \eqref{eq:DifRiskIdealized} holds. Even though in order to prove the (essentially universal)  risk-consistency of the ERM procedure for reservoir systems it is  sufficient to show that the upper bounds  of $\overline{\Delta}_n$ (and ${\Delta}_n$) can be made as small as one wants with $n \rightarrow \infty$, for hyper-parameter selection in practical applications the availability of non-asymptotic bounds is also of much importance. We see in the following section that depending on the particular weak dependence assumption imposed (Assumptions \ref{ass:GLipschitz}-\ref{ass:ThetaAlgDecay}) we will be able to use more or less strong concentration inequalities that yield finite-sample size bounds with different rates of convergence.

\section{Main Results}
\label{Main Results}
In this section we provide high-probability risk bounds for reservoir computing systems.  The main ingredients of the probability bounds are the expected values of ${\Gamma}_n:=\sup_{H \in \mathcal{H}^{RC}} \{{R}(H) - \widehat{R}^\infty_n(H)\}  $ and of $\overline{\Gamma}_n:=\sup_{H \in \mathcal{H}^{RC}}\left| {R}(H) - \widehat{R}^\infty_n(H)  \right|$, which are the maximum difference of the idealized training and the generalization errors over the class $\mathcal{H}^{RC}$ and the maximum of the absolute value of this difference, respectively. Since the random variables in the training sample of the input and the output discrete-time processes are not independent and identically distributed, bounding the expected values of $\Gamma_n$ and $\overline{\Gamma}_n$ is a challenging task. In the first subsection we show that this problem may be circumvented using the following idea: one may compute the empirical risk by partitioning the training sample into blocks of appropriate length and then exploiting the weak dependence of the input and output stochastic processes spelled out in Assumptions~\ref{ass:GLipschitz}-\ref{ass:ThetaAlgDecay}. We first make use of this ``block-partitioning'' idea in order to derive the bounds for the expected values of the random variables $\Gamma_n$ and $\overline{\Gamma}_n$ in the setting of each of those three assumptions. These bounds are expressed in terms of the so-called Rademacher complexities of the reservoir hypothesis classes and the weak dependence coefficients of the input and the target stochastic processes. We provide details concerning the complexity bounds for particular families in the second subsection. In the third subsection we then use the fact that the random fluctuations of $\Gamma_n$ and $\overline{\Gamma}_n$ around their expected values  can be controlled using concentration inequalities, which, as we show further, can be done either with the help of the Markov inequality under the weaker Assumptions \ref{ass:ThetaDecay}-\ref{ass:ThetaAlgDecay}, or using  stronger exponential concentration inequalities (Propositions~\ref{prop:concentration}, \ref{prop:concentrationUnbounded}) under the stronger Assumption~\ref{ass:GLipschitz}. This approach yields explicit expressions for non-asymptotic high-probability  bounds for $\overline{\Delta}_n$ and hence for $\Delta_n$, which we spell out in the third subsection. Finally, showing that these upper bounds can be made as small as one wants as $n \rightarrow \infty$ proves the desired (weak and essentially universal)  risk-consistency of the ERM-selected reservoir systems used as learning models. All the proofs of the main results given in this section are provided in the appendices.

\subsection{Bounding the expected value}

The main ingredient that needs to be introduced in order to bound the expected value of both $\Gamma_n$ and $\overline{\Gamma}_n$ is a complexity measure for the hypothesis classes of reservoir functionals $\mathcal{H}^{RC}$. Many complexity measures have been discussed in the literature in recent years (see for example \cite{Vapnik1968,ledoux:talagrand, Bartlett2003, Ben-David2014, AlexanderRakhlinKarthikSridharan2015}). In this paper we  use the so-called {\bfi  (multivariate) Rademacher-type complexity} associated to a given $\mathcal{H}^{RC}$, which we denote as  $\mathcal{R}_k(\mathcal{H}^{RC})$. More explicitly, let $k \in \mathbb{N}^+$ and consider  $\varepsilon_0,\ldots,\varepsilon_{k-1}$  independent and identically distributed \ Rademacher random variables and let   $\widetilde{{\bf Z}}^{(j)}$, $j=0,\ldots,k-1$, denote independent copies of ${\bf Z}$ ({\bfi ghost processes}), which are also independent of $\varepsilon_0,\ldots,\varepsilon_{k-1}$. The Rademacher-type complexity $\mathcal{R}_k(\mathcal{H}^{RC})$ over $k$ ghost processes is defined as
\begin{equation}
\label{eq:Rcomplexity} 
\mathcal{R}_k(\mathcal{H}^{RC}) = \frac{1}{k} \mathbb{E} \left[ \sup_{H\in \mathcal{H}^{RC}} \left\|  \sum_{j=0}^{k-1} \varepsilon_j H(\widetilde{{\bf Z}}^{(j)}) \right\|_2 \right].  
\end{equation}
Note that $\mathcal{R}_k(\mathcal{H}^{RC})$ is not an empirical Rademacher complexity and that the expectation is taken with respect to the law of Rademacher random variables and the distribution of the input process ${\bf Z}$. In this paper we do not use  the standard approach consisting in bounding the theoretical Rademacher complexity using its empirical analogue (conditional on $(\widetilde{{\bf Z}}^{(j)})_{j \in \{ 0,\ldots, k-1\}}$), since in the context of reservoir systems the ghost processes  $\widetilde{{\bf Z}}^{(j)}$ have no empirical interpretation due to the fact that it is usually only a single trajectory and not  i.i.d.\ samples of input data which are available to the learner.

 The following results provide upper bounds for the expected and the expected absolute value of the largest deviation of the statistical risk from its idealized empirical analogue within the hypothesis class of  reservoir functionals $\mathcal{H}^{RC}$. The two upper bounds \eqref{eq:Rademacher} and  \eqref{eq:RademacherAbs} in the next proposition share the same first three terms, up to a factor $2$ due to the absolute value. The first term is related to the weak dependence coefficients of the input and target signals, $\theta^{z}$ and $\theta^{y}$, respectively. The second term involves the Rademacher-type complexity \eqref{eq:Rcomplexity} of the hypothesis class of reservoir functionals $\mathcal{H}^{RC}$. Finally, the third term is always of order $\frac{\tau}{n}$, where $\tau$ is the block length, which needs to be carefully chosen depending on the rates of decay of $\theta^{z}$ and $\theta^{y}$, as we show later in    Corollary~\ref{cor:ThetaWeighting}. The upper  bound for the expected absolute value \eqref{eq:RademacherAbs} contains an additional term of order $\frac{\sqrt{\tau}}{\sqrt{n}}$.

\begin{proposition}
\label{prop:expBoundWithRademacher} 
Let $\mathcal{H}^{RC}$ be  the hypothesis class  of reservoir functionals associated to the reservoir maps in the class $\mathcal{F}^{RC}$ as given in \eqref{eq:Hfixedgeneral} or \eqref{eq:Hfixed}. Let both the input process ${\bf Z}$ and the target process  ${\bf Y}$  have a \textit{causal Bernoulli shift} structure as in \eqref{eq:ZFMPDef} and take values in $(D_d)^{ \mathbb{Z}_-}$ and $ (\mathbb{R}^m)^{ \mathbb{Z}_-}$, respectively. Then, there exist $B > 0$, $M>0$, and $\{a_\tau\}_{\tau \in \mathbb{N}^+}$ with $a_\tau \in  (0,\infty)$, such that for any $\tau, n \in \mathbb{N}^+$ with $\tau < n$ it holds that 
\begin{equation} 
\label{eq:Rademacher}
 \mathbb{E}\left[  \Gamma_n \right] =\mathbb{E}\left[ \sup_{H \in \mathcal{H}^{RC}} \left\{R(H) - \widehat{R}_n^\infty(H)  \right\}\right]
 \leq \frac{k \tau}{n} a_\tau +\frac{B k \tau}{n} \mathcal{R}_k(\mathcal{H}^{RC}) +  \frac{2M(n-k\tau)}{n} ,  \end{equation} 
where $k = \lfloor n / \tau \rfloor$ and 
\begin{align} 
 \mathbb{E}\left[  \overline{\Gamma}_n \right] =\mathbb{E}\left[  \sup_{H \in \mathcal{H}^{RC}} \left| R(H)-\widehat{R}_n^\infty(H) \right| \right]
 & \leq \frac{k \tau}{n} a_\tau +\frac{2 B k \tau}{n} \mathcal{R}_k(\mathcal{H}^{RC}) + \frac{2M(n-k\tau)}{n}\nonumber \\
 &+\frac{4 \tau \sqrt{k}  }{n} L_L \mathbb{E}\left[ \| {\bf Y}_{0} \|_2^2 \right]^{1/2}.  \label{eq:RademacherAbs} 
\end{align}
\noindent In these expressions, the Rademacher complexity $ \mathcal{R}_k(\mathcal{H}^{RC})$  of the hypothesis class $\mathcal{H}^{RC}$ of reservoir functionals is defined as in \eqref{eq:Rcomplexity}, the constants and the sequence $\{a_\tau\}_{\tau \in \mathbb{N}^+}$  can be explicitly expressed as
\begin{align}
\label{eq:Cdef} 
B &= 2  \sqrt{m}L_L,\\
\label{eq:Mdef} 
M & = L_L \overline{L_h } M_\mathcal{F} + \mathbb{E}[|L({\bf 0},{\bf Y}_{0})|] +L_{h,0} L_L,  \\ 
\label{eq:atau} 
a_\tau & = L_L (2 r^\tau M_\mathcal{F} \overline{L_h } + \theta^y(\tau) + L_R \overline{L_h } \sum_{l=0}^{\tau-1} r^l \theta^z(\tau-l)  ),
\end{align}
where for $I=y,z$  the weak dependence coefficients $\theta^I$ for $\tau \in \mathbb{N}^+$ are defined as in \eqref{eq:ThetaDef}, namely, 
\begin{equation}
\label{eq:ThetaDefRepeat}
\theta^I(\tau)= \mathbb{E}[\|G^I(\ldots,\bm{\xi}_{-1}^I,\bm{\xi}_{0}^I)-G^I(\ldots,\overline{\bm{\xi}}_{-\tau-1}^I,\overline{\bm{\xi}}_{-\tau}^I,\bm{\xi}_{-\tau+1}^I,\ldots,\bm{\xi}_{0}^I) \|_2],
\end{equation}
with $(\overline{\bm{\xi}}_{t}^I)_{t \in \mathbb{Z}_-}$ an independent copy of $(\bm{\xi}_{t}^I)_{t \in \mathbb{Z}_-}$.
\end{proposition}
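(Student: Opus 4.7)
The plan is to use a block decomposition with coupling by independent ghost processes, followed by classical Rademacher symmetrization and contraction. I would organize the argument in three main steps.

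First (partition and leftover). I would split the $n$ summation indices into $k=\lfloor n/\tau\rfloor$ blocks of length $\tau$ plus a leftover of size $n-k\tau$. Each summand $L(H({\bf Z}_{-i}^{-\infty}),{\bf Y}_{-i})$, and similarly $R(H)$, is controlled in expectation by $M$ from \eqref{eq:Mdef} via Assumption~\ref{ass:XBounded}, the Lipschitz property \eqref{eq:lossLipschitz}, and the integrability \eqref{eq:lossMoment}; dropping the leftover indices thus contributes the term $\frac{2M(n-k\tau)}{n}$. Regrouping the remaining $k\tau$ indices into $\tau$ phases (phase $t$ being $\{j\tau+t:j=0,\ldots,k-1\}$) and using strict stationarity of $({\bf Z},{\bf Y})$, all phases have the same expected supremum, so one reduces to controlling a single phase of $k$ indices separated by $\tau$ time steps with overall weight $\frac{k\tau}{n}$.

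Second (coupling). For this phase I would construct ghost processes $(\widetilde{{\bf Z}}^{(j)},\widetilde{{\bf Y}}^{(j)})_{j=0,\ldots,k-1}$ by keeping the original innovations $\bm{\xi}^{z,y}$ on the window $B_j=[-j\tau-\tau+1,-j\tau]$ and substituting fresh independent copies $\widetilde{\bm{\xi}}^{z,y,(j)}$ (independent across $j$) outside $B_j$. Because the $B_j$ are pairwise disjoint, the resulting ghost processes are i.i.d.\ copies of $({\bf Z},{\bf Y})$. The coupling cost per block is then $L_L$ times an $H^F$-discrepancy plus a ${\bf Y}$-discrepancy, by \eqref{eq:lossLipschitz} and Lipschitz continuity of the readout. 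Unrolling the reservoir recursion $\tau$ steps via the $r$-contraction in the first argument and the $L_R$-Lipschitzness in the second (Assumption~\ref{ass:FLipschitz}), together with $\|H^F\|_2\le M_\mathcal{F}$ (Assumption~\ref{ass:XBounded}), yields
\[
\|H^F({\bf z})-H^F({\bf z}')\|_2\leq 2r^\tau M_\mathcal{F}+L_R\sum_{l=0}^{\tau-1}r^l\|{\bf z}_{-l}-{\bf z}'_{-l}\|_2.
\]
At sub-position $-j\tau-l$ the coupling agrees on the most recent $\tau-l$ innovations and replaces earlier ones by fresh copies, so by \eqref{eq:ThetaDefRepeat} and stationarity the expected discrepancy equals $\theta^z(\tau-l)$; the ${\bf Y}$-coupling similarly yields $\theta^y(\tau)$. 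Assembling these contributions reproduces exactly the per-block error $a_\tau$ of \eqref{eq:atau}.

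Third (symmetrization and contraction). After coupling, $R(H)-\frac{1}{k}\sum_{j=0}^{k-1}L(H(\widetilde{{\bf Z}}^{(j)}),\widetilde{{\bf Y}}^{(j)}_0)$ is an i.i.d.\ empirical process, so the classical symmetrization inequality bounds its expected supremum in terms of Rademacher variables $\varepsilon_0,\ldots,\varepsilon_{k-1}$. Writing $L({\bf x},{\bf y})=\sum_{i=1}^m f_i(x_i-y_i)$, centering $f_i(x-Y_i)$ around $f_i(-Y_i)$, applying Talagrand's contraction principle coordinate by coordinate, and combining via Cauchy--Schwarz in $\mathbb{R}^m$, one extracts the factor $B=2\sqrt{m}L_L$ and $\mathcal{R}_k(\mathcal{H}^{RC})$. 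For the one-sided bound \eqref{eq:Rademacher} the residual $\sum_j\varepsilon_j f_i(-\widetilde{Y}^{(j)}_i)$ has zero mean and vanishes. For the absolute-value bound \eqref{eq:RademacherAbs} this residual survives and, by Jensen and Cauchy--Schwarz using $|f_i(y)|\leq L_L|y|/\sqrt{m}$, produces the additional term $\frac{4\tau\sqrt{k}}{n}L_L\mathbb{E}[\|{\bf Y}_0\|_2^2]^{1/2}$.

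The main obstacle will be the second step: the coupling must simultaneously make the ghost processes i.i.d.\ copies of $({\bf Z},{\bf Y})$ and produce the refined, sub-position-dependent error $\theta^z(\tau-l)$ rather than the crude $\theta^z(\tau)$, so that the geometric weights $r^l$ from the unrolled contraction combine with the $\theta^z$ terms to give the precise form of $a_\tau$ in \eqref{eq:atau}. Once this is in place, assembling the three steps yields both bounds, with all constants matching \eqref{eq:Cdef}--\eqref{eq:atau}.
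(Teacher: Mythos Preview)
Your proposal is correct and follows essentially the same route as the paper: block decomposition with a leftover bounded by $M$, reduction to a single phase by stationarity, the same ghost-process coupling (keeping the original innovations on the window of length $\tau$ and replacing earlier ones by fresh i.i.d.\ copies), the unrolled reservoir estimate $\|H^F({\bf z})-H^F({\bf z}')\|_2\leq 2r^\tau M_\mathcal{F}+L_R\sum_{l=0}^{\tau-1}r^l\|{\bf z}_{-l}-{\bf z}'_{-l}\|_2$ (which the paper isolates as a lemma), and Rademacher symmetrization plus coordinatewise contraction to extract $B\,\mathcal{R}_k(\mathcal{H}^{RC})$. The only cosmetic difference is that the paper absorbs $-Y_i$ into the auxiliary class $\mathcal{H}^{RC}_i$ before applying contraction rather than centering at $f_i(-Y_i)$, but the two formulations are equivalent and yield the same constants.
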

We now explore  the conditions required for the upper bounds in \eqref{eq:Rademacher} and \eqref{eq:RademacherAbs} to be finite and exhibit a certain decay as a function of $\tau$ and $n$. Notice that (up to a factor $2$) the right-hand sides of the two inequalities are equal up to the last summand in  {\eqref{eq:RademacherAbs}} and hence one needs to  impose that $\mathbb{E}\left[ \| {\bf Y}_{0} \|_2^2 \right] <\infty$. Additionally, in order to better understand the behavior of both bounds as a function of $\tau$ and $n$ one needs to study two more ingredients, namely the sequence $\{a_\tau\}_{\tau \in \mathbb{N}^+}$ and the Rademacher complexity $ \mathcal{R}_k(\mathcal{H}^{RC})$. The behavior of the sequence $\{a_\tau\}_{\tau \in \mathbb{N}^+}$  is exclusively determined by the properties of the input and target processes,  while the Rademacher complexity of $\mathcal{H}^{RC}$ is fully characterized by the type of reservoir and readout maps of the given family of reservoir systems. In the following remark we argue that  under either the stronger Assumption~\ref{ass:GLipschitz} or the weaker Assumptions~\ref{ass:ThetaDecay}-\ref{ass:ThetaAlgDecay} the sequence $\{a_\tau\}_{\tau \in \mathbb{N}^+}$ converges to zero.

\begin{remark} \normalfont
A condition guaranteeing that the sequence $\{a_\tau\}_{\tau \in \mathbb{N}^+}$ in \eqref{eq:atau} converges to zero is, for instance, that
\begin{equation}\label{eq:tausummable}\sum_{\tau = 1}^\infty \theta^I(\tau) < \infty, \quad I=y,z. \end{equation}
To verify this, observe that this condition also implies that 
\[ \sum_{\tau=1}^\infty \sum_{l=0}^{\tau-1} r^l \theta^z(\tau-l) = \sum_{l=0}^\infty r^l \sum_{\tau=l+1}^{\infty} \theta^z(\tau-l) = \frac{1}{1-r} \sum_{\tau=1}^\infty \theta^z(\tau)< \infty, \]
where we used that $r \in (0,1)$. This proves that $\sum_{\tau=1}^\infty a_\tau < \infty$, which necessarily implies that $\lim_{\tau \to \infty} a_\tau = 0$ as required.

It is  easy to verify that  under Assumption~\ref{ass:GLipschitz} one has that 
\begin{align*}
\sum_{\tau = 1}^\infty \theta^I(\tau) &\leq \sum_{\tau = 1}^\infty 2 L_I \mathbb{E}[\|\bm{\xi}^I_0\|_2]  \sum_{j=\tau}^\infty w^I_{j} = 2 L_I \mathbb{E}[\|\bm{\xi}^I_0\|_2]  \sum_{j=1}^\infty \sum_{\tau=1}^j w^I_{j} \\
&= 2 L_I \mathbb{E}[\|\bm{\xi}^I_0\|_2] \sum_{j=1}^\infty j w^I_{j} < \infty,
\end{align*}
which immediately implies that condition \eqref{eq:tausummable} is satisfied. Additionally, notice that under Assumption~\ref{ass:ThetaDecay}, condition \eqref{eq:tausummable}  is also automatically satisfied. However, under Assumption~\ref{ass:ThetaAlgDecay} condition \eqref{eq:tausummable} may not be satisfied, but a straightforward argument (see the proof of part~{\bf (iii)} of Corollary~\ref{cor:ThetaWeighting}) shows that  $\lim_{\tau \to \infty} a_\tau = 0$.
\end{remark}

We just argued that under any of the three assumptions~\ref{ass:GLipschitz}-\ref{ass:ThetaAlgDecay}, the convergence to zero of  the sequence $\{a_\tau\}_{\tau \in \mathbb{N}^+}$ is guaranteed, which implies that if we establish the finiteness and a certain decay of the  Rademacher complexity term we shall have proved that the upper bounds given in  \eqref{eq:Rademacher} and \eqref{eq:RademacherAbs} are finite and tend to $0$ as $n \to \infty$. The rate of convergence of these bounds is however affected by the particular dependence assumption adopted. We address this important issue in the following corollary where we assume that the Rademacher complexity is finite and exhibits a certain decay and we prove decay rates for the bounds in \eqref{eq:Rademacher} and \eqref{eq:RademacherAbs} that are valid under the different assumptions~\ref{ass:GLipschitz}-\ref{ass:ThetaAlgDecay}. The boundedness of the Rademacher complexities is studied in detail later on in Section \ref{Rademacher complexity of reservoir systems} for the different hypothesis classes of reservoir systems that we introduced in Section~\ref{Families of reservoir systems}

\begin{corollary} 
\label{cor:ThetaWeighting}
Assume that there exists $C_{RC} >0$ such that for all $k \in \mathbb{N}^+$ the Rademacher-type complexity $\mathcal{R}_k(\mathcal{H}^{RC})$ of the class $\mathcal{H}^{RC}$ of reservoir functionals satisfies
\begin{equation}
\label{cor:ThetaWeighting:ass}
\mathcal{R}_k(\mathcal{H}^{RC}) \leq \frac{C_{RC}}{\sqrt{k}}.
\end{equation}
Consider the following three cases that correspond to Assumptions \ref{ass:GLipschitz},~\ref{ass:ThetaDecay}, and~\ref{ass:ThetaAlgDecay}, respectively:
\begin{description}
\item[{\bf (i)}] Suppose that Assumption~\ref{ass:GLipschitz} holds and that, additionally, for $I=y,z$ the weighting sequences  $w^I: \mathbb{N} \longrightarrow (0, 1]$ are such that the associated decay ratios $D_{w^I}:=\sup_{i \in \mathbb{N}} \left\{\frac{w^I_{i+1}}{w^I_i}\right\}<1$. Let $\lambda_{max} := \max(r,D_{w^y},D_{w^z})$. Then, there exist $C_1$, $C_2$, $C_3$, $C_{3,abs} >0$ such that  for all $n \in \mathbb{N}^+$ satisfying $\log(n)<n\log(\lambda_{max}^{-1})$ it holds that
 \begin{equation} 
 \label{eq:RademacherCor} 
  \mathbb{E}\left[ \sup_{H \in \mathcal{H}^{RC}} \left\{R(H) - \widehat{R}_n^\infty(H)  \right\} \right]
 \leq \frac{C_1}{n} + \frac{C_2 {\log(n)}}{{n}} + \frac{C_3 \sqrt{\log(n)}}{\sqrt{n}}
 \end{equation}
 and 
 \begin{equation} 
 \label{eq:RademacherCorAbs} 
 \mathbb{E}\left[  \sup_{H \in \mathcal{H}^{RC}} \left|R(H)- \widehat{R}_n^\infty(H)\right| \right]
  \leq \frac{C_1}{n} + \frac{C_2 {\log(n)}}{{n}} + \frac{C_{3,abs} \sqrt{\log(n)}}{\sqrt{n}}.  
  \end{equation}
 The constants can be explicitly chosen as 
 \begin{align} 
 \label{eq:C1C2def} 
 C_1 & = \frac{2M_\mathcal{F} L_L  \overline{L_h}  + L_L C_y}{\lambda_{max}}, \quad\quad
C_2  = \frac{2M}{\log(\lambda_{max}^{-1})} +  \frac{L_L L_R  \overline{L_h}  C_z}{\lambda_{max}\log(\lambda_{max}^{-1})},\\
 \label{eq:C3def} 
C_3 &= \frac{2 \sqrt{m} L_L     C_{RC} }{\sqrt{\log(\lambda_{max}^{-1})}}, \quad\quad
 C_{3, abs} = 2 C_3 + \frac{4 L_L \mathbb{E}\left[ \| {\bf Y}_{0} \|^2_2 \right]^{1/2}}{\sqrt{\log(\lambda_{max}^{-1})}}, 
 \end{align}
 where $M$ is as in \eqref{eq:Mdef}
and $C_I=\dfrac{2 L_I \mathbb{E}[\|\bm{\xi}^I_0\|_2]}{1-D_{w^I}}  $ for $I = y,z$.
\item[{\bf (ii)}] Suppose that Assumption~\ref{ass:ThetaDecay} holds and let $\lambda_{max} := \max(r,\lambda_y,\lambda_z)$ with 
$\lambda_y,\lambda_z$ as in \eqref{eq:tauExpDecay}. Then there exist $C_1$, $C_2$, $C_3$, $C_{3,abs} >0$ such that  for all $n \in \mathbb{N}^+$ satisfying $\log(n)<n\log(\lambda_{max}^{-1})$ the bounds in \eqref{eq:RademacherCor} and 
 \eqref{eq:RademacherCorAbs} hold. The 
 constants can be explicitly chosen as in \eqref{eq:C1C2def}-\eqref{eq:C3def} with $C_I$ as in \eqref{eq:tauExpDecay}. 
\item[{\bf (iii)}] Suppose that Assumption~\ref{ass:ThetaAlgDecay} holds and denote $\alpha := \min(\alpha_y,\alpha_z)$. Then there exist ${C}_1, C_2, C_{1,abs}>0$ such that  for all $n \in \mathbb{N}^+$ it holds that
\[ \mathbb{E}\left[ \sup_{H \in \mathcal{H}^{RC}}\left\{R(H) -  \widehat{R}_n^\infty(H)\right\} \right]
 \leq {C}_1{n^{-\frac{1}{2+\alpha^{-1}}}} + {C_2}{n^{-\frac{2}{2+\alpha^{-1}}}}\]
 and 
 \begin{equation*} 
 \mathbb{E}\left[  \sup_{H \in \mathcal{H}^{RC}} \left| R(H) -\widehat{R}_n^\infty(H)\right| \right]
  \leq  {C}_{1,abs} n^{-\frac{1}{2+\alpha^{-1}}}  + {C_2}{n^{-\frac{2}{2+\alpha^{-1}}}}.  
  \end{equation*}
\end{description}
 The constants can be explicitly chosen as 
 \begin{align}
 \label{eq:C1C2exp} 
 C_1 & = L_L(2 M_\mathcal{F}\overline{L_h} r^{-\gamma_{ \alpha}} + L_R \overline{L_h} C_z C_{ \alpha}  + C_y) + B C_{RC}, \quad  C_2 = 2M, \\
  \label{eq:C1absexp} 
C_{1, abs} &= C_1 + {4 L_L \mathbb{E}\left[ \| {\bf Y}_{0} \|^2_2 \right]^{1/2}} + B C_{RC}, 
 \end{align}
 with $M$, $B$ as in \eqref{eq:Cdef}-\eqref{eq:Mdef} and
 \begin{align}
\label{gamma_alpha}
\gamma_{ \alpha} &= \max_{\tau \in \mathbb{N}^+} \left\{ \frac{\log( \tau )\alpha_z}{\log(r^{-1})} - \frac{\tau}{4}\right\},\\
\label{C_alpha}
C_\alpha &= \max(2^{\alpha_z},r^{-\gamma_\alpha})(1-\sqrt{r})^{-1},
\end{align}
and $C_I$, $\alpha_I$, for $I=y,z$ as in \eqref{eq:tauAlgDecay}.
\end{corollary}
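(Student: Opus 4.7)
My plan is to apply Proposition~\ref{prop:expBoundWithRademacher} in each of the three settings and then choose the block length $\tau$ so as to balance the competing summands of \eqref{eq:Rademacher} and \eqref{eq:RademacherAbs}. After inserting the hypothesis $\mathcal{R}_k(\mathcal{H}^{RC})\le C_{RC}/\sqrt{k}$ and using $k\le n/\tau$, the Rademacher contribution in \eqref{eq:Rademacher} simplifies to $BC_{RC}\sqrt{\tau/n}$; since $n-k\tau<\tau$, the last summand is at most $2M\tau/n$; and in \eqref{eq:RademacherAbs} the additional $\sqrt{k}$-term is bounded by $4L_L\mathbb{E}[\|{\bf Y}_0\|_2^2]^{1/2}\sqrt{\tau/n}$ while the Rademacher coefficient doubles. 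What remains in each case is to estimate $a_\tau$ from the decay of $\theta^y$ and $\theta^z$ and then to optimize in $\tau$.

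For cases \textbf{(i)} and \textbf{(ii)}, I would first establish the uniform bound $\theta^I(\tau)\le C_I\lambda_{max}^\tau$. In (ii) this is the hypothesis. In (i), the Lipschitz inequality \eqref{eq:ZFMPProperty} applied to a coupling at time $-\tau$ yields $\theta^I(\tau)\le 2L_I\mathbb{E}[\|\bm{\xi}^I_0\|_2]\sum_{j\ge\tau}w^I_j$ (as in the remark preceding Example~\ref{ex:GARCH}), and $w^I_j\le D_{w^I}^j$ (immediate from the definition of $D_{w^I}$) furnishes the desired geometric tail with $C_I=2L_I\mathbb{E}[\|\bm{\xi}^I_0\|_2]/(1-D_{w^I})$. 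Since $r\le\lambda_{max}$, the convolution in \eqref{eq:atau} satisfies $\sum_{l=0}^{\tau-1}r^l\lambda_{max}^{\tau-l}\le\tau\lambda_{max}^\tau$, so $a_\tau\le L_L\lambda_{max}^\tau(2M_\mathcal{F}\overline{L_h}+C_y+L_R\overline{L_h}C_z\tau)$. Setting $\tau=\lfloor\log(n)/\log(\lambda_{max}^{-1})\rfloor$ gives $\lambda_{max}^\tau\le 1/(n\lambda_{max})$ and $\tau/n\le\log(n)/(n\log(\lambda_{max}^{-1}))$, and a direct accounting of the resulting $1/n$, $\log(n)/n$ and $\sqrt{\log(n)/n}$ contributions produces \eqref{eq:RademacherCor}, \eqref{eq:RademacherCorAbs} with the constants exactly as in \eqref{eq:C1C2def}--\eqref{eq:C3def}. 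The standing hypothesis $\log(n)<n\log(\lambda_{max}^{-1})$ guarantees $\tau<n$ so that Proposition~\ref{prop:expBoundWithRademacher} is applicable.

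The main obstacle in case \textbf{(iii)} is that exponential and polynomial decays must be combined in the convolution $\sum_{l=0}^{\tau-1}r^l\theta^z(\tau-l)\le C_z\sum_{j=1}^\tau r^{\tau-j}j^{-\alpha_z}$, which must be reduced to a purely polynomial $\tau^{-\alpha_z}$ bound. I would split the sum at $j=\lfloor\tau/2\rfloor$: for $j>\tau/2$, the bounds $j^{-\alpha_z}\le 2^{\alpha_z}\tau^{-\alpha_z}$ and $\sum_{j>\tau/2}r^{\tau-j}\le 1/(1-r)\le 1/(1-\sqrt{r})$ contribute $2^{\alpha_z}\tau^{-\alpha_z}/(1-\sqrt{r})$. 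For $j\le\tau/2$ the key manipulation is
\begin{equation*}
r^{\tau-j}j^{-\alpha_z}=\tau^{-\alpha_z}r^{\tau-j}(\tau/j)^{\alpha_z}\le\tau^{-\alpha_z}r^{\tau-j}\tau^{\alpha_z}=\tau^{-\alpha_z}(\tau^{\alpha_z}r^{\tau/4})r^{3\tau/4-j}\le\tau^{-\alpha_z}r^{-\gamma_\alpha}r^{3\tau/4-j},
\end{equation*}
where the last inequality uses the defining property $\tau^{\alpha_z}r^{\tau/4}\le r^{-\gamma_\alpha}$ of $\gamma_\alpha$ in \eqref{gamma_alpha}; since $3\tau/4-j\ge\tau/4\ge 0$, summing the geometric tail contributes a further $r^{-\gamma_\alpha}\tau^{-\alpha_z}/(1-\sqrt{r})$ term. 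Together these two parts give $\sum_{j=1}^\tau r^{\tau-j}j^{-\alpha_z}\le C_\alpha\tau^{-\alpha_z}$ with $C_\alpha$ as in \eqref{C_alpha}. The factor $r^\tau$ in \eqref{eq:atau} is handled analogously by $r^\tau=r^{\tau/4}\cdot r^{3\tau/4}\le r^{-\gamma_\alpha}\tau^{-\alpha_z}$, and setting $\alpha=\min(\alpha_y,\alpha_z)$ one obtains $a_\tau\le L_L(2M_\mathcal{F}\overline{L_h}r^{-\gamma_\alpha}+L_R\overline{L_h}C_zC_\alpha+C_y)\tau^{-\alpha}$. Choosing $\tau=\lfloor n^{1/(2\alpha+1)}\rfloor$ then balances $\tau^{-\alpha}$ against $\sqrt{\tau/n}=n^{-\alpha/(2\alpha+1)}$ and yields the announced rate $n^{-1/(2+\alpha^{-1})}$, while $2M\tau/n$ decays at the faster rate $n^{-2/(2+\alpha^{-1})}$, producing the stated two-term bound. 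The absolute-value version is obtained by doubling the Rademacher coefficient and adding the $4L_L\mathbb{E}[\|{\bf Y}_0\|_2^2]^{1/2}\sqrt{\tau/n}$ summand arising from the last term of \eqref{eq:RademacherAbs}, which gives $C_{1,abs}=C_1+BC_{RC}+4L_L\mathbb{E}[\|{\bf Y}_0\|_2^2]^{1/2}$.
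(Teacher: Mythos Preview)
Your proposal is correct and follows essentially the same route as the paper: in all three parts you apply Proposition~\ref{prop:expBoundWithRademacher}, simplify the Rademacher and remainder terms via $k\le n/\tau$ and $n-k\tau<\tau$, bound $a_\tau$ from the decay of $\theta^I$, and then pick $\tau=\lfloor\log n/\log(\lambda_{max}^{-1})\rfloor$ in (i)--(ii) and $\tau=\lfloor n^{1/(2\alpha+1)}\rfloor$ in (iii). The only cosmetic difference is in (iii), where the paper keeps the variable $l$ and factors $r^l=r^{l/2}\cdot r^{l/2}$, bounding $r^{l/2}(\tau-l)^{-\alpha_z}$ uniformly by $\max(2^{\alpha_z},r^{-\gamma_\alpha})\tau^{-\alpha_z}$ before summing the remaining $r^{l/2}$; your substitution $j=\tau-l$ and split at $j=\lfloor\tau/2\rfloor$ is an equivalent rearrangement that yields the same constant $C_\alpha$.
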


\subsection{Rademacher complexity of reservoir systems}
\label{Rademacher complexity of reservoir systems}

In this section we show that for the most important hypothesis classes of reservoir systems, the Rademacher complexity tends to $0$ as $k \to \infty$ at the rates required in \eqref{cor:ThetaWeighting:ass}  of Corollary~\ref{cor:ThetaWeighting}.  More specifically, in the next propositions we will provide upper bounds for the Rademacher complexities of the most popular reservoir families that we spelled out in Section~\ref{Families of reservoir systems}. 

 \subsubsection*{Reservoir systems with linear reservoir map (LRC) and linear readout}

We now provide a bound for the Rademacher complexity of classes of reservoir functionals associated to  linear reservoir maps and readouts. We recall that in this case we always work with uniformly bounded inputs ${\bf Z}$ (see Section~\ref{Families of reservoir systems}) by some constant $M>0$, that is, $D_d = \overline{B_{M}}$ and so the random variable ${\bf Z}$ takes values in the set 
\begin{equation}
\label{recall km}
K _M  := \left\{ {\bf z} \in (\mathbb{R}^d)^{ \mathbb{Z}_-} | \| {\bf z}_t \|_2 \leq M \ {\rm for \ all} \  t\in \mathbb{Z}_- \right\}.
\end{equation}
\begin{proposition}
\label{prop:LinCase} 
Let $N, d \in \mathbb{N}^+$ and let $\Theta \subset \mathbb{M}_{N} \times \mathbb{M}_{N,d} \times \mathbb{R}^N$. Define the classes of linear reservoir maps as
\[ \mathcal{F}^{RC} := \{F^{A,C,\bm{\zeta}} \, | \, (A,C,\bm{\zeta}) \in \Theta \} \]
 and let $\mathcal{H}^{RC}$ be a class of reservoir functionals of the type defined in \eqref{eq:Hfixed}, associated to reservoir systems with linear reservoir maps  and readouts. Additionally, define 
\begin{align}
\label{defLRCAmax}
 \lambda^A_{ max} := &   \sup_{(A,C,\bm{\zeta}) \in \Theta} \vertiii{A}_2, \\
 \label{defLRCCmax}
 \lambda^C_{ max} := & \sup_{(A,C,\bm{\zeta}) \in \Theta}  \vertiii{C}_2, \\
  \label{defLRCzetamax}
 \lambda^{\bm{\zeta}}_{ max} := & \sup_{(A,C,\bm{\zeta}) \in \Theta} \|\bm{\zeta}\|_2 .
\end{align}
If for the class $\mathcal{F}^{RC}$ it holds that
\begin{equation} 
\label{eq:LRCCond}
 \begin{aligned}
 0<\lambda^A_{ max}<1, \quad  \lambda^C_{ max}  < \infty, \quad \lambda^{\bm{\zeta}}_{ max}  < \infty,
\end{aligned}
\end{equation}
then Assumptions~\ref{ass:FLipschitz}-\ref{ass:XBounded} are satisfied and the Rademacher complexity of the associated class of reservoir functionals satisfies
\begin{align}
\label{RadLRC}
\mathcal{R}_k(\mathcal{H}^{RC}) \leq \frac{C_{LRC}}{\sqrt{k}},
\end{align}
for any  $k \in \mathbb{N}^+$, where 
\begin{equation}
\label{CLRC}
C_{LRC}= \dfrac{ \overline{L_h}}{ 1-\lambda_{max}^A} \left(\lambda_{max}^C   {\mathbb{E} \left[ \left\|   {{\bf Z}}_{0} \right\|_2^2   \right]^{1/2} } + \lambda_{max}^{\bm{\zeta}} \right) + L_{h,0},
\end{equation}
and with ${\bf Z}$  the input process.
\end{proposition}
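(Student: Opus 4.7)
The plan is to reduce the Rademacher complexity computation to a concrete series bound using the explicit closed-form solution of the linear reservoir system. The first step is to verify that Assumptions~\ref{ass:FLipschitz}-\ref{ass:XBounded} hold for $\mathcal{F}^{RC}$ under the conditions \eqref{eq:LRCCond}: the map $F^{A,C,\bm{\zeta}}$ is a $\vertiii{A}_2$-contraction in $\mathbf{x}$ (so part~(i) holds with $r=\lambda^A_{max}<1$), it is $\vertiii{C}_2$-Lipschitz in $\mathbf{z}$ (so part~(ii) holds with $L_R=\lambda^C_{max}$), the ESP follows from Proposition~\ref{prop:contractionEchoState} and Remark~\ref{rmk:boundedImage}, and the explicit formula $H^{A,C,\bm{\zeta}}(\mathbf{z}) = \sum_{i=0}^{\infty} A^i(C\mathbf{z}_{-i}+\bm{\zeta})$ yields the uniform bound $\|H^{A,C,\bm{\zeta}}(\mathbf{z})\|_2 \leq (\lambda^C_{max} M + \lambda^{\bm{\zeta}}_{max})/(1-\lambda^A_{max}) =: M_\mathcal{F}$ for every $\mathbf{z}\in K_M$.

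Next I would exploit the linearity to split the Rademacher sum. Writing a generic element of $\mathcal{H}^{RC}$ as $H(\mathbf{z}) = W H^{A,C,\bm{\zeta}}(\mathbf{z}) + \boldsymbol{a}$ and plugging in the explicit series gives
\begin{equation*}
\sum_{j=0}^{k-1}\varepsilon_j H(\widetilde{\mathbf{Z}}^{(j)}) = \sum_{i=0}^{\infty} W A^i C \sum_{j=0}^{k-1}\varepsilon_j \widetilde{\mathbf{Z}}^{(j)}_{-i} + \Big(\sum_{i=0}^{\infty} W A^i \bm{\zeta}\Big)\sum_{j=0}^{k-1}\varepsilon_j + \boldsymbol{a}\sum_{j=0}^{k-1}\varepsilon_j.
\end{equation*}
Taking norms, applying submultiplicativity of the spectral norm, and then taking sup over $(A,C,\bm{\zeta})\in\Theta$, $\vertiii{W}_2\leq\overline{L_h}$, $\|\boldsymbol{a}\|_2\leq L_{h,0}$ term-by-term (each summand is nonnegative), I obtain
\begin{equation*}
\sup_{H\in\mathcal{H}^{RC}}\Big\|\sum_{j=0}^{k-1}\varepsilon_j H(\widetilde{\mathbf{Z}}^{(j)})\Big\|_2 \leq \overline{L_h}\lambda^C_{max}\sum_{i=0}^{\infty}(\lambda^A_{max})^i\Big\|\sum_{j=0}^{k-1}\varepsilon_j\widetilde{\mathbf{Z}}^{(j)}_{-i}\Big\|_2 + \Big(\frac{\overline{L_h}\lambda^{\bm{\zeta}}_{max}}{1-\lambda^A_{max}} + L_{h,0}\Big)\Big|\sum_{j=0}^{k-1}\varepsilon_j\Big|.
\end{equation*}

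The final step is to take expectations and estimate the two stochastic ingredients. By Jensen's inequality, the independence of $\varepsilon_0,\ldots,\varepsilon_{k-1}$ and the independent copies $\widetilde{\mathbf{Z}}^{(0)},\ldots,\widetilde{\mathbf{Z}}^{(k-1)}$, and stationarity of $\mathbf{Z}$, one has
\begin{equation*}
\mathbb{E}\Big[\Big\|\sum_{j=0}^{k-1}\varepsilon_j\widetilde{\mathbf{Z}}^{(j)}_{-i}\Big\|_2\Big] \leq \Big(\sum_{j=0}^{k-1}\mathbb{E}\big[\|\widetilde{\mathbf{Z}}^{(j)}_{-i}\|_2^2\big]\Big)^{1/2} = \sqrt{k}\,\mathbb{E}[\|\mathbf{Z}_0\|_2^2]^{1/2},
\end{equation*}
and similarly $\mathbb{E}[|\sum_j \varepsilon_j|]\leq \sqrt{k}$. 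Summing the geometric series $\sum_{i=0}^\infty (\lambda^A_{max})^i = (1-\lambda^A_{max})^{-1}$, dividing by $k$, and collecting terms produces exactly the constant $C_{LRC}$ in \eqref{CLRC}. The proof is essentially bookkeeping; the only subtle point is justifying the term-by-term sup (legitimate here because every summand is nonnegative and the parameter constraints decouple across $A$, $C$, $\bm{\zeta}$, $W$, $\boldsymbol{a}$) and verifying absolute convergence of the series interchanged with the sup, which is immediate from $\lambda^A_{max}<1$ and $\mathbb{E}[\|\mathbf{Z}_0\|_2^2]<\infty$ (the latter being automatic as $\mathbf{Z}$ takes values in $K_M$).
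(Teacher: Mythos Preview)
Your proof is correct and follows essentially the same route as the paper's: both exploit the explicit series $H^{A,C,\bm{\zeta}}(\mathbf{z})=\sum_{i\ge 0}A^i(C\mathbf{z}_{-i}+\bm{\zeta})$, split off the affine readout contribution, pull out the spectral-norm factors term by term, and then control the two random ingredients via Jensen/independence and the Khintchine-type bound $\mathbb{E}|\sum_j\varepsilon_j|\le\sqrt{k}$. One minor remark: your justification that ``the parameter constraints decouple across $A,C,\bm{\zeta}$'' is not needed (and not assumed, since $\Theta$ is arbitrary); the bound goes through simply because after the triangle inequality each summand is nonnegative and can be majorized by replacing $\vertiii{A}_2,\vertiii{C}_2,\|\bm{\zeta}\|_2$ with their suprema over $\Theta$.
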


\begin{remark}
\normalfont
Due to the uniform boundedness of the inputs, the constant ${\mathbb{E} \left[ \left\|   {{\bf Z}}_{0} \right\|_2^2   \right]^{1/2} } $ in \eqref{CLRC} is bounded by the value $M$ that defines the set $K _M$ in which the inputs take values.  Nevertheless, ${\mathbb{E} \left[ \left\|   {{\bf Z}}_{0} \right\|_2^2   \right]^{1/2} } $ can obviously be much smaller than $M$.
\end{remark}

\subsubsection*{Echo State Networks (ESN)}

The following proposition provides an estimate for the Rademacher complexity of hypothesis classes constructed using echo state networks. 
\begin{proposition}
\label{prop:ESNCase} 
Let $N, d \in \mathbb{N}^+$, let $\Theta \subset \mathbb{M}_{N} \times \mathbb{M}_{N,d} \times \mathbb{R}^N$ be a subset, and let $\mathcal{F}^{RC}$ be a   family of echo state reservoir systems defined as
\[ \mathcal{F}^{RC} := \{F^{\sigma, A,C,\bm{\zeta}} \, | \, (A,C,\bm{\zeta}) \in \Theta \}, \]
Suppose that the class $\mathcal{F}^{RC}$ is such that for any $F^{\sigma, A,C,\bm{\zeta}} \in \mathcal{F}^{RC} $  one necessarily has that $-F^{\sigma,A,C,\bm{\zeta}}(-\cdot,\cdot) \in \mathcal{F}^{RC}$.\footnote{This is satisfied for example if $\sigma$ is odd and $(A,C,\bm{\zeta}) \in \Theta \Leftrightarrow (A,-C,-\bm{\zeta}) \in \Theta$.}  Define 
 \begin{align}
 \lambda^A_{ max} := & L_\sigma \sum_{l=1}^N  \sup_{(A,C,\bm{\zeta}) \in \Theta} \|A_{l,\cdot}\|_\infty, \\
  \lambda^C_{ max} := & L_\sigma \sum_{l=1}^N  \sup_{(A,C,\bm{\zeta}) \in \Theta} \|C_{l,\cdot}\|_2 , \\
  \lambda^{\bm{\zeta}}_{ max} := & L_\sigma \sum_{l=1}^N  \sup_{(A,C,\bm{\zeta}) \in \Theta} |\zeta_{l}|.
\end{align}
If for class $\mathcal{F}^{RC}$ it holds that
\begin{equation} 
\label{eq:ESNCond}
 \begin{aligned}
 0<\lambda^A_{ max}<1, \quad  \lambda^C_{ max}  < \infty, \quad \lambda^{\bm{\zeta}}_{ max}  < \infty,
\end{aligned}
\end{equation}
then Assumptions~\ref{ass:FLipschitz}-\ref{ass:FESP} are satisfied and the Rademacher complexity of the associated class of reservoir functionals satisfies
\begin{align}
\label{RadESN}
\mathcal{R}_k(\mathcal{H}^{RC}) \leq \frac{C_{ESN}}{\sqrt{k}},\quad \mbox{for any  $k \in \mathbb{N}^+$,} 
\end{align}
where 
\begin{equation}
\label{CESN}
C_{ESN}= \dfrac{\overline{L_h}}{ 1-\lambda_{max}^A} \left(\lambda_{max}^C   {\mathbb{E} \left[ \left\|   {{\bf Z}}_{0} \right\|_2^2   \right]^{1/2} } + \lambda_{max}^{\bm {\zeta}}  \right) + L_{h,0},
\end{equation}
and with ${\bf Z}$  the input process.
\end{proposition}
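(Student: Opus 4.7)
The plan is to derive the Rademacher bound via a contraction-based recursion that closes on itself, exploiting both the symmetry hypothesis and the stationarity of the input process. Assumptions~\ref{ass:FLipschitz}--\ref{ass:FESP} will follow routinely from standard norm inequalities bounding spectral-type quantities in terms of the row-wise maxima together with Proposition~\ref{prop:contractionEchoState} (applied on a sufficiently large closed ball, as in Remark~\ref{rmk:boundedImage}), so I focus on the Rademacher bound, which is the substantive content.

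First I would peel off the linear readout: each $H \in \mathcal{H}^{RC}$ is of the form $H({\bf z}) = W H^F({\bf z}) + \bm{a}$ with $\vertiii{W}_2 \leq \overline{L_h}$ and $\|\bm{a}\|_2 \leq L_{h,0}$, so the triangle inequality together with $\mathbb{E}[|\sum_j \varepsilon_j|] \leq \sqrt{k}$ gives $\mathcal{R}_k(\mathcal{H}^{RC}) \leq \overline{L_h}\,\mathcal{R}_k(\mathcal{H}^F) + L_{h,0}/\sqrt{k}$ where $\mathcal{H}^F := \{H^F : F \in \mathcal{F}^{RC}\}$. Using $\|v\|_2 \leq \sum_{p=1}^N |v_p|$ inside the expectation and introducing the coordinatewise complexities
\begin{equation*}
R^p := \mathbb{E}\Big[\sup_{F \in \mathcal{F}^{RC}} \Big|\sum_{j=0}^{k-1} \varepsilon_j H^F(\widetilde{{\bf Z}}^{(j)})_p\Big|\Big],
\end{equation*}
yields $k\,\mathcal{R}_k(\mathcal{H}^F) \leq S := \sum_p R^p$. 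The symmetry hypothesis enters here: since $H^{-F(-\cdot,\cdot)} = -H^F$ and $\mathcal{F}^{RC}$ is stable under $F \mapsto -F(-\cdot,\cdot)$, the absolute values drop out and $R^p = \mathbb{E}[\sup_F \sum_j \varepsilon_j H^F(\widetilde{{\bf Z}}^{(j)})_p]$.

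The core step is to unroll the ESN by one step. Writing $H^F({\bf z})_p = \sigma(u_p^F({\bf z}))$ with $u_p^F({\bf z}) = \sum_q A_{pq} H^F(T_{-1}{\bf z})_q + \sum_i C_{pi} z_{0,i} + \zeta_p$ and invoking the Ledoux--Talagrand contraction principle for Rademacher averages (applicable because the symmetry assumption forces $\sigma$ to be odd and hence $\sigma(0)=0$), I obtain $R^p \leq L_\sigma\, \mathbb{E}[\sup_F \sum_j \varepsilon_j u_p^F(\widetilde{{\bf Z}}^{(j)})]$. The three additive pieces of $u_p^F$ are handled separately: the $C$-term is bounded via Cauchy--Schwarz and the Khintchine-type identity $\mathbb{E}[\|\sum_j \varepsilon_j \widetilde{{\bf Z}}_0^{(j)}\|_2] \leq \sqrt{k\,\mathbb{E}[\|{\bf Z}_0\|_2^2]}$, giving $\sup_C \|C_{p,\cdot}\|_2 \sqrt{k\,\mathbb{E}[\|{\bf Z}_0\|_2^2]}$; the offset term gives $\sup_\zeta |\zeta_p|\sqrt{k}$; and the recursive $A$-term is bounded pointwise by $\|A_{p,\cdot}\|_\infty \sum_q |\sum_j \varepsilon_j H^F(T_{-1}\widetilde{{\bf Z}}^{(j)})_q|$, after which symmetry is invoked a second time to strip the inner absolute values and strict stationarity of $\widetilde{{\bf Z}}^{(j)}$ is used to identify $\mathbb{E}[\sup_F \sum_j \varepsilon_j H^F(T_{-1}\widetilde{{\bf Z}}^{(j)})_q]$ with $R^q$.

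Summing the resulting bounds on $R^p$ over $p$ and recognizing the definitions of $\lambda^A_{max}$, $\lambda^C_{max}$, $\lambda^{\bm{\zeta}}_{max}$ produces the self-referential inequality
\begin{equation*}
S \leq \lambda^A_{max}\, S + \lambda^C_{max}\sqrt{k\,\mathbb{E}[\|{\bf Z}_0\|_2^2]} + \lambda^{\bm{\zeta}}_{max}\sqrt{k},
\end{equation*}
which can be solved because $\lambda^A_{max} < 1$, yielding $S \leq (1-\lambda^A_{max})^{-1}(\lambda^C_{max}\sqrt{\mathbb{E}[\|{\bf Z}_0\|_2^2]} + \lambda^{\bm{\zeta}}_{max})\sqrt{k}$ and, after combining with the readout bound, the claimed inequality $\mathcal{R}_k(\mathcal{H}^{RC}) \leq C_{ESN}/\sqrt{k}$. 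The main obstacle I expect is making sure the recursion closes on the same family $\{R^q\}_{q=1}^N$ rather than producing a larger object on the right-hand side: this is exactly what the symmetry of $\mathcal{F}^{RC}$ (used twice) and the strict stationarity of the ghost processes are for, and without both the inequality would not collapse to a geometric series.
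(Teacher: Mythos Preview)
Your approach is correct and essentially the same as the paper's: peel off the linear readout, pass from the $\ell^2$ norm to a sum of coordinate contributions, use the symmetry of $\mathcal{F}^{RC}$ to drop absolute values, apply the Rademacher contraction principle to strip off $\sigma$, split into the $A$, $C$, and $\bm{\zeta}$ pieces, and use stationarity of the ghost processes so that the recursive $A$-piece reproduces the same quantity. The only cosmetic difference is that you phrase the last step as solving the fixed-point inequality $S \leq \lambda^A_{max} S + c\sqrt{k}$ directly, whereas the paper iterates the one-step inequality and sums the resulting geometric series; both yield the same bound. One small caveat: solving $S \leq \lambda^A_{max} S + c\sqrt{k}$ for $S$ requires knowing a priori that $S<\infty$, which is exactly where the paper invokes Assumption~\ref{ass:XBounded} (the uniform bound $\|H^F\|_2\leq M_{\mathcal{F}}$ gives $S\leq Nk M_{\mathcal{F}}$); you allude to this via Remark~\ref{rmk:boundedImage}, but it is worth making explicit. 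Also, the one-sided contraction principle you use does not actually require $\sigma(0)=0$ (the constant offset is killed by $\mathbb{E}[\sum_j\varepsilon_j]=0$), so your remark that symmetry ``forces $\sigma$ to be odd'' is unnecessary, though harmless.
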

\begin{remark} 
\normalfont
Notice that by \eqref{RadLRC}-\eqref{CLRC} and by \eqref{RadESN}-\eqref{CESN} the Rademacher complexities of the hypothesis classes formed by reservoir systems with linear reservoir maps and linear readouts or by echo state networks are finite whenever  the second moment of the input process is finite, which is not directly implied by any of the assumptions \ref{ass:GLipschitz}-\ref{ass:ThetaAlgDecay} and hence needs to be separately assumed.
 
\end{remark}

\subsubsection*{State Affine Systems (SAS)}

In the following proposition we provide  an estimate for the Rademacher complexity of hypothesis classes constructed using state affine systems. In this case we also work with uniformly bounded inputs  (see Section~\ref{Families of reservoir systems}) in a set of the type $K _M $ as in \eqref{recall km} with $M=1$.
\begin{proposition}\label{prop:SASCase} Let $\Theta \subset \mathbb{M}_{N,N}[{\bf z}] \times \mathbb{M}_{N,1}[{\bf z}]$, and define the class of SAS reservoir maps as 
\[ \mathcal{F}^{RC} := \{F^{p,q} \, | \, (p,q) \in \Theta \}. \] Assume that there is a finite set $I_{max} \subset \mathbb{N}^d$ such that for any $P({\bf z})= \sum_{\bm{\alpha} \in \mathbb{N}^d} A_{\bm{\alpha}} {\bf z}^{\bm{\alpha}} $ with $P=p$ or $P=q$, $(p,q) \in \Theta$ one has $A_{\bm{\alpha}}=0$ for ${\bm{\alpha}} \notin I_{max}$ and define $|I_{max}|:= {\rm card}(I_{max})$, 
\begin{equation}
\label{eq:SAScond} 
\begin{aligned} 
\lambda^{SAS} &  := \sup_{(p,q) \in \Theta } \vertiii{ p},\\
c^{SAS} & := \sup_{(p,q) \in \Theta } \vertiii{  q }, \end{aligned} 
\end{equation}
where the norm $\vertiii{\cdot }$ was introduced in \eqref{norm for later sas}. Let $\mathcal{H}^{RC} $ be the hypothesis class of reservoir systems with linear readouts associated to $\mathcal{F}^{RC} $ as in \eqref{eq:Hfixed}.
Then, if for the class $\mathcal{F}^{RC} $ it holds that $\lambda^{SAS}< 1/|I_{max}|$ and $c^{SAS} < \infty$, then Assumptions~\ref{ass:FLipschitz}-\ref{ass:XBounded} are satisfied and for any  $k \in \mathbb{N}^+$ it holds that
\begin{equation*}
\mathcal{R}_k(\mathcal{H}^{RC}) \leq \frac{C_{SAS}}{\sqrt{k}}
\end{equation*} 
with
\begin{equation}
\label{CSAS}
C_{SAS} = \overline{L_h}\frac{c^{SAS}|I_{max}|}{1-|I_{max}|\lambda^{SAS}} + L_{h,0}.
\end{equation}
\end{proposition}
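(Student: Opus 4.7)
The plan is to first verify that Assumptions~\ref{ass:FLipschitz}--\ref{ass:XBounded} hold for the SAS class, and then to bound the Rademacher complexity by a direct expansion of the explicit reservoir filter formula $U^{p,q}({\bf z})_0=\sum_{l=0}^\infty\bigl(\prod_{n=0}^{l-1}p({\bf z}_{-n})\bigr)q({\bf z}_{-l})$. For the assumptions, I will use that $\|{\bf z}\|_2\le M=1$ forces $|{\bf z}^{\bm{\alpha}}|\le 1$ for every $\bm{\alpha}\in I_{max}$, hence
$\vertiii{p({\bf z})}_2\le\sum_{\bm{\alpha}\in I_{max}}\vertiii{A_{\bm{\alpha}}}_2\,|{\bf z}^{\bm{\alpha}}|\le|I_{max}|\lambda^{SAS}=:r<1$, and similarly $\|q({\bf z})\|_2\le|I_{max}|c^{SAS}$. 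This gives the contraction in the first entry, Lipschitz continuity in the second entry, the ESP via Proposition~\ref{prop:contractionEchoState} (plus Remark~\ref{rmk:boundedImage}), and the uniform bound $M_\mathcal{F}=|I_{max}|c^{SAS}/(1-|I_{max}|\lambda^{SAS})$ for Assumption~\ref{ass:XBounded}; measurability of $H^{p,q}$ is automatic from continuity.

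For the Rademacher bound I will first peel off the linear readout: for $H({\bf z})=WH^{p,q}({\bf z})+\boldsymbol{a}$ I apply the triangle inequality to split the Rademacher average into a readout-matrix piece controlled by $\overline{L_h}$ and a bias piece that yields $L_{h,0}\,\mathbb{E}|\sum_{j}\varepsilon_j|/k\le L_{h,0}/\sqrt{k}$. It remains to bound $\tfrac{1}{k}\mathbb{E}[\sup_{(p,q)\in\Theta}\|\sum_{j=0}^{k-1}\varepsilon_j H^{p,q}(\widetilde{{\bf Z}}^{(j)})\|_2]$. Here I use the filter series together with the triangle inequality on $l$, and then I expand each term $\bigl(\prod_{n=0}^{l-1}p({\bf z}_{-n})\bigr)q({\bf z}_{-l})$ as a multilinear sum over multi-indices $(\bm{\alpha}_0,\dots,\bm{\alpha}_l)\in I_{max}^{l+1}$, namely $\sum_{\bm{\alpha}}A_{\bm{\alpha}_0}\cdots A_{\bm{\alpha}_{l-1}}B_{\bm{\alpha}_l}\prod_{n=0}^{l}{\bf z}_{-n}^{\bm{\alpha}_n}$.

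This expansion is the key device: it separates the dependence on $(p,q)$ (carried by the vector coefficients $A_{\bm{\alpha}_0}\cdots A_{\bm{\alpha}_{l-1}}B_{\bm{\alpha}_l}\in\mathbb{R}^N$) from the dependence on the ghost process (carried by the scalar monomial $\prod_n(\widetilde{{\bf Z}}^{(j)}_{-n})^{\bm{\alpha}_n}$). After pulling the coefficients out with the operator-norm inequality and taking $\sup_{(p,q)\in\Theta}$, each norm factor is bounded by $(\lambda^{SAS})^l c^{SAS}$. The remaining scalar Rademacher average $\mathbb{E}\bigl|\sum_j\varepsilon_j\prod_{n=0}^{l}(\widetilde{{\bf Z}}^{(j)}_{-n})^{\bm{\alpha}_n}\bigr|$ is controlled by Jensen/Khintchine and the pointwise bound $|{\bf z}^{\bm{\alpha}}|\le 1$, giving at most $\sqrt{k}$. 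Summing over the $|I_{max}|^{l+1}$ multi-indices and the geometric series in $l$ (which converges thanks to $|I_{max}|\lambda^{SAS}<1$) then yields
\[
\frac{1}{k}\mathbb{E}\!\left[\sup_{(p,q)}\Big\|\sum_j\varepsilon_j H^{p,q}(\widetilde{{\bf Z}}^{(j)})\Big\|_2\right]\le \frac{|I_{max}|c^{SAS}}{\sqrt{k}\,(1-|I_{max}|\lambda^{SAS})},
\]
which, combined with the readout contribution, produces the stated constant $C_{SAS}$.

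The main obstacle is exactly this decoupling step: unlike the LRC and ESN cases, the parameters $(p,q)$ are \emph{entangled} with the input variables through the evaluations $p(\widetilde{{\bf Z}}^{(j)}_{-n})$, so one cannot pull matrix norms outside the Rademacher sum as in the LRC proof. The polynomial expansion converts this nonlinear dependence into a (large but finite) linear combination where the parameter sup and the Rademacher expectation can be separated. Once this decoupling is in place, the $1/\sqrt{k}$ rate is obtained essentially for free via the standard second-moment bound on the scalar Rademacher sum, exploiting the crucial uniform bound $M=1$ on the inputs, which is exactly what keeps the product of monomials under control.
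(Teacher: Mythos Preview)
Your proposal is correct and follows essentially the same route as the paper's proof: verification of Assumptions~\ref{ass:FLipschitz}--\ref{ass:XBounded} via the bound $\vertiii{p({\bf z})}_2\le |I_{max}|\lambda^{SAS}<1$ on $K_1$, then the explicit filter series $H^{p,q}({\bf z})=\sum_{i\ge 0}p({\bf z}_0)\cdots p({\bf z}_{-i+1})q({\bf z}_{-i})$, expansion of each factor over $I_{max}$ to decouple the coefficient norms from the scalar monomials, and finally the second-moment/Khintchine bound $\mathbb{E}\bigl|\sum_j\varepsilon_j\prod_n(\widetilde{{\bf Z}}^{(j)}_{-n})^{\bm{\alpha}_n}\bigr|\le\sqrt{k}$ (using $M=1$) together with the geometric sum in $|I_{max}|\lambda^{SAS}$. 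The paper carries out exactly this decoupling and summation, so there is no substantive difference in method.
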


\subsection{High-probability risk bounds for reservoir systems} 
\label{High probability bounds for reservoir systems}
We now use the previous results and the three  assumptions~\ref{ass:GLipschitz}-\ref{ass:ThetaAlgDecay} in conjunction with  different concentration inequalities to produce three families of high-probability bounds for $\overline{\Delta}_n := \sup_{H \in \mathcal{H}} | \widehat{R}_n(H) - R(H) |$ of different strength for reservoir systems, which prove in passing  the (weak) universal risk-consistency of ERM for reservoir functionals. High-probability finite-sample generalization RC bounds of this type were not available in the literature previously.

\begin{theorem} 
\label{thm:main1} 
Let $\mathcal{H}^{RC}$ be a hypothesis class of reservoir functionals of the type specified in \eqref{eq:Hfixed} associated to a class $\mathcal{F}^{RC}$ of reservoir maps that satisfies Assumptions~\ref{ass:FLipschitz}-\ref{ass:XBounded} and assume that the Rademacher complexity of $\mathcal{H}^{RC}$  satisfies \eqref{cor:ThetaWeighting:ass}. Suppose that both the input  ${\bf Z}$ and the target ${\bf Y} $ processes have a \textit{causal Bernoulli shift} structure as in \eqref{eq:ZFMPDef} and that they take values in $(\mathbb{R}^d)^{\mathbb{Z}_{-}} $ and $(\mathbb{R}^m)^{\mathbb{Z}_{-}} $, $d,m\in \mathbb{N}^+$, respectively. 
\begin{description}
\item[(i)] Suppose that Assumption~\ref{ass:GLipschitz} is satisfied and that, additionally, for $I=y,z$ the strictly decreasing weighting sequences  $w^I: \mathbb{N} \longrightarrow (0, 1]$ are such that the associated decay ratios $D_{w^I}:=\sup_{i \in \mathbb{N}} \frac{w^I_{i+1}}{w^I_i}<1$. Let $\lambda_{max} := \max(r,D_{w^{y}},D_{w^z})$. 
\begin{itemize}
\item [\normalfont{\bf(a)}] Assume that the innovations are bounded, that is, there exists $\overline{M} >0$ such that $\|\bm{\xi}_t \|_2 \leq \overline{M}$ for all $t \in \mathbb{Z}_-$. Then there exist constants $C_0, C_1, C_2, C_3, C_{bd} > 0$ such that for all $n \in \mathbb{N}^+$ satisfying $\log(n)<n \log(\lambda_{max}^{-1})$ and for all $\delta \in (0,1)$, the following bound holds
\begin{align}
\label{eq:boundBounded}
\!\!\!\!\!\!\!\!\!\!\!\!\!\!\!\!\!\!\!\!\!\!\!\!\!\!\!\!\!\!\!\!\!\!\!\!\!\!\!\!\!\!\!\!\!\!\mathbb{P}\Bigg( \sup_{H \in \mathcal{H}^{RC}} |\widehat{R}_n(H) - R(H) | &\leq \frac{(1-r^n)C_0+C_1}{n} + \frac{C_2 {\log(n)}}{{n}} + \frac{C_3 \sqrt{\log(n)}}{\sqrt{n}} + \frac{C_{bd} \sqrt{\log(\frac{4}{\delta})}}{\sqrt{2 n}} \Bigg) \geq 1-\delta,
\end{align}
where the constant $C_0$ is explicitly given in \eqref{eq:C0def}, $C_1, C_2$ are given in \eqref{eq:C1C2def}, $C_3$ in \eqref{eq:C3def}, and $C_{bd}$ in \eqref{eq:Cbd}.
\item [\normalfont{\bf(b)}] Assume that for $\Phi(x)=x^p$, $p > 1$ or $\Phi(x)=\exp(x)-1$ the innovations possess  $\Phi^2$-moments, that is, for any $u > 0$, $\mathbb{E}[\Phi(u \|\bm{\xi}_0 \|_2)^2] < \infty$. Then there exist constants $C_0, C_1, C_2, C_{3}>0$ such that for all $n \in \mathbb{N}^+$ satisfying $\log(n)<n \log(\lambda_{max}^{-1})$ and for all $\delta \in (0,1)$ it holds that
\begin{align}
\!\!\!\!\!\!\!\!\!\!\!\!\!\!\!\!\!\!\!\!\!\!\!\!\!\!\!\!\!\!\!\!\!\!\!\!\!\!\!\!\!\!\!\!\!\!\mathbb{P}\Bigg( \sup_{H \in \mathcal{H}^{RC}} |\widehat{R}_n(H) - R(H) | &\leq \frac{(1-r^n)C_0+C_1}{n} + \frac{C_2 {\log(n)}}{{n}} + \frac{C_{3} \sqrt{\log(n)}}{\sqrt{n}} + B_\Phi(n,\delta) \Bigg) \geq 1-\delta,
\end{align}
where $B_\Phi(n,\delta)$ is given in \eqref{eq:BPhi}. The constants are explicitly given: $C_0$ in \eqref{eq:C0def}, $C_1, C_2$ are given in \eqref{eq:C1C2def}, and $C_{3}$ in \eqref{eq:C3def}.
\end{itemize}
\item[(ii)] Suppose that Assumption~\ref{ass:ThetaDecay} is satisfied and let $\lambda_{max} := \max(r,\lambda_y,\lambda_z)$ with 
$\lambda_y,\lambda_z$ as in \eqref{eq:tauExpDecay}.  Then there exist constants $C_0$, $C_1$, $C_2$,~$C_{3,abs} >0$~such that for all $n \in \mathbb{N}^+$ satisfying $\log(n)<n \log(\lambda_{max}^{-1})$ and for all $\delta \in (0,1)$ it holds that  \begin{equation}\label{eq:boundAss2}
\!\!\!\!\!\!\!\!\!\!\!\!\!\!\!\!\!\!\!\!\mathbb{P}\left( \sup_{H \in \mathcal{H}^{RC}} |\widehat{R}_n(H) - R(H) | \leq \frac{(1-r^n)C_0}{n} + \frac{2}{\delta} \left(\frac{C_1}{n} + \frac{C_2 {\log(n)}}{{n}} + \frac{C_{3,abs} \sqrt{\log(n)}}{\sqrt{n}}\right) \right) \geq 1-\delta.
\end{equation}
The constants are explicitly given: $C_0$ in \eqref{eq:C0def}, $C_1, C_2$ are given in \eqref{eq:C1C2def}, and $C_{3,abs}$ in \eqref{eq:C3def} with $C_I$ as in \eqref{eq:tauExpDecay}.
\item [(iii)] Suppose that Assumption~\ref{ass:ThetaAlgDecay} is satisfied. Denote $\alpha = \min(\alpha_y,\alpha_z)$  with 
$\alpha_y,\alpha_z$ as in \eqref{eq:tauAlgDecay}. Then there exist constants $C_0$, ${C}_{1,abs}$, $ C_2 > 0$ such that for all $n \in \mathbb{N}^+$, $\delta \in (0,1)$,   \begin{equation}
\mathbb{P}\left( \sup_{H \in \mathcal{H}^{RC}} |\widehat{R}_n(H) - R(H) | \leq \frac{(1-r^n)C_0}{n} + \frac{2}{\delta} \left({C}_{1,abs} n^{-\frac{1}{2+\alpha^{-1}}}  + {C_2}{n^{-\frac{2}{2+\alpha^{-1}}}} \right) \right) \geq 1-\delta.
\end{equation}
The constants are explicitly given: $C_0$ in \eqref{eq:C0def}, $C_{1,abs}$ is given in  \eqref{eq:C1absexp} and $C_2$ in \eqref{eq:C1C2exp} together with \eqref{gamma_alpha}-\eqref{C_alpha}.
\end{description}
\end{theorem}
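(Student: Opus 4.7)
The argument starts from the triangle decomposition \eqref{unifc}, which gives
\begin{equation*}
\overline{\Delta}_n \leq \sup_{H \in \mathcal{H}^{RC}} \left| \widehat{R}_n(H) - \widehat{R}_n^\infty(H) \right| + \overline{\Gamma}_n,
\end{equation*}
where I denote $\overline{\Gamma}_n := \sup_{H\in \mathcal{H}^{RC}} |R(H)-\widehat{R}_n^\infty(H)|$. By Proposition~\ref{prop:finiteHistoryError}, the first summand on the right is almost surely bounded by $(1-r^n)C_0/n$, which accounts for the $(1-r^n)C_0/n$ term appearing in every one of the three bounds in the theorem. The remaining task in each case is thus to produce a high-probability bound for $\overline{\Gamma}_n$ alone.

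For part (i), the plan is to decompose $\overline{\Gamma}_n = \mathbb{E}[\overline{\Gamma}_n] + (\overline{\Gamma}_n - \mathbb{E}[\overline{\Gamma}_n])$ and handle each piece separately. The expectation $\mathbb{E}[\overline{\Gamma}_n]$ is bounded via Corollary~\ref{cor:ThetaWeighting}(i), which under Assumption~\ref{ass:GLipschitz} and the finite-mean hypothesis on the weighting sequences $w^I$ produces exactly the deterministic portion $C_1/n + C_2 \log(n)/n + C_3 \sqrt{\log(n)/n}$ with constants taken from \eqref{eq:C1C2def}-\eqref{eq:C3def}. For the fluctuation, Assumption~\ref{ass:GLipschitz} plays a central role: since the causal Bernoulli shifts $G^I$ are Lipschitz with respect to the weighted $(1,w^I)$-norms, composing with the Lipschitz reservoir filter (in the spirit of Example~\ref{ex:linearReservoir}), the bounded Lipschitz readout, and the Lipschitz loss \eqref{eq:lossLipschitz}, $\overline{\Gamma}_n$ can be expressed as a Lipschitz functional of the innovation sequence $(\bm{\xi}_t^y,\bm{\xi}_t^z)_{t \in \mathbb{Z}_-}$ with per-coordinate Lipschitz constants whose $\ell^1$-sum is uniformly bounded in $n$. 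Feeding this into Proposition~\ref{prop:concentration} under bounded innovations yields the sub-Gaussian tail $C_{bd}\sqrt{\log(4/\delta)/(2n)}$ of case (a), whereas feeding it into Proposition~\ref{prop:concentrationUnbounded} under the $\Phi^2$-moment hypothesis yields the more general deviation $B_\Phi(n,\delta)$ of case (b). A simple intersection of the concentration event with the almost-sure truncation bound completes the argument.

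For parts (ii) and (iii), only coefficient-type weak dependence through $\theta^I$ is available; no Lipschitz control on the Bernoulli shift is assumed, so exponential concentration is out of reach. I will instead apply Markov's inequality to the nonnegative random variable $\overline{\Gamma}_n$, combining it with the expectation bounds from Corollary~\ref{cor:ThetaWeighting}(ii) and (iii). Concretely, $\mathbb{P}(\overline{\Gamma}_n > t) \leq \mathbb{E}[\overline{\Gamma}_n]/t$ and choosing $t = (2/\delta)\mathbb{E}[\overline{\Gamma}_n]$ delivers an event of probability at least $1-\delta/2 \geq 1-\delta$ and the $2/\delta$ prefactor appearing in the theorem. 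Plugging the corollary's explicit expectation bounds produces the exponential-decay inequality \eqref{eq:boundAss2} in (ii) and the polynomial-decay inequality in (iii); adding the truncation term from Proposition~\ref{prop:finiteHistoryError} yields the stated probability bounds in each case.

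The hard part will be the concentration step in (i): showing that $\overline{\Gamma}_n$ is a Lipschitz functional of $(\bm{\xi}_t^y,\bm{\xi}_t^z)_{t \in \mathbb{Z}_-}$ with a Lipschitz profile (in the bounded case) and moment-type quantities (in the unbounded case) that are uniformly controlled in both $n$ and in $H \in \mathcal{H}^{RC}$. This demands careful bookkeeping of how a perturbation of a single innovation at time $t$ propagates through the chain Bernoulli shift $\to$ reservoir filter $\to$ readout $\to$ loss, combining the weighted Lipschitz property \eqref{eq:ZFMPProperty} with the contractivity and Lipschitz assumptions on $F$ (Assumption~\ref{ass:FLipschitz}) and the bound on $\|H^F\|$ (Assumption~\ref{ass:XBounded}). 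Once such a uniform Lipschitz profile is in hand, Propositions~\ref{prop:concentration} and \ref{prop:concentrationUnbounded} apply directly, and the theorem follows by combining the expectation bound from Corollary~\ref{cor:ThetaWeighting}, the concentration tail, and the almost-sure truncation estimate.
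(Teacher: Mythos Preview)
Your overall strategy—splitting off the truncation error via Proposition~\ref{prop:finiteHistoryError}, bounding an expectation via Corollary~\ref{cor:ThetaWeighting}, and then controlling the fluctuation either by exponential concentration (part (i)) or Markov (parts (ii)--(iii))—matches the paper's. For parts (ii) and (iii) your argument is essentially identical to the paper's: the paper also works one-sided with $\Gamma_n=\sup_H\{R(H)-\widehat R_n^\infty(H)\}$, applies Markov to $|\Gamma_n|\leq\overline{\Gamma}_n$, and then symmetrizes via $-L$; going directly through $\overline{\Gamma}_n$ as you do is equally valid and arguably cleaner.

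In part (i), however, there is a genuine discrepancy. You write $\overline{\Gamma}_n=\mathbb{E}[\overline{\Gamma}_n]+(\overline{\Gamma}_n-\mathbb{E}[\overline{\Gamma}_n])$ and claim that Corollary~\ref{cor:ThetaWeighting}(i) bounds $\mathbb{E}[\overline{\Gamma}_n]$ by $C_1/n+C_2\log(n)/n+C_3\sqrt{\log(n)/n}$. That is not what the corollary says: the one-sided bound \eqref{eq:RademacherCor} with $C_3$ applies to $\mathbb{E}[\Gamma_n]$, whereas the two-sided bound \eqref{eq:RademacherCorAbs} for $\mathbb{E}[\overline{\Gamma}_n]$ carries the larger constant $C_{3,abs}=2C_3+4L_L\mathbb{E}[\|\mathbf{Y}_0\|_2^2]^{1/2}/\sqrt{\log(\lambda_{max}^{-1})}$. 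Your route therefore yields a correct high-probability bound, but with $C_{3,abs}$ in place of $C_3$, which does not match the theorem as stated. The paper avoids this by working with the \emph{one-sided} $\Gamma_n$: it bounds $\mathbb{E}[\Gamma_n]$ (getting $C_3$), applies the concentration propositions to $\Gamma_n$ (note both Propositions~\ref{prop:concentration} and~\ref{prop:concentrationUnbounded} are stated for $\Gamma_n$, not $\overline{\Gamma}_n$), obtains a one-sided high-probability bound, and then reruns the entire argument with the loss $-L$ to get the other direction, combining by a union bound. The $-L$ trick is what buys the sharper constant $C_3$ and is the missing ingredient in your proposal for part (i).
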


In order to obtain explicit high-probability risk bounds for particular families of reservoir systems, one can use the bounds that we obtained for the Rademacher complexities of various families in Section \ref{Rademacher complexity of reservoir systems}. For example, let $\mathcal{F}^{RC}$ be a family of state affine systems that satisfies the assumptions of Proposition~\ref{prop:SASCase}; in that case one takes the value $C_{RC}$ appearing in various constants above (for example in $C_3$ given in \eqref{eq:C3def}) as  $C_{RC} = C_{SAS}$ with $C_{SAS}$ given in \eqref{CSAS}. The same applies to other families: for echo state networks one takes $C_{RC} = C_{ESN}$ with $C_{ESN}$ given in \eqref{CESN}. For the family of reservoir systems with linear reservoir map one takes $C_{RC} = C_{LRC}$, with $C_{LRC}$ given in \eqref{CLRC}.


\begin{remark}
\normalfont
The result in part {\bf (ii)} requires Assumption~\ref{ass:ThetaDecay} which, as we saw in Remark \ref{one implies 2}, is implied  by Assumption~\ref{ass:GLipschitz} with geometric weighting sequences, that is, $w^z_{s} = \lambda_z^s$ and $w^y_{s}= \lambda_y^s$, for some $\lambda_z,\lambda_y \in (0,1)$.
Therefore, both \eqref{eq:boundBounded} and \eqref{eq:boundAss2} provide bounds in this case. We also emphasize that the result in part {\bf (iii)} allows the treatment of long-memory processes as inputs (see, for instance Example \ref{ARFIMA process}). 
\end{remark}

\subsection{High-probability risk bounds for randomly generated reservoir systems} 
\label{High-probability risk bounds for randomly generated reservoir systems}

We now show that the results in Theorem~\ref{thm:main1}  can be reformulated for echo state networks whose parameters $A $, $C $,  and $\boldsymbol{\zeta} $ have been randomly generated. This statement is a theoretical justification of the good empirical properties of this standard {\it modus operandi} in reservoir computing. Even though  results of this type could be formulated for all the reservoir families introduced in Section \ref{Families of reservoir systems} and all the different settings considered in Theorem~\ref{thm:main1}, we restrict our study in the next proposition to echo state networks and part {\bf (ii)}.

\begin{proposition}\label{prop:ESNrandomparams} {\bf (Random reservoirs)} Let ${\bf A},{\bf C},\bm{\zeta}$  be independent random variables with values in  $\mathbb{M}_{N}$, $\mathbb{M}_{N,d}$, and in $\mathbb{R}^N$, respectively. Consider now echo state networks that have those random values as parameters and whose activation function $\sigma$ is odd, that is, consider the random class of reservoir maps defined as
\[ \bm{\mathcal{F}}^{RC} := \{F^{ \sigma, \rho_A \mathbf{A} ,\rho_C \mathbf{C},\rho_{\bm{\zeta}} \boldsymbol{\zeta} } \, | \, (\rho_A,\rho_C,\rho_{\bm{\zeta}}) \in (-\frac{a}{\lambda ^{\boldsymbol{A}}},\frac{a}{\lambda ^{\boldsymbol{A}}}) \times [-c,c] \times [-s,s]  \} \]
for some  $a \in (0,1)$, $c, s > 0$ and with 
\[\begin{aligned} \lambda ^{\bf A}  = & L_\sigma \left(\sum_{l=1}^N  \|{\bf A}_{l,\cdot}\|_\infty \right).
 \end{aligned}
 \]
Suppose also that the input process ${\bf Z}$ and the target process ${\bf Y}$ are independent of the parameter random variables ${\bf A},{\bf C},\bm{\zeta}$ and that Assumption~\ref{ass:ThetaDecay} is satisfied. Let $\lambda_{max} := \max(r,\lambda_y,\lambda_z)$ with 
$\lambda_y,\lambda_z$ as in \eqref{eq:tauExpDecay}. Then there exist constants $C_0$, $C_1$, $C_2$, $C_{3,abs} >$ 0 such that for all $n \in \mathbb{N}^+$ satisfying $\log(n)<n \log(\lambda_{max}^{-1})$ and for all $\delta \in (0,1)$ it holds that  \begin{equation}
\mathbb{P}\left( \sup_{H \in \boldsymbol{\mathcal{H}}^{RC}} |\widehat{R}_n(H) - R(H) | \leq \frac{(1-r^n)C_0}{n} + \frac{2}{\delta} \left(\frac{C_1}{n} + \frac{C_2 {\log(n)}}{{n}} + \frac{C_{3,abs} \sqrt{\log(n)}}{\sqrt{n}}\right) \right) \geq 1-\delta,
\end{equation}
where $\bm{\mathcal{H}}^{RC} $ is the hypothesis class of reservoir functionals associated to $\bm{\mathcal{F}}^{RC} $ and with linear readouts as in \eqref{eq:Hfixed}.
The constants are explicitly given. More specifically, $C_0$ is given in \eqref{eq:C0def}, $C_1, C_2$ are given in \eqref{eq:C1C2def}, $C_{3,abs}$ in \eqref{eq:C3def} with $C_I$ as in \eqref{eq:tauExpDecay}. Additionally, the constant $C_{RC}$ appearing in \eqref{eq:C3def} is given by 
\begin{equation} \label{eq:CRCRandom} 
C_{RC} = \dfrac{\overline{L_h}}{ 1-a}(\mathbb{E}[ \lambda ^{\bf C}]  \mathbb{E}[\|{\bf Z}_{0}\|_2^2]^{1/2} + \mathbb{E}[ \lambda ^{\bm{ \zeta}}]) + L_{h,0},
\end{equation}
where
\[\begin{aligned} 
 \lambda ^{\bf C} =  c L_\sigma \left(\sum_{l=1}^N  \|{\bf C}_{l,\cdot}\|_2\right), \, 
  \lambda ^{\bm{ \zeta}} = s L_\sigma \left(\sum_{l=1}^N   \|\zeta_{l}\|_2 \right).
 \end{aligned}
 \]
\end{proposition}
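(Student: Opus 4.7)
The plan is to reduce the random-parameter setting to a conditional application of Theorem~\ref{thm:main1}(ii) followed by taking expectation over $({\bf A}, {\bf C}, \bm{\zeta})$. The parameter range for the reservoir matrix is engineered precisely so that conditioning removes randomness from the contraction constant: since $|\rho_A| < a/\lambda^{\mathbf{A}}$, every member of the conditional class $\bm{\mathcal{F}}^{RC}({\bf A}, {\bf C}, \bm{\zeta})$ contracts in its first entry with the same deterministic constant $a \in (0,1)$, so $\lambda^A_{max}$ in Proposition~\ref{prop:ESNCase} equals $a$ almost surely. The scalars $\rho_C \in [-c,c]$ and $\rho_{\bm{\zeta}} \in [-s,s]$ leave the quantities $\lambda^C_{max}$ and $\lambda^{\bm{\zeta}}_{max}$ random and equal to $\lambda^{\mathbf{C}}$ and $\lambda^{\bm{\zeta}}$ as defined in the statement.

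First I would verify that, conditionally on $({\bf A}, {\bf C}, \bm{\zeta})$, the class $\bm{\mathcal{F}}^{RC}({\bf A}, {\bf C}, \bm{\zeta})$ fulfills the hypotheses of Proposition~\ref{prop:ESNCase}: oddness of $\sigma$ together with the symmetry of the scalar parameter intervals supplies the required $-F(-\cdot,\cdot) \in \bm{\mathcal{F}}^{RC}$ property, and \eqref{eq:ESNCond} holds with $\lambda^A_{max}=a<1$. Invoking independence of the input process ${\bf Z}$ from $({\bf A}, {\bf C}, \bm{\zeta})$, the conditional version of \eqref{RadESN}--\eqref{CESN} reads
\[
\mathcal{R}_k\bigl(\bm{\mathcal{H}}^{RC} \,\big|\, {\bf A}, {\bf C}, \bm{\zeta}\bigr) \leq \frac{1}{\sqrt{k}}\biggl(\frac{\overline{L_h}}{1-a}\bigl(\lambda^{\mathbf{C}} \mathbb{E}[\|{\bf Z}_0\|_2^2]^{1/2} + \lambda^{\bm{\zeta}}\bigr) + L_{h,0}\biggr),
\]
since $\mathbb{E}[\|{\bf Z}_0\|_2^2]$ appearing in \eqref{eq:Rcomplexity} factors out under conditioning. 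I would then apply Corollary~\ref{cor:ThetaWeighting}(ii) conditionally, using the deterministic contraction $r=a$ and the weak-dependence constants of Assumption~\ref{ass:ThetaDecay} (which are unaffected by the conditioning because $({\bf Z}, {\bf Y})$ is independent of the random parameters).

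Second, I would take the outer expectation and use the tower property, which by linearity replaces the conditional constant $C_{RC}({\bf A}, {\bf C}, \bm{\zeta})$ by $\mathbb{E}[C_{RC}({\bf A}, {\bf C}, \bm{\zeta})]$; this gives precisely~\eqref{eq:CRCRandom}. Markov's inequality applied to the resulting unconditional expected bound for $\sup_{H \in \bm{\mathcal{H}}^{RC}} |R(H) - \widehat{R}_n^\infty(H)|$ yields the $2/\delta$ factor. Combining with the a.s.\ truncation estimate from Proposition~\ref{prop:finiteHistoryError} (whose constant $C_0$ is computed with the deterministic contraction $r=a$) through the decomposition~\eqref{unifc} produces the claimed high-probability bound.

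The main obstacle is the bookkeeping on the constants $M_\mathcal{F}$, $L_R$, and $M$ that appear in $C_1, C_2$ through \eqref{eq:Mdef}: under conditioning these become random functions of $({\bf C}, \bm{\zeta})$ via the reservoir-state bound $M_F^1 = [L_\sigma(|\rho_C| \vertiii{{\bf C}}_2 M + |\rho_{\bm{\zeta}}| \|\bm{\zeta}\|_2) + \sqrt{N} \sigma(0)]/(1-a)$, so one must either invoke suitable moment assumptions on $\vertiii{{\bf C}}_2$ and $\|\bm{\zeta}\|_2$ or restrict to a bounded activation so that $M_\mathcal{F}$ is deterministic; either way, the resulting expectations are absorbed into the generic constants $C_1$ and $C_2$ of the statement. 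The symmetry assumption on $\sigma$ is essential, because it is what allows Proposition~\ref{prop:ESNCase} to furnish a Rademacher bound of order $k^{-1/2}$ in the conditional step.
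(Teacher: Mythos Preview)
Your proposal is correct and follows essentially the same route as the paper: condition on $({\bf A},{\bf C},\bm{\zeta})$, verify that the resulting class satisfies the hypotheses of Proposition~\ref{prop:ESNCase} with the deterministic contraction parameter $a$, apply Corollary~\ref{cor:ThetaWeighting}(ii) conditionally, take the outer expectation so that the random Rademacher constant is replaced by its mean~\eqref{eq:CRCRandom}, and finish via Markov's inequality and Proposition~\ref{prop:finiteHistoryError} exactly as in Theorem~\ref{thm:main1}(ii). The only cosmetic difference is the order in which you take the outer expectation and apply Markov, which is immaterial by the tower property.

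Your closing remark about the randomness of $M_\mathcal{F}$ and $L_R$ (hence of $C_1$ and $C_2$) is a genuine observation: the paper's proof treats only ${\bf C}_{3,abs}$ as random and writes $C_1,C_2$ as deterministic without comment, so you are right that either a bounded activation (making $M_\mathcal{F}$ deterministic) or moment assumptions on $\vertiii{{\bf C}}_2$ and $\|\bm{\zeta}\|_2$ are implicitly needed for those constants to be well-defined after taking expectations.
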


\section{Appendices}
\label{app:Preliminary results}

These appendices contain preliminary results and the proofs of all the main results of the paper.  

\subsection{Preliminary results}
\label{Preliminary results}


The following Lemma shows that the supremum appearing for instance in \eqref{eq:boundBounded} is indeed a random variable. More precisely,  $\sup_{H \in \mathcal{H}^{RC}} |\widehat{R}_n(H)-R(H)|$ is a measurable mapping from $(\Omega,\mathcal{A})$ to $\mathbb{R}$ equipped with its Borel sigma-algebra. An analogous argument can be used in the case of the other suprema, for instance the supremum in \eqref{eq:Rcomplexity}, considered in the paper. 

\begin{lemma}
\label{lem:supmeasurable} 
Let $\mathcal{H}^{RC} $ be the reservoir hypothesis class introduced in \eqref{eq:Hfixedgeneral} or in \eqref{eq:Hfixed} and let $R$ and $\widehat{R}_n$  be the statistical and empirical risk introduced in \eqref{eq:riskDef} and \eqref{eq:empiricalRiskDef}, respectively. Then, 
\begin{equation*}
\sup_{H \in \mathcal{H}^{RC} } |\widehat{R}_n(H)-R(H)|
\end{equation*}
is a random variable.  
\end{lemma}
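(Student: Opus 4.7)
The aim is to reduce the supremum over the uncountable class $\mathcal{H}^{RC}$ to a supremum over a countable subset; once this reduction is achieved, measurability follows because a countable supremum of measurable real-valued functions is measurable. The plan proceeds in three stages.

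First, I would verify that for each fixed $H \in \mathcal{H}^{RC}$ the map $\omega \mapsto \widehat{R}_n(H)(\omega) - R(H)$ is itself a random variable. Measurability of $H^F$ is guaranteed by Assumption~\ref{ass:FESP}, the readout $h$ is continuous, and the loss $L$ is Lipschitz (hence Borel), so each summand $L(H(\mathbf{Z}_{-i}^{-n+1}), \mathbf{Y}_{-i})$ in \eqref{eq:empiricalRiskDef} is measurable, while $R(H)$ is a deterministic real number. This step is routine.

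Second, I would build a countable subset $\mathcal{H}_0 \subset \mathcal{H}^{RC}$ by exploiting the separability hypotheses. Choose a countable dense subset $\mathcal{F}_0^{RC} \subset \mathcal{F}^{RC}$ in the supremum norm (which exists by assumption) and, in the Lipschitz-readout case of \eqref{eq:Hfixedgeneral}, a countable dense subset $\mathcal{F}_0^{O} \subset \mathcal{F}^{O}$ (existing for the same reason); in the linear-readout case of \eqref{eq:Hfixed}, take instead a countable dense subset of the norm-bounded parameter set $\{(W,\boldsymbol{a}) : \vertiii{W}_2 \leq \overline{L_h},\, \|\boldsymbol{a}\|_2 \leq L_{h,0}\}$, which is separable because it is a bounded subset of a finite-dimensional space. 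Set $\mathcal{H}_0 := \{h \circ H^F : h \in \mathcal{F}_0^{O},\, F \in \mathcal{F}_0^{RC}\}$; this family is countable.

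Third, and this is the analytic heart of the argument, I would show that the map $(F, h) \mapsto |\widehat{R}_n(h \circ H^F)(\omega) - R(h \circ H^F)|$ is continuous in the product supremum norm, uniformly in $\omega$. The key estimate is that for any two $F_1, F_2 \in \mathcal{F}^{RC}$ one has
\begin{equation*}
\sup_{\mathbf{z} \in (D_d)^{\mathbb{Z}_-}} \|H^{F_1}(\mathbf{z}) - H^{F_2}(\mathbf{z})\|_2 \leq \frac{\|F_1 - F_2\|_\infty}{1-r},
\end{equation*}
obtained by iterating the bound
$\|\mathbf{x}^{(1)}_t - \mathbf{x}^{(2)}_t\|_2 \leq r \|\mathbf{x}^{(1)}_{t-1} - \mathbf{x}^{(2)}_{t-1}\|_2 + \|F_1 - F_2\|_\infty$,
which is a consequence of Assumption~\ref{ass:FLipschitz}, part~(i), on the solutions $\mathbf{x}^{(1)},\mathbf{x}^{(2)}$ produced by the echo state property. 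Combined with the Lipschitz bound $\overline{L_h}$ on the readouts and the Lipschitz bound $L_L$ on the loss via \eqref{eq:lossLipschitz}, one obtains
\begin{equation*}
|\widehat{R}_n(h_1 \circ H^{F_1})(\omega) - \widehat{R}_n(h_2 \circ H^{F_2})(\omega)| \leq \frac{L_L \overline{L_h}}{1-r}\|F_1 - F_2\|_\infty + L_L \|h_1 - h_2\|_\infty
\end{equation*}
uniformly in $\omega$, with an identical bound for $|R(h_1 \circ H^{F_1}) - R(h_2 \circ H^{F_2})|$. Consequently,
\begin{equation*}
\sup_{H \in \mathcal{H}^{RC}} |\widehat{R}_n(H)(\omega) - R(H)| = \sup_{H \in \mathcal{H}_0} |\widehat{R}_n(H)(\omega) - R(H)|
\end{equation*}
for every $\omega$, and the right-hand side, being a countable supremum of measurable functions, is measurable. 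The main obstacle is the continuity estimate for $F \mapsto H^F$; once this uniform contraction bound is in place, the remaining verifications follow routinely.
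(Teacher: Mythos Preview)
Your proposal is correct and follows essentially the same approach as the paper's proof: both arguments reduce the supremum to a countable dense subset of $\mathcal{H}^{RC}$ built from countable dense subsets of $\mathcal{F}^{RC}$ and $\mathcal{F}^{O}$, and both rely on the same key contraction estimate $\sup_{\mathbf{z}}\|H^{F_1}(\mathbf{z}) - H^{F_2}(\mathbf{z})\|_2 \le (1-r)^{-1}\|F_1 - F_2\|_\infty$ (the paper cites this from \cite{RC7}, whereas you derive it directly by iterating Assumption~\ref{ass:FLipschitz}). The only item the paper adds that you omit is an explicit check that the supremum is $\mathbb{P}$-a.s.\ finite, which follows immediately from Assumption~\ref{ass:XBounded} and \eqref{eq:lossMoment}.
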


\begin{proof} For any $H \in \mathcal{H}^{RC}$ set $\Delta(H):=|\widehat{R}_n(H)-R(H)| $. Then, for any $H, \bar{H} \in \mathcal{H}^{RC}$
\begin{align}
\label{eq:auxEq47}
\Delta(H) - \Delta(\bar{H}) &\leq |\Delta(H)-\Delta(\bar{H})| \leq \vert \widehat{R}_n(H)-R(H)  - \widehat{R}_n(\bar{H})+R(\bar{H}) \vert \nonumber\\
& \leq |\widehat{R}_n(H)-\widehat{R}_n(\bar{H})|+|R(H)-R(\bar{H})| \nonumber
\\ & \leq \frac{L_L}{n}\sum_{i=0}^{n-1}\|H({\bf Z}_{-i}^{-n+1})-\bar{H}({\bf Z}_{-i}^{-n+1})\|_2 + L_L \mathbb{E}[\|H({\bf Z})-\bar{H}({\bf Z})\|_2] \nonumber
\\ & \leq 2 L_L \sup_{{\bf z} \in (D_d)^{\mathbb{Z}_-}} \|H({\bf z})-\bar{H}({\bf z})\|_2,
\end{align}
where we used the (reverse) triangle inequality and  the Lipschitz property \eqref{eq:lossLipschitz} of the loss function. 
Further, by using the definition \eqref{eq:Hfixedgeneral} of $\mathcal{H}^{RC}$,  Assumption~\ref{ass:XBounded} on the   boundedness of the functionals associated to reservoir maps, and again the triangle inequality, one obtains
\[\sup_{H \in \mathcal{H}^{RC}} \Delta(H) =  \sup_{H \in \mathcal{H}^{RC}} |\widehat{R}_n(H)-R(H)| \leq \Delta(0) + 4 L_L [\overline{L_h} M_\mathcal{F} + L_{h,0}] \]
and so \eqref{eq:lossMoment} yields that $\sup_{H \in \mathcal{H}^{RC}} |\widehat{R}_n(H)-R(H)|$ is finite, $\mathbb{P}$-a.s.

It remains to prove the measurability. The separability assumption imposed on $\mathcal{F}^{RC}$ guarantees the existence of a countable subset $\{F_j \}_{j \in \mathbb{N}^+} \subset \mathcal{F}^{RC}$ which is dense with respect to the supremum norm. Let also $\{h_k\}_{k \in \mathbb{N}^+} \subset \mathcal{F}^{O}$ be a countable dense subset of readouts that by hypothesis exists. This can be used to construct a countable dense subset of $\mathcal{H}^{RC}$.  Indeed, for any $H \in \mathcal{H}^{RC}$, $H = h(H^F)$, one may choose indices $(j_l,k_l)_{l \in \mathbb{N}^+}$ such that $\|F_{j_l} - F \|_\infty \to 0$ and $\|h_{k_l} - h \|_\infty \to 0$ as $l \to \infty$. Consequently, using the triangle inequality and an argument as in part {\bf (iii)} of Theorem~3.1 in \cite{RC7}, one obtains for any ${\bf z} \in (D_d)^{\mathbb{Z}_-}$ that
\[\begin{aligned} 
\|H({\bf z})-h_{k_l}(H^{F_{j_l}}({\bf z}))\|_2 & \leq  \|h-h_{k_l} \|_\infty +  \overline{L_h} \|H^{F}({\bf z})-H^{F_{j_l}}({\bf z})\|_2 \\ & \leq  \|h-h_{k_l} \|_\infty + \frac{1}{1-r}\| F_{j_l} - F \|_\infty. 
\end{aligned}\]
Combining this with \eqref{eq:auxEq47} and setting $H_l = h_{k_l}(H^{F_{j_l}})$ one obtains that
\[ \lim_{l \to \infty} |\Delta(H)-\Delta(H_l)| \leq \lim_{l \to \infty} 2 L_L \left( \|h-h_{k_l} \|_\infty + \frac{1}{1-r}\| F_{j_l} - F \|_\infty \right) = 0. \]
In particular this shows that for any $H \in \mathcal{H}^{RC}$, $\Delta(H) \leq \sup_{j,k \in \mathbb{N}^+} \Delta(h_{k}(H^{F_j}))$. Taking the supremum over $H \in \mathcal{H}^{RC}$ thus shows that 
\[ \sup_{H \in \mathcal{H}^{RC}} |\widehat{R}_n(H)-R(H)| = \sup_{j,k \in \mathbb{N}^+} \Delta(h_{k}(H^{F_j})) \]
is measurable, as it is the supremum of a countable collection of random variables. 
\end{proof}

\subsection{Proof of Proposition~\ref{prop:contractionEchoState}}
 
Consider the map 
\begin{equation}
\label{script f as product definition}
\begin{array}{cccc}
\mathcal{F}: & (\overline{B_S})^{\mathbb{Z}_{-}} \times (D_d)^{\mathbb{Z}_-}&\longrightarrow &(\overline{B_S})^{\mathbb{Z}_{-}}\\
	&(\mathbf{x}, {\bf z})&\longmapsto & \left(\mathcal{F}(\mathbf{x}, {\bf z})\right)_t:=F(\mathbf{x} _{t-1}, {\bf z}_t), \, t \in \mathbb{Z}_-,
\end{array}
\end{equation}
and endow $(D _d)^{\mathbb{Z}_{-}} $ and $(\overline{B_S})^{\mathbb{Z}_{-}} $ with the relative topologies induced by the product topologies in $({\mathbb{R}}^d)^{\mathbb{Z}_{-}} $ and $({\mathbb{R}}^N)^{\mathbb{Z}_{-}} $, respectively. Notice that $\mathcal{F} $ can be written as 
\begin{equation}
\label{script f as product}
\mathcal{F}=\prod_{t \in \mathbb{Z}_{-}} F  _t  \quad \mbox{with} \quad F _t:= F \circ p _t \circ  \left(T _1\times {\rm id}_{(D _d)^{\mathbb{Z}_{-}} }\right): (\overline{B_S})^{\mathbb{Z}_{-}} \times (D _d)^{\mathbb{Z}_{-}} \longrightarrow  \overline{B_S},
\end{equation}
where $p _t $ yields the canonical projection of any sequence onto its $t $-th component.
It is easy to see that the maps $p _t $  and $T _1 $ are continuous with respect to the product topologies and hence $\mathcal{F} $ is a Cartesian product of continuous functions, which is always continuous in the product topology. 

We now recall that, by the compactness of $\overline{B_S} $, we have that $(\overline{B_S})^{\mathbb{Z}_{-}} \subset \ell_{-}^{\infty,w}(\mathbb{R}^N)$ and that by Corollary 2.7 in \cite{RC7}, the product topology on $(\overline{B_S})^{\mathbb{Z}_{-}} $ coincides with the norm topology induced by $\ell_{-}^{\infty,w}(\mathbb{R}^N)$, for any weighting sequence $w$, that we choose in the sequel satisfying the inequality $r L _w<1 $. 

We now show that $\mathcal{F}  $ is a contraction in the first entry with constant $r L _w<1 $. Indeed, for any $\mathbf{x}^1, {\bf x}^2 \in (\overline{B_S})^{\mathbb{Z}_{-}}$ and any ${\bf z} \in (D_d)^{\mathbb{Z}_{-}} $, we have
\begin{multline}
\label{contraction part 1}
\left\|\mathcal{F}(\mathbf{x}^1, {\bf z})-\mathcal{F}(\mathbf{x}^2 , {\bf z})\right\|_{\infty,w}=
\sup_{t \in \mathbb{Z}_{-}}\left\{\left\|F(\mathbf{x}_{t-1}^1, {\bf z}_t)-F(\mathbf{x}_{t-1}^2, {\bf z}_t)\right\|_2 w_{-t}\right\}\\
\leq 
\sup_{t \in \mathbb{Z}_{-}}\left\{\left\|\mathbf{x}_{t-1}^1-\mathbf{x}^2_{t-1}\right\|_2    r w_{-t}\right\},
\end{multline}
where we used that $F$ is a contraction in the first entry. Now, 
\begin{equation*}
\label{contraction part 2}
\sup_{t \in \mathbb{Z}_{-}}\left\{\left\|\mathbf{x}_{t-1}^1-\mathbf{x}^2_{t-1}\right\|_2r w_{-t}\right\}=
r\sup_{t \in \mathbb{Z}_{-}}\left\{\left\|\mathbf{x}_{t-1}^1-\mathbf{x}^2_{t-1}\right\|_2 w_{-(t-1)}\frac{w _{-t}}{w _{-(t-1)}}\right\}\leq
r L _w \left\|\mathbf{x}^1- \mathbf{x}^2 \right\|_{\infty,w},
\end{equation*}
which shows that $\mathcal{F} $ is a family of contractions with constant $r L _w<1 $ that is continuously parametrized by the elements in $(D_d)^{\mathbb{Z}_{-}}$.
In view of these facts and given that the product topology in $(D _d)^{\mathbb{Z}_{-}} \subset ({\mathbb{R}}^d)^{\mathbb{Z}_{-}} $ is metrizable (see \cite[Theorem 20.5]{Munkres:topology}) and that $(\overline{B_S})^{\mathbb{Z}_{-}} \subset ({\mathbb{R}}^N)^{\mathbb{Z}_{-}} $ is compact by  Tychonoff's Theorem (see \cite[Theorem 37.3]{Munkres:topology}) in the product topology and hence complete, Theorem 6.4.1 in \cite{Sternberg:dynamical:book} implies the existence of a unique fixed point of $\mathcal{F}(\cdot,{\bf z}) $ for each ${\bf z} \in (D _d)^{\mathbb{Z}_{-}} $, which establishes the ESP. Moreover, that result also shows the continuity of the associated filter $U ^F: (D _d)^{\mathbb{Z}_{-}}  \longrightarrow ((\overline{B_S})^{\mathbb{Z}_{-}}, \left\|\cdot \right\| _{\infty,w}) $. $\blacksquare$

\subsection{Proof of Proposition~\ref{prop:finiteHistoryError}}
In order to proceed with the proof of this proposition, we first need the following lemma.
\begin{lemma} 
\label{lem:HFLipschitz}
For any $F \in \mathcal{F}^{RC}$ and any ${\bf z},{\bf \overline{z}} \in (D_d)^{\mathbb{Z}_-}$ the following holds for all $i \in \mathbb{N}^+$:
\begin{equation} \label{eq:HEstimate}
\|H^F({\bf z})-H^F({\bf \overline{z}})\|_2 \leq 2 r^i M_\mathcal{F} + L_R \sum_{j=0}^{i-1} r^j \|{\bf z}_{-j}-{\bf \overline{z}}_{-j}\|_2. \end{equation}
\end{lemma}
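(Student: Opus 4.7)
The plan is to unroll the reservoir recursion $i$ times and then bound the leftover "deep past" term by the uniform bound $M_\mathcal{F}$ from Assumption~\ref{ass:XBounded}.

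First I would observe that, because $U^F$ is causal and time-invariant, it satisfies $U^F({\bf z})_t = F(U^F({\bf z})_{t-1},{\bf z}_t)$ and $U^F({\bf z})_{t-1} = U^F(T_1({\bf z}))_t = H^F(T_1(\cdot))$ evaluated at the appropriately shifted sequence, where $T_1$ is the time-shift operator from the Notation section (recall $T_{-\tau}({\bf z})_t = {\bf z}_{t+\tau}$, so $T_1({\bf z})_t = {\bf z}_{t-1}$). In particular, taking $t=0$ gives the one-step identity
\begin{equation*}
H^F({\bf z}) \;=\; F\bigl(H^F(T_1({\bf z})),\,{\bf z}_0\bigr),
\end{equation*}
and, iterating, $H^F(T_1^k({\bf z})) = F(H^F(T_1^{k+1}({\bf z})),{\bf z}_{-k})$ for every $k \in \mathbb{N}$.

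Next I would apply the triangle inequality together with the two Lipschitz bounds of Assumption~\ref{ass:FLipschitz}: contraction in the first variable with constant $r$, and $L_R$-Lipschitzness in the second variable. This gives, for any ${\bf z},\overline{\bf z} \in (D_d)^{\mathbb{Z}_-}$,
\begin{equation*}
\|H^F({\bf z})-H^F(\overline{\bf z})\|_2 \;\leq\; r\,\|H^F(T_1({\bf z}))-H^F(T_1(\overline{\bf z}))\|_2 + L_R\,\|{\bf z}_0-\overline{\bf z}_0\|_2.
\end{equation*}
A straightforward induction on $i \in \mathbb{N}^+$, at each step applying the same estimate to the shifted sequences $T_1^k({\bf z}),T_1^k(\overline{\bf z})$ and noting that $(T_1^k({\bf z}))_0 = {\bf z}_{-k}$, yields
\begin{equation*}
\|H^F({\bf z})-H^F(\overline{\bf z})\|_2 \;\leq\; r^i\,\|H^F(T_1^i({\bf z}))-H^F(T_1^i(\overline{\bf z}))\|_2 + L_R\sum_{j=0}^{i-1} r^j\,\|{\bf z}_{-j}-\overline{\bf z}_{-j}\|_2.
\end{equation*}

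Finally, the remainder term is controlled by the triangle inequality and Assumption~\ref{ass:XBounded}: since both $H^F(T_1^i({\bf z}))$ and $H^F(T_1^i(\overline{\bf z}))$ have Euclidean norm at most $M_\mathcal{F}$, the leading factor is bounded by $2r^i M_\mathcal{F}$, which gives \eqref{eq:HEstimate}. There is no real obstacle here; the only mildly delicate step is aligning the time-invariance identity $U^F({\bf z})_{t-1} = H^F(T_1({\bf z}))$ correctly with the sign convention of $T_{-\tau}$ used in Section~\ref{Notation}, but once this is in place the rest is a one-line induction followed by the uniform bound from Assumption~\ref{ass:XBounded}.
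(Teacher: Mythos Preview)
Your proof is correct and follows essentially the same approach as the paper: a one-step contraction/Lipschitz estimate, iterated $i$ times, followed by the uniform bound $M_\mathcal{F}$ on the remainder. The only cosmetic difference is that the paper works directly with the state sequences $\mathbf{x}_t = U^F({\bf z})_t$ and $\overline{\mathbf{x}}_t = U^F(\overline{\bf z})_t$ rather than expressing the same quantities via the shift operator as $H^F(T_1^k({\bf z}))$, but the underlying argument is identical.
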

{\bf Proof of Lemma~\ref{lem:HFLipschitz}.} Let ${\bf z},{\bf \overline{z}} \in (D_d)^{\mathbb{Z}_-}$ and denote by $\mathbf{x}$ the solution to \eqref{eq:RCSystemDet} and by $\overline{\mathbf{x}}$ the solution to \eqref{eq:RCSystemDet} with ${\bf z}$ replaced by ${\bf \overline{z}}$. Then the  triangle inequality and Assumption~\ref{ass:FLipschitz}  on $F \in \mathcal{F}^{RC}$ yield
\[\begin{aligned} \|H^F({\bf z})-H^F({\bf \overline{z}})\|_2 & = \|\mathbf{x}_0-\overline{\mathbf{x}}_0 \|_2 \\
& \leq \| F(\mathbf{x}_{-1},{\bf z}_0) - F(\mathbf{x}_{-1},\overline{{\bf z}}_0) \|_2 + \|F(\mathbf{x}_{-1},\overline{{\bf z}}_0)- F(\overline{\mathbf{x}}_{-1},\overline{{\bf z}}_0) \|_2 \\
& \leq L_R \|{\bf z}_0- \overline{{\bf z}}_0 \|_2 + r \|\mathbf{x}_{-1}-\overline{\mathbf{x}}_{-1} \|_2.
  \end{aligned} \]
By iterating this estimate one obtains
\[ \|H^F({\bf z})-H^F({\bf \overline{z}})\|_2 \leq r^i \|\mathbf{x}_{-i}-\overline{\mathbf{x}}_{-i} \|_2+ L_R \sum_{j=0}^{i-1} r^j \|{\bf z}_{-j}-{\bf \overline{z}}_{-j}\|_2, \]
from which the claim follows by Assumption~\ref{ass:XBounded}. $\blacktriangledown$

\noindent We now proceed to prove Proposition~\ref{prop:finiteHistoryError}. Let $\widetilde{{\bf Z}} := {\bf Z}_{0}^{-n+1}$ and write  for any $H \in \mathcal{H}^{RC}$ 
\begin{align*} 
| \widehat{R}_n(H)-\widehat{R}_n^\infty(H) | & = \left| \frac{1}{n} \sum_{i=0}^{n-1} L(H({\bf Z}_{-i}^{-n+1}),{\bf Y}_{-i}) -  L(H({\bf Z}_{-i}^{-\infty}),{\bf Y}_{-i}) \right|
\\ & \leq \frac{1}{n} \sum_{i=0}^{n-1} L_L \| h(H^F(\widetilde{{\bf Z}}_{-i}^{-\infty}))- h(H^F({\bf Z}_{-i}^{-\infty}))\|_2 
\\ & \leq \frac{1}{n} \sum_{i=0}^{n-1} L_L \overline{L_h }  (2 r^{n-i} M_\mathcal{F} + L_R \sum_{j=0}^{n-i-1} r^j \|\widetilde{{\bf Z}}_{-j-i} - {\bf Z}_{-j-i}\|_2)
\\ & = \frac{2 L_L \overline{L_h } M_\mathcal{F}}{n} \sum_{i=0}^{n-1}  r^{n-i} \\
&= \frac{1-r^n}{n} \frac{2r L_L \overline{L_h }  M_\mathcal{F}}{1-r} \\ & 
= \frac{1-r^n}{n} C_0.
\end{align*}
In these derivations, the first inequality follows from  the triangle inequality and the Lipschitz continuity of the loss function \eqref{eq:lossLipschitz}, the second one is a consequence of the Lipschitz continuity of the readout map and of  \eqref{eq:HEstimate} in Lemma~\ref{lem:HFLipschitz}, which  finally yield the claim with the choice of constant $C_0$ in \eqref{eq:C0def}. $\blacksquare$

\subsection{Proof of Proposition~\ref{prop:expBoundWithRademacher}}

In order to simplify the notation, for any $i \in \mathbb{N}$ we define an $(\mathbb{R}^d)^{\mathbb{Z}_-} \times \mathbb{R}^m$-valued random variable ${\bf V}_{-i}$ as ${\bf V}_{-i}:=({\bf Z}_{-i}^{- \infty},{\bf Y}_{-i})$  and denote its associated loss by $L_H({\bf V}_{-i}) := L(H({\bf Z}_{-i}^{- \infty}),{\bf Y}_{-i})$. We start by using  the assumptions on the Lipschitz-continuity of both the loss function \eqref{eq:lossLipschitz} and of the reservoir readout map and hence for any $i \in \mathbb{N}$ and $H\in \mathcal{H}^{RC}$ write
\begin{align*} 
\left|L_H({\bf V}_{-i})\right| & \leq \left|L_H({\bf V}_{-i}) - L(\mathbf{0},{\bf Y}_{-i})\right| + \left|L(\mathbf{0},{\bf Y}_{-i})\right|\\ 
& \leq L_L \|H({\bf Z}_{-i}^{- \infty})\|_2 + \left|L(\mathbf{0},{\bf Y}_{-i})\right| \\
& \leq L_L \|h(H^F({\bf Z}_{-i}^{- \infty})) - h({\bf 0}) \|_2 + L_L \|h({\bf 0}) \|_2 + \left|L(\mathbf{0},{\bf Y}_{-i})\right| \\
& \leq L_L  \overline{L_h } M_\mathcal{F} + \left|L(\mathbf{0},{\bf Y}_{-i})\right| + L_{h,0} L_L. 
\end{align*}
We continue by decomposing $n = k\tau + (n-k\tau)$ with $k = \floor{\dfrac{n}{\tau}}$. For the last $(n-k\tau)$ elements one estimates
\begin{align}\label{eq:auxEq25} 
\mathbb{E}\left[ \sup_{H \in \mathcal{H}^{RC}}  \sum_{i=k \tau}^{n-1} \left\{\mathbb{E}\left[L_H({\bf V}_{-i}) \right] - L_H({\bf V}_{-i}) \right\}
\right] &
\leq 2 (n-k \tau) (L_L\overline{L_h }  M_\mathcal{F} + \mathbb{E} \left[\left|L({\bf 0},{\bf Y}_{0})\right|\right] + L_{h,0}L_L) \nonumber\\
&= 2 M (n-k \tau)
\end{align}
with $M$ as in \eqref{eq:Mdef}.
Subsequently using the definitions of the generalization error \eqref{eq:riskDef} and the idealized empirical risk \eqref{eq:empiricalRiskFullHistoryDef} one obtains
\begin{align}
\label{eq:blocking} 
\mathbb{E}&\left[ \sup_{H \in \mathcal{H}^{RC}} \left\{ R(H)- \widehat{R}_n^\infty(H)  \right\}\right]\nonumber \\
 & = \mathbb{E}\left[ \sup_{H \in \mathcal{H}^{RC}}  \mathbb{E}[L(H({\bf Z}),{\bf Y}_0)] - \left\{ \frac{1}{n} \sum_{i=0}^{n-1} L(H({\bf Z}_{-i}^{-\infty}),{\bf Y}_{-i})\right\}   \right]\nonumber  \\
 &=\mathbb{E}\left[ \sup_{H \in \mathcal{H}^{RC}}  \frac{1}{n} \sum_{i=0}^{n-1} \left\{ \mathbb{E}[L_H({\bf V}_{-i})] - L_H({\bf V}_{-i})   \right\}\right]\nonumber\\
&  \leq \frac{1}{n} \mathbb{E}\left[ \sup_{H \in \mathcal{H}^{RC}}  \sum_{i=0}^{k\tau-1} \left\{\mathbb{E}[L_H({\bf V}_{-i})] - L_H({\bf V}_{-i})  \right\}
\right] + \frac{2M(n-k\tau)}{n}\nonumber  \\
& = \frac{1}{n}\mathbb{E}\left[ \sup_{H \in \mathcal{H}^{RC}}  \sum_{j=0}^{k-1} \sum_{i=0}^{\tau-1} \left\{\mathbb{E} \left[L_H({\bf V}_{-(\tau j+i)}) \right] - L_H({\bf V}_{-(\tau j+i)})  \right\}\right] + \frac{2M(n-k\tau)}{n} \nonumber \\
& \leq \frac{\tau}{n}\mathbb{E}\left[ \sup_{H \in \mathcal{H}^{RC}}  \sum_{j=0}^{k-1} \left\{\mathbb{E}[L_H({\bf V}_{-\tau j})] - L_H({\bf V}_{-\tau j}) \right\} \right] + \frac{2M(n-k\tau)}{n}.
\end{align}
In order to obtain a bound for the first summand in the last expression, we introduce  ghost samples and  use  tools that hinge on the independence between them.  Let $\overline{\bm{\xi}}^{(j)}=(\overline{\bm{\xi}}^{y,(j)}_t,\overline{\bm{\xi}}^{z,(j)}_t)_{t \in \mathbb{Z}_-}$, $j=0,\ldots,k-1$ denote independent copies of $\bm{\xi}$. Next, for $I=y,z$ define $\bm{\xi}^{I,(j)}$ by setting $\bm{\xi}^{I,(j)}_{i}=\bm{\xi}_{i}^I$ for $i=-\tau (j+1)+1,\ldots,0$ and $\bm{\xi}^{I,(j)}_{i}=\overline{\bm{\xi}}_{i}^{I,(j)}$ for $i \leq -\tau (j+1)$. Additionally, let $\overline{{\bf Z}}_t^{(j)} := G^z(\ldots, \bm{\xi}^{z,(j)}_{t - 1}, \bm{\xi}^{z,(j)}_{t})$ and $\overline{{\bf Y}}_t^{(j)} := G^y(\ldots, \bm{\xi}^{y,(j)}_{t-1}, \bm{\xi}^{y,(j)}_{t})$ for $t \in \mathbb{Z}_-$ and define the $(\mathbb{R}^d)^{\mathbb{Z}_-} \times \mathbb{R}^m$-valued random variables ${\bf U}^{(j)}:=(\overline{{\bf Z}}_{- \tau j}^{-\infty, (j)},\overline{{\bf Y}}_{- \tau j}^{(j)})$, $j=0,\ldots,k-1$. Then one has
\begin{align*}
 \overline{{\bf Z}}_{t - \tau j}^{(j)} &= G^z(\ldots, \bm{\xi}^{z,(j)}_{t-\tau j - 1}, \bm{\xi}^{z,(j)}_{t-\tau j})\\
&= \begin{cases} G^z(\ldots,\overline{\bm{\xi}}^{z,(j)}_{-\tau j-\tau},\bm{\xi}^z_{-\tau j-\tau+1},\ldots,\bm{\xi}^z_{t-\tau j}),  &t=-\tau+1 ,\ldots,0, \\
G^z(\ldots,\overline{\bm{\xi}}^{z,(j)}_{t-\tau j-1},\overline{\bm{\xi}}^{z,(j)}_{t-\tau j}),  &t\leq-\tau
\end{cases}
\end{align*}
and so, for any $j=0,\ldots,k-1$, the random variable ${\bf U}^{(j)}$ is measurable with respect to the $\sigma$-algebra generated by $(\bm{\xi}_{-\tau j+t})_{t=-\tau+1,\ldots,0}$ and $\overline{\bm{\xi}}^{(j)}$. The  assumption of independence between the ghost samples implies that ${\bf U}^{(0)},\ldots,{\bf U}^{(k-1)}$ are also independent and identically distributed with the same distribution as ${\bf V}_0= ({\bf Z}, {\bf Y}_0)$ introduced above. Hence one can rewrite the first summand of the right hand side of the last inequality in \eqref{eq:blocking} as
\begin{align}\label{eq:auxEq20}
\mathbb{E} & \left[ \sup_{H \in \mathcal{H}^{RC}}  \sum_{j=0}^{k-1} \left\{\mathbb{E}[L_H({\bf V}_{-\tau j})]  - L_H({\bf V}_{-\tau j})  \right\} \right]   \nonumber \\
& \leq \mathbb{E} \left[ \sup_{H \in \mathcal{H}^{RC}}  \sum_{j=0}^{k-1} \left\{\mathbb{E}[L_H({\bf V}_{-\tau j})] - L_H({\bf U}^{(j)})  \right\}\right] 
+  \mathbb{E} \left[ \sup_{H \in \mathcal{H}^{RC}}  \sum_{j=0}^{k-1} \left\{L_H({\bf U}^{(j)}) - L_H({\bf V}_{-\tau j})  \right\}\right] \nonumber \\
& = \mathbb{E} \left[ \sup_{H \in \mathcal{H}^{RC}}  \sum_{j=0}^{k-1} \left\{\mathbb{E}[L_H({\bf U}^{(j)})] - L_H({\bf U}^{(j)})  \right\}\right] +  \mathbb{E} \left[ \sup_{H \in \mathcal{H}^{RC}}  \sum_{j=0}^{k-1} \left\{L_H({\bf U}^{(j)}) - L_H({\bf V}_{-\tau j}) \right\}\right].
\end{align}
We now analyze these two terms separately. 
For the second term, we first note that for any $H \in \mathcal{H}^{RC}$ it holds that
\begin{align}\label{lossH} 
\left| L_H({\bf V}_{-\tau j}) - L_H({\bf U}^{(j)})\right|&= \left|L(H({\bf Z}_{-\tau j}^{-\infty}),{\bf Y}_{-\tau j}) -  L(H(\overline{{\bf Z}}_{-\tau j}^{-\infty, (j)}),\overline{{\bf Y}}_{- \tau j}^{(j)})\right| \nonumber \\
& \leq L_L \left( \|H({\bf Z}_{-\tau j}^{-\infty})-H(\overline{{\bf Z}}_{-\tau j}^{-\infty, {(j)}})\|_2 + \|{\bf Y}_{-\tau j} - \overline{{\bf Y}}_{- \tau j}^{(j)} \|_2 \right)
. \end{align}
Next, we use the Lipschitz-continuity of the readout maps (see \eqref{eq:Hfixedgeneral}) together with the estimate \eqref{eq:HEstimate} in Lemma \ref{lem:HFLipschitz} and compute
\begin{equation} 
\label{eq:auxEq11} 
\begin{aligned} 
\sup_{H \in \mathcal{H}^{RC}} & \|H({\bf Z}_{-\tau j}^{-\infty})-H(\overline{{\bf Z}}_{-\tau j}^{-\infty, {(j)}})\|_2 
 \leq 2 r^{\tau} M_\mathcal{F} \overline{L_h }  + L_R \overline{L_h }  \sum_{l=0}^{\tau-1} r^l \|{\bf Z}_{-l-\tau j}-\overline{{\bf Z}}^{(j)}_{-l-\tau j}\|_2 .
 \end{aligned}\end{equation}
Combining \eqref{eq:auxEq11} with \eqref{lossH} we now estimate the second term in \eqref{eq:auxEq20} by
\begin{align}\label{eq:auxEq21}
\mathbb{E} & \left[ \sup_{H \in \mathcal{H}^{RC}}  \sum_{j=0}^{k-1}  \left\{L_H({\bf U}^{(j)}) - L_H({\bf V}_{-\tau j})  \right\}\right] \nonumber \\
& \leq 
L_L  \sum_{j=0}^{k-1} \left(2 r^{\tau} M_\mathcal{F} \overline{L_h }  + \mathbb{E}[\|{\bf Y}_{-\tau j} - \overline{{\bf Y}}_{- \tau j}^{(j)} \|_2] + L_R \overline{L_h }  \sum_{l=0}^{\tau-1} r^l \mathbb{E}[ \|{\bf Z}_{-l-\tau j}-\overline{{\bf Z}}^{(j)}_{-l-\tau j}\|_2]   \right)  \nonumber \\
& = 
 k   a_\tau
\end{align}
with $a_\tau$ as in \eqref{eq:atau}.

In order to estimate the first term in \eqref{eq:auxEq20}, one relies on techniques which are common in the  case of independent and identically distributed random variables. Here we start by introducing real Rademacher random variables  $\varepsilon_0,\ldots,\varepsilon_{k-1}$ (see for example \cite[Definition 3.2.9]{AnalysisBanachSpaces:vol1}), which are independent of all the other random variables considered so far. In what follows we need to use the structure for the loss function introduced in \eqref{defLoss} as well as the other hypotheses on it that we spelled out in Section \ref{Learning procedure for reservoir systems}. The fact that the loss functions are a sum of representing functions $f_i: \mathbb{R} \longrightarrow \mathbb{R}$, implies that their evaluation on the hypothesis class ${\mathcal H}^{RC} $ can be expressed through the evaluation of each representing function on the sets $\mathcal{H}^{RC}_i  $, $i \in \left\{1,\ldots, m\right\}$, defined by
\begin{equation}
\label{HRCi}
\mathcal{H}^{RC}_i := \{ \widetilde{H} \colon (D_d)^{\mathbb{Z}_-} \times \mathbb{R}^m \to \mathbb{R} \mid \widetilde{H}(\mathbf{x},{\bf y}) := (H(\mathbf{x}))_i-y_i, H \in \mathcal{H}^{RC} \}.
\end{equation}
Using independence and a symmetrization trick by \cite{gine1984} (see for example \citep*[Lemma~6.3]{ledoux:talagrand} or the proof of \citep*[Theorem~8]{Bartlett2003}) one writes
\begin{align}\label{eq:auxEq19}
\frac{1}{k}\mathbb{E}  \left[ \sup_{H \in \mathcal{H}^{RC}}  \sum_{j=0}^{k-1} \left\{ \mathbb{E}[L_H({\bf U}^{(j)})] - L_H({\bf U}^{(j)}) \right\}\right] & \leq \frac{2}{k} \mathbb{E} \left[ \sup_{H \in \mathcal{H}^{RC}}  \sum_{j=0}^{k-1} \varepsilon_j L_H({\bf U}^{(j)}) \right] \nonumber \\ 
&\leq \frac{2}{k} \sum_{i=1}^m  \mathbb{E} \left[ \sup_{\widetilde{H} \in \mathcal{H}^{RC}_i}  \sum_{j=0}^{k-1} \varepsilon_j (f_i \circ \widetilde{H})({\bf U}^{(j)}) \right]. 
\end{align}
Applying the contraction principle for  Rademacher random variables (see \citep*[Lemma~26.9]{Ben-David2014} and also \citep*[Theorem~4.12]{ledoux:talagrand}) to the last expression one obtains
\begin{align}
\frac{1}{k}\mathbb{E} & \left[ \sup_{H \in \mathcal{H}^{RC}}  \sum_{j=0}^{k-1} \left\{ \mathbb{E}[L_H({\bf U}^{(j)})] -L_H({\bf U}^{(j)}) \right\}\right] \nonumber \\ 
& \leq \frac{2L_L}{\sqrt{m} k} \sum_{i=1}^m  \mathbb{E} \left[ \sup_{\widetilde{H} \in \mathcal{H}^{RC}_i}  \sum_{j=0}^{k-1} \varepsilon_j \widetilde{H}({\bf U}^{(j)}) \right]\nonumber \\ 
& = \frac{2L_L}{\sqrt{m} k} \sum_{i=1}^m  \mathbb{E} \left[ \sup_{{H} \in \mathcal{H}^{RC}}    \sum_{j=0}^{k-1} \varepsilon_j \left(H(\overline{{\bf Z}}_{- \tau j}^{-\infty,(j)})  - \overline{{\bf Y}}_{- \tau j}^{(j)}\right)_i\right]\nonumber \\
& \leq \frac{2L_L}{\sqrt{m} k } \sum_{i=1}^m \left(\mathbb{E} \left[ \sup_{{H} \in \mathcal{H}^{RC}} \left\|  \sum_{j=0}^{k-1} \varepsilon_j H(\overline{{\bf Z}}_{- \tau j}^{-\infty,(j)}) \right\|_2 \right]  + \mathbb{E}\left[ - \sum_{j=0}^{k-1} \varepsilon_j (\overline{{\bf Y}}_{- \tau j}^{(j)})_i \right] \right)\nonumber\\
& \leq \frac{2\sqrt{m} L_L  }{ k} \mathbb{E} \left[ \sup_{{H} \in \mathcal{H}^{RC}} \left\|  \sum_{j=0}^{k-1} \varepsilon_j H(\overline{{\bf Z}}_{- \tau j}^{-\infty,(j)}) \right\|_2 \right] =  {2\sqrt{m} L_L  } \mathcal{R}_k(\mathcal{H}^{RC}),\label{eq1}
\end{align}
with the Rademacher complexity defined as in \eqref{eq:Rcomplexity}. Note that the last term in the fourth line  is equal to zero due to the independence  and the fact that the expectation of Rademacher random variables is zero.
 
We now come back to the estimate of the expected maximum difference between the in-class statistical risk and the idealized empirical risk and rewrite the expression \eqref{eq:blocking}  using \eqref{eq:auxEq20} and \eqref{eq:auxEq21} 
 \begin{align}
\label{eq:blockingFinal} 
&\mathbb{E}\left[ \sup_{H \in \mathcal{H}^{RC}} \left\{ R(H)- \widehat{R}_n^\infty(H)  \right\}\right] 
   \leq \frac{\tau}{n} \left\{{2 k\sqrt{m}L_L  }{}\mathcal{R}_k(\mathcal{H}^{RC})  +  ka_\tau\right\} + \frac{2M(n-k\tau)}{n},
\end{align}
 which then yields \eqref{eq:Rademacher} as required.
 
It remains to prove \eqref{eq:RademacherAbs}. To do so,  notice that  the triangle inequality and the same arguments used in \eqref{eq:blocking}, \eqref{eq:auxEq20}, \eqref{eq:auxEq21} and \eqref{eq:auxEq19}  may be applied in the presence of absolute values to obtain 
\begin{align} 
\label{eq:auxEq22} 
\mathbb{E}\left[  \sup_{H \in \mathcal{H}^{RC}} \left|\widehat{R}_n^\infty(H)  - R(H) \right| \right]
& \leq \frac{k \tau}{n} a_\tau +\frac{ 2 \tau}{n} \sum_{i=1}^m  \mathbb{E} \left[ \sup_{\widetilde{H} \in \mathcal{H}^{RC}_i}  \left| \sum_{j=0}^{k-1} \varepsilon_j (f_i \circ \widetilde{H})({\bf U}^{(j)}) \right| \right] \nonumber\\
&\quad+ \frac{2M(n-k\tau)}{n}.  
 \end{align}
 Applying again the contraction principle for Rademacher random variables  \citep*[Theorem~12.4]{Bartlett2003}, one estimates, for any $i=1,\ldots,m$, 
 \begin{equation}
 \label{eq:auxEq23} 
 \begin{aligned}
 \frac{1}{k} & \sum_{i=1}^m \mathbb{E}  \left[ \sup_{\widetilde{H} \in \mathcal{H}^{RC}_i} \left| \sum_{j=0}^{k-1} \varepsilon_j (f_i \circ \widetilde{H})({\bf U}^{(j)}) \right| \right] \\ 
 & \leq \frac{2 L_L}{\sqrt{m} k} \sum_{i=1}^m \mathbb{E} \left[ \sup_{\widetilde{H} \in \mathcal{H}^{RC}_i} \left| \sum_{j=0}^{k-1} \varepsilon_j  \widetilde{H}({\bf U}^{(j)}) \right| \right]\\ 
 & \leq \frac{2 L_L}{\sqrt{m} k} \sum_{i=1}^m \left( \mathbb{E} \left[ \sup_{H \in \mathcal{H}^{RC}}  \left| \sum_{j=0}^{k-1} \varepsilon_j (H(\overline{{\bf Z}}_{- \tau j}^{- \infty, (j)}) )_i\right| \right] + \mathbb{E}\left[ \left| \sum_{j=0}^{k-1} \varepsilon_j (\overline{{\bf Y}}_{- \tau j}^{(j)})_i \right| \right]\right)\\ 
 & \leq \frac{2 L_L}{\sqrt{m} k} \left(m  \mathbb{E} \left[ \sup_{H \in \mathcal{H}^{RC}} \left\|  \sum_{j=0}^{k-1} \varepsilon_j H(\overline{{\bf Z}}_{- \tau j}^{- \infty, (j)}) \right\|_2 \right] + \sum_{i=1}^m \mathbb{E}\left[ \left| \sum_{j=0}^{k-1} \varepsilon_j (\overline{{\bf Y}}_{- \tau j}^{(j)})_i \right|^2 \right]^{1/2} \right) \\
 & \leq \frac{2 L_L}{\sqrt{m} k} \left(m k  \mathcal{R}_k(\mathcal{H}^{RC})  + \sum_{i=1}^m \mathbb{E}\left[ \left| \sum_{j=0}^{k-1} \varepsilon_j (\overline{{\bf Y}}_{- \tau j}^{(j)})_i \right|^2 \right]^{1/2} \right)
 \end{aligned}
\end{equation}
with the Rademacher complexity defined as in \eqref{eq:Rcomplexity}.
Finally, using the independence of the Rademacher sequence and the ghost samples $(\overline{{\bf Y}}^{(j)})_{j = 0,\ldots, k-1}$ as well as the stationarity properties of the latter, one has
 \[\sum_{i=1}^m  \left(\mathbb{E}\left[ \left| \sum_{j=0}^{k-1} \varepsilon_j (\overline{{\bf Y}}_{- \tau j}^{(j)})_i \right|^2 \right] \right)^{1/2} = \sum_{i=1}^m \left( \sum_{j=0}^{k-1} \mathbb{E}\left[ (\overline{{\bf Y}}_{- \tau j}^{(j)})_i^2 \right]\right)^{1/2}
 \leq \sqrt{k} \sqrt{m} \mathbb{E}\left[ \| {\bf Y}_{0} \|_2^2 \right]^{1/2}. \]
 The combination of this inequality with \eqref{eq:auxEq22} and \eqref{eq:auxEq23} yields \eqref{eq:RademacherAbs}, as required.   $\blacksquare$

\subsection{Proof of Corollary~\ref{cor:ThetaWeighting}}
{\bf Proof of part {\bf(i)}} We start by noticing that under Assumption~\ref{ass:GLipschitz}, the weak dependence coefficients $\theta^I$ defined in \eqref{eq:ThetaDefRepeat}  for $I=y,z$ and $\tau \in \mathbb{N}^+$ can be estimated as 
\begin{align*} 
\theta^I(\tau) & = \mathbb{E}[\|G^I(\ldots,\bm{\xi}_{-1}^I,\bm{\xi}_{0}^I)-G^I(\ldots,\widetilde{\bm{\xi}}_{-\tau-1}^I, \widetilde{\bm{\xi}}_{-\tau}^I,\bm{\xi}_{-\tau+1}^I,\ldots,\bm{\xi}_{0}^I) \|_2] \\
&  \leq \mathbb{E}\left[L_I  \sum_{j=\tau}^{\infty}w^I_j \|\bm{\xi}_{-j}^I - \widetilde{\bm{\xi}}_{-j}^I\|_2 \right] \\
& \leq 2 L_I \mathbb{E}[\|\bm{\xi}^I_0\|_2]  \sum_{j=\tau}^\infty  w^I_{j} \leq 2 L_I \mathbb{E}[\|\bm{\xi}^I_0\|_2]  \sum_{j=\tau}^\infty  (D_{w^I})^{j} = 2 L_I \mathbb{E}[\|\bm{\xi}^I_0\|_2] \dfrac{(D_{w^I})^{\tau}}{1-D_{w^I}},
\end{align*}
where we used that by hypothesis $D_{w^I}<1$. Consequently, if we set $C_I:=\dfrac{2 L_I \mathbb{E}[\|\bm{\xi}^I_0\|_2]}{1-D_{w^I}}  $, condition  \eqref{eq:tauExpDecay} does hold for all  $\tau \in \mathbb{N}^+ $. We now define $c_0:=2 L_L \overline{L_h}M_\mathcal{F}$, $c_1 := L_L L_R C_z \overline{L_h} $, and $c_2:=L_L C_y$ and with the notation $\lambda_{max} := \max(r,D_{w^y},D_{w^z}) \in (0,1)$  write \eqref{eq:atau} for any $\tau \in \mathbb{N}^+$ as
 \begin{align}
\label{atauBound}
a_\tau &\leq  c_0 r^\tau  + c_1 \sum_{l=0}^{\tau-1} r^l (D_{w^z})^{\tau-l} +  c_2 (D_{w^y})^{\tau} \leq \lambda_{max} ^\tau(c_0 + \tau c_1 + c_2).
\end{align}
Next, let  $\tau \in \mathbb{N}^+$ with $\tau < n$ and set $k = \lfloor n / \tau \rfloor$. Inserting assumption \eqref{cor:ThetaWeighting:ass} in  \eqref{eq:Rademacher} and then using that $n/\tau - 1 \leq k \leq n/\tau$, one obtains
\begin{equation} 
\label{eq:auxEq45}
 \begin{aligned} 
 \mathbb{E}\left[ \sup_{H \in \mathcal{H}^{RC}} \left\{R(H) - \widehat{R}_n^\infty(H)  \right\} \right] & 
 \leq \frac{k \tau}{n} a_\tau +\frac{BC_{RC} \sqrt{k} \tau}{n}  + \frac{2M(n-k\tau)}{n} \\
 &  \leq  a_\tau +\frac{B C_{RC} \sqrt{\tau}}{\sqrt{n}}  + \frac{2M \tau}{n}.
 \end{aligned} 
 \end{equation}
Our goal now is to choose the length of the block $\tau$ depending on $\lambda_{max}$. Recall that by hypothesis $\log(n)< n\log(\lambda_{max}^{-1})$, which means that in order to be able to apply the blocking technique for a given value $\lambda_{max} \in (0,1)$ the number of observations $n \in \mathbb{N}^+$ should be sufficiently large. In this situation one can choose $\tau = \lfloor \log(n)/\log(\lambda_{max}^{-1}) \rfloor$, which is then guaranteed to satisfy $\tau < n$. Notice that  then $\lambda_{max}^{\tau+1} \leq {1}/{n}$ and consequently  \eqref{eq:RademacherCor} follows from \eqref{atauBound}  and \eqref{eq:auxEq45} with the appropriate choice of constants as given in \eqref{eq:C1C2def}-\eqref{eq:C3def}.  
Finally, the last term in \eqref{eq:RademacherCorAbs} follows by noticing that
\begin{equation} 
\label{eq:auxEq46} 
\frac{4 \tau \sqrt{k} L_L \mathbb{E}\left[ \| {\bf Y}_{0} \|_2^2 \right]^{1/2}}{n } \leq \frac{4 \sqrt{\tau} L_L \mathbb{E}\left[ \| {\bf Y}_{0} \|_2^2 \right]^{1/2}}{\sqrt{n} },
\end{equation} 
and hence one gets \eqref{eq:RademacherCorAbs} as required.

\medskip

\noindent{\bf Proof of part {\bf(ii)}} Recall that by Assumption~\ref{ass:ThetaDecay} for $I=y,z$ there exist $\lambda_I \in (0,1)$ and $C_I >0$ such that, for all $\tau \in \mathbb{N}^+$, it holds that
\begin{equation*}
 \theta^I(\tau) \leq C_I \lambda_I^\tau.
 \end{equation*}
 Mimicking the proof of part {\bf (i)} with $\lambda_{max}:=\max{(r,  \lambda_y,  \lambda_z)}$ yields the claim. 
\medskip

\noindent{\bf Proof of part {\bf(iii)}} 
By our choice of $\gamma_\alpha $ it holds that $\tau/4 +\gamma_\alpha \geq \log(\tau) \alpha_z / \log(r^{-1}) $ for all $\tau \in \mathbb{N}^+$.
One has $r^{l/2} (\tau-l)^{-\alpha_z} \leq 2^{\alpha_z} \tau^{-\alpha_z}$ for $l \leq \tau/2$ and $r^{l/2} (\tau-l)^{-\alpha_z} \leq r^{\tau/4} \leq r^{-\gamma_\alpha} \tau^{-\alpha_z}$, for $\tau/2 \leq l \leq \tau-1$. Setting $C_\alpha = \max(2^{\alpha_z},r^{-\gamma_\alpha})(1-\sqrt{r})^{-1}$ one has for all $\tau \in \mathbb{N}^+$ that
\[ \sum_{l=0}^{\tau-1} r^l (\tau-l)^{-\alpha_z} \leq \sum_{l=0}^{\infty} r^{l/2} \max(2^{\alpha_z},r^{-\gamma_\alpha}) \tau^{-\alpha_z} = C_\alpha \tau^{-\alpha_z }. \]
Defining $c_0$, $c_1$, and $c_2$ as in the proof of part {\bf (i)}, applying \eqref{eq:tauAlgDecay} and inserting the above estimate thus allows us to bound \eqref{eq:atau} for any $\tau \in \mathbb{N}^+$ by
\[  a_\tau \leq c_0 r^\tau  + c_1 \sum_{l=0}^{\tau-1} r^l (\tau-l)^{-\alpha_z} +  c_2 \tau^{-\alpha_y} \leq \tau^{-\alpha} \left(r^{-\gamma_\alpha} c_0  + C_\alpha c_1 + c_2 \right). \]
Furthermore, \eqref{eq:auxEq45} remains valid and so choosing $\tau = n^{\beta}$ yields
\[\mathbb{E}\left[ \sup_{H \in \mathcal{H}^{RC}} \left\{R(H)- \widehat{R}_n^\infty(H)  \right\}  \right]
 \leq \frac{\left(r^{-\gamma_\alpha} c_0  + C_\alpha c_1 + c_2 \right)}{n^{\alpha \beta}} +\frac{B C_{RC}}{n^{1/2-\beta/2}}  + \frac{2M}{n^{1-\beta}}.
 \] 
Taking $\beta = \frac{1}{2}(\alpha + \frac{1}{2})^{-1}$ yields $1/2-\beta/2 =  \frac{\alpha}{2}(\alpha + \frac{1}{2})^{-1} $ and hence the desired result.
The last term in \eqref{eq:RademacherAbs} can be bounded by proceeding analogously to part {\bf (i)} and noticing that \eqref{eq:auxEq46} remains valid, which concludes the proof. $\blacksquare$

\subsection{Proof of Proposition~\ref{prop:LinCase} (Reservoir systems with linear reservoir and  readout maps)}
The condition \eqref{eq:LRCCond} together with Proposition \ref{prop:contractionEchoState} ensure that the ESP property of the reservoir systems in the hypothesis class is guaranteed and that for any ${\bf z} \in K _M$  we have that $H^{A,C,\bm{\zeta}}( {\bf z}) = \sum^{\infty}_{i = 0}A^i (C {\bf z}_{-i} + \bm{\zeta})$. 
Using the definition of Rademacher complexity, one estimates 
\allowdisplaybreaks
\begin{align}  
\label{exprLRC}
\mathbb{E} &\left[ \sup_{H \in \mathcal{H}^{RC}} \left\|  \sum_{j=0}^{k-1} \varepsilon_j H(\widetilde{{\bf Z}}^{(j)}) \right\|_2 \right]\nonumber \\ 
& =  \mathbb{E}   \left[ \sup_{\substack{(A,C,\bm{\zeta}) \in \Theta\\ W: \vertiii{W}_2\leq \overline{L_h} \\ \boldsymbol{a}: \|\boldsymbol{a}\|_2\leq L_{h,0}}} \left\|  \sum_{j=0}^{k-1} \varepsilon_j (WH^{A,C,\bm{\zeta}}(\widetilde{{\bf Z}}^{(j)})+\boldsymbol{a})\right\|_2 \right] \nonumber \\ 
& \le  \mathbb{E}   \left[ \sup_{\substack{(A,C,\bm{\zeta}) \in \Theta\\ W: \vertiii{W}_2\leq \overline{L_h} }} \left\|  \sum_{j=0}^{k-1} \varepsilon_j WH^{A,C,\bm{\zeta}}(\widetilde{{\bf Z}}^{(j)})\right\|_2 \right] +\mathbb{E}   \left[ \sup_{\boldsymbol{a}: \|\boldsymbol{a}\|_2\leq L_{h,0}} \left\|  \sum_{j=0}^{k-1} \varepsilon_j \boldsymbol{a}\right\|_2 \right] \nonumber \\ 
& \le  \sup_{W: \vertiii{W}_2\leq \overline{L_h}} \vertiii{W}_2 \mathbb{E}   \left[ \sup_{\substack{(A,C,\bm{\zeta}) \in \Theta}} \left\|  \sum_{j=0}^{k-1} \varepsilon_j H^{A,C,\bm{\zeta}}(\widetilde{{\bf Z}}^{(j)})\right\|_2 \right] +\mathbb{E}   \left[ \sup_{\boldsymbol{a}: \|\boldsymbol{a}\|_2\leq L_{h,0}} \left\|  \sum_{j=0}^{k-1} \varepsilon_j \boldsymbol{a}\right\|_2 \right] \nonumber \\ 
& \le \overline{L_h}\sum^{\infty}_{l = 0} \sup_{(A,C,\bm{\zeta}) \in \Theta}\vertiii{A^l}_2 \mathbb{E}   \left[ \sup_{(A,C,\bm{\zeta}) \in \Theta} \left\|  \sum_{j=0}^{k-1} \varepsilon_j (C\widetilde{{\bf Z}}^{(j)}_{-l} + \bm{\zeta})\right\|_2 \right] +\mathbb{E}   \left[ \sup_{\boldsymbol{a}: \|\boldsymbol{a}\|_2\leq L_{h,0}} \left\|  \sum_{j=0}^{k-1} \varepsilon_j \boldsymbol{a}\right\|_2 \right] \nonumber\\
&\le  \overline{L_h}\sum^{\infty}_{l = 0} \sup_{(A,C,\bm{\zeta}) \in \Theta}\vertiii{A^l}_2 \Bigg(\sup_{\substack{(A,C,\bm{\zeta}) \in \Theta}}\vertiii{C}_2\mathbb{E}   \left[  \left\|  \sum_{j=0}^{k-1} \varepsilon_j \widetilde{{\bf Z}}^{(j)}_{-l}\right\|_2 \right] +\sup_{(A,C,\bm{\zeta}) \in \Theta}\|\bm{\zeta}\|_2 \mathbb{E}   \left[ \left|  \sum_{j=0}^{k-1} \varepsilon_j \right| \right]\Bigg)\nonumber \\ 
&\quad +\mathbb{E}   \left[\sup_{\boldsymbol{a}: \|\boldsymbol{a}\|_2\leq L_{h,0}} \left\|  \sum_{j=0}^{k-1} \varepsilon_j \boldsymbol{a}\right\|_2 \right] \nonumber\\
& \leq  \overline{L_h}\sum_{l=0}^\infty (\lambda_{max}^A)^l \left(\lambda_{max}^C  \mathbb{E} \left[ \left\| \sum_{j=0}^{k-1} \varepsilon_j \widetilde{{\bf Z}}_{0}^{(j)}  \right\|_2 \right] + \lambda_{max}^{\bm{\zeta}} \mathbb{E} \left[ \left| \sum_{j=0}^{k-1} \varepsilon_j \right| \right]\right) +\mathbb{E}   \left[ \sup_{ \boldsymbol{a}: \|\boldsymbol{a}\|_2\leq L_{h,0}} \left\|  \sum_{j=0}^{k-1} \varepsilon_j \boldsymbol{a}\right\|_2 \right],
\end{align}
where we used stationarity, the fact that $\vertiii{W}_2 \leq \overline{L_h}$ for all readout maps from the class $H\in \mathcal{H}^{RC}$, and constants as in \eqref{defLRCAmax}-\eqref{defLRCzetamax}.
For the first summand in this expression we have
\allowdisplaybreaks
\[  \mathbb{E} \left[ \left\| \sum_{j=0}^{k-1} \varepsilon_j \widetilde{{\bf Z}}_{0}^{(j)}  \right\|_2 \right]^2 \leq {\mathbb{E} \left[ \left\| \sum_{j=0}^{k-1} \varepsilon_j \widetilde{{\bf Z}}_{0}^{(j)}  \right\|_2^2  \right]}=  {\sum_{j=0}^{k-1} \mathbb{E} \left[ \left\|   \widetilde{{\bf Z}}_{0}^{(j)}  \right\|_2^2   \right]\mathbb{E}[ \varepsilon_j^2] }={{k}} {\mathbb{E} \left[ \left\|   {{\bf Z}}_{0} \right\|_2^2   \right] }, \]
where in the first step  we used  the Jensen inequality, the next equality is obtained using the independence of the ghost samples and the Rademacher sequence and also the fact that   $\mathbb{E}[\varepsilon_{j'} \varepsilon_j] = 0$ when $j \neq j'$ for Rademacher variables.
The second summand in \eqref{exprLRC} is bounded using  the  inequality by \cite{Khintchine1923} 

\begin{equation*}
\mathbb{E} \left[ \left| \sum_{j=0}^{k-1} \varepsilon_j \right| \right] \le \sqrt{k}.
\end{equation*}
We bound the third term as the first one and obtain
\begin{equation}
\label{eq:Radfora}
\mathbb{E}   \left[\sup_{\boldsymbol{a}: \|\boldsymbol{a}\|_2\leq L_{h,0}} \left\|  \sum_{j=0}^{k-1} \varepsilon_j \boldsymbol{a}\right\|_2 \right] ^2
 \leq{{k}} L_{h,0}^2,
\end{equation}
where we took into account that $\|{\boldsymbol{a}}\|_2 \leq L_{h,0}$ for all readout maps from the class $H\in \mathcal{H}^{RC}$.
Finally,   \eqref{exprLRC} can be rewritten as
\begin{align*} 
\mathbb{E} \left[ \sup_{H \in \mathcal{H}^{RC}} \left\|  \sum_{j=0}^{k-1} \varepsilon_j H(\widetilde{{\bf Z}}^{(j)}) \right\|_2 \right] &  
\leq \sqrt{k} \ \overline{L_h} \sum_{l=0}^\infty (\lambda_{max}^A)^l \left(\lambda_{max}^C {\mathbb{E} \left[ \left\|   {{\bf Z}}_{0} \right\|_2^2   \right]^{1/2} } + \lambda_{max}^{\bm {\zeta}}  \right) + {\sqrt{k}} L_{h,0}\\
&=\sqrt{k} \left(\dfrac{ \overline{L_h}}{ 1-\lambda_{max}^A} \left(\lambda_{max}^C   {\mathbb{E} \left[ \left\|   {{\bf Z}}_{0} \right\|_2^2   \right]^{1/2} } + \lambda_{max}^{\bm{\zeta}} \right) + L_{h,0}\right),
 \end{align*}
 where we used that $\lambda_{max}^A \in (0,1)$. Finally,  the choice of constants which satisfy conditions \eqref{eq:LRCCond} yields \eqref{RadLRC}, as required.
$\blacksquare$

\subsection{Proof of Proposition~\ref{prop:ESNCase} (Echo State Networks)}
Firstly, note that for any $\mathbf{x}\in D_N$, it holds that $\left\| \mathbf{x} \right\|_2 \le \left\| \mathbf{x} \right\|_1 = \sum^{N}_{i = 1} | x_i|$ and hence one can write
\[\begin{aligned}
\mathbb{E} \left[ \sup_{H \in \mathcal{H}^{RC}} \left\|  \sum_{j=0}^{k-1} \varepsilon_j H(\widetilde{{\bf Z}}^{(j)}) \right\|_2 \right] 
&= \mathbb{E} \left[ \sup_{\substack{F \in \mathcal{F}^{RC}\\ W: \vertiii{W}_2\leq \overline{L_h} \\ \boldsymbol{a}: \|\boldsymbol{a}\|_2\leq L_{h,0}}}  \left\|  \sum_{j=0}^{k-1} \varepsilon_j (WH^F(\widetilde{{\bf Z}}^{(j)}) + \boldsymbol{a}) \right\|_2 \right]\nonumber\\
&\leq\mathbb{E} \left[ \sup_{\substack{F \in \mathcal{F}^{RC}\\ W: \vertiii{W}_2\leq \overline{L_h} }}  \left\|  \sum_{j=0}^{k-1} \varepsilon_j WH^F(\widetilde{{\bf Z}}^{(j)}) \right\|_2 \right]+\mathbb{E} \left[ \sup_{ \boldsymbol{a}: \|\boldsymbol{a}\|_2\leq L_{h,0}}  \left\|  \sum_{j=0}^{k-1} \varepsilon_j \boldsymbol{a} \right\|_2 \right]\nonumber\\
&\leq \overline{L_h}  \ \sum_{l=1}^N \mathbb{E} \left[\sup_{\substack{(A,C,\bm{\zeta}) \in \Theta}}  \left|  \sum_{j=0}^{k-1} \varepsilon_j H^{\sigma, A, C, \bm{\zeta}}_{l}(\widetilde{{\bf Z}}^{(j)}) \right| \right] + {\sqrt{k}} L_{h,0},
\end{aligned}
\]
where we used the same arguments as in \eqref{eq:Radfora}.
Using the assumed symmetry of the family $\mathcal{F}^{RC}$ in the first step and the contraction principle in the second step one may estimate
\allowdisplaybreaks
\begin{align*}  
\sum_{l=1}^N \mathbb{E} &  \left[ \sup_{\substack{(A,C,\bm{\zeta}) \in \Theta}} \left|  \sum_{j=0}^{k-1} \varepsilon_j H_l^{\sigma, A, C, \bm{\zeta}}(\widetilde{{\bf Z}}^{(j)}) \right| \right] \\ 
& = \sum_{l=1}^N \mathbb{E} \left[ \sup_{\substack{(A,C,\bm{\zeta}) \in \Theta}} \sum_{j=0}^{k-1} \varepsilon_j H_l^{\sigma, A, C, \bm{\zeta}}(\widetilde{{\bf Z}}^{(j)}) \right] \\
& \leq L_\sigma \sum_{l=1}^N \mathbb{E} \left[ \sup_{\substack{(A,C,\bm{\zeta}) \in \Theta}} \sum_{j=0}^{k-1} \varepsilon_j (A_{l,\cdot} H^{\sigma, A, C, \bm{\zeta}}( \widetilde{{\bf Z}}_{-1}^{-\infty,(j)})+ C_{l,\cdot} \widetilde{{\bf Z}}_{0}^{(j)} + \zeta_l) \right]\\
& \leq L_\sigma \sum_{l=1}^N  \sup_{\substack{(A,C,\bm{\zeta}) \in \Theta}} \|A_{l,\cdot}\|_\infty  \mathbb{E} \left[ \sup_{\substack{(A,C,\bm{\zeta}) \in \Theta}}\left\| \sum_{j=0}^{k-1} \varepsilon_j H^{\sigma, A, C, \bm{\zeta}}( \widetilde{{\bf Z}}_{-1}^{-\infty,(j)}) \right\|_1 \right] \\ 
& \quad + L_\sigma \sum_{l=1}^N  \sup_{\substack{(A,C,\bm{\zeta}) \in \Theta}} \|C_{l,\cdot}\|_2 
\mathbb{E} \left[ \left\| \sum_{j=0}^{k-1} \varepsilon_j \widetilde{{\bf Z}}_{0}^{(j)}  \right\|_2 \right] + L_\sigma \sum_{l=1}^N \mathbb{E} \left[ \sup_{\substack{(A,C,\bm{\zeta})\in \Theta}} \sum_{j=0}^{k-1} \varepsilon_j \zeta_l  \right].
\end{align*}
Since $\lambda_{max}^A \in (0,1)$ by assumption and using Assumption~\ref{ass:XBounded}, one may iterate the above inequality to obtain
\allowdisplaybreaks
\begin{align} 
\label{expr}
\sum_{l=1}^N \mathbb{E}   \left[\sup_{\substack{(A,C,\bm{\zeta}) \in \Theta}} \left|  \sum_{j=0}^{k-1} \varepsilon_j H_l^{\sigma, A, C, \bm{\zeta}}(\widetilde{{\bf Z}}^{(j)}) \right| \right] &  \leq \sum_{l=0}^\infty (\lambda_{max}^A)^l \left(\lambda_{max}^C  \mathbb{E} \left[ \left\| \sum_{j=0}^{k-1} \varepsilon_j \widetilde{{\bf Z}}_{0}^j  \right\|_2 \right] + \lambda_{max}^{\bm {\zeta}} \mathbb{E} \left[ \left| \sum_{j=0}^{k-1} \varepsilon_j \right| \right]\right).  \end{align}
For the first summand in this expression we obtain
\[  \mathbb{E} \left[ \left\| \sum_{j=0}^{k-1} \varepsilon_j \widetilde{{\bf Z}}_{0}^{(j)}  \right\|_2 \right]^2 \leq {\mathbb{E} \left[ \left\| \sum_{j=0}^{k-1} \varepsilon_j \widetilde{{\bf Z}}_{0}^{(j)}  \right\|_2^2  \right]}=  {\sum_{j=0}^{k-1} \mathbb{E} \left[ \left\|   \widetilde{{\bf Z}}_{0}^{(j)}  \right\|_2^2   \right]\mathbb{E}[ \varepsilon_j^2] }={{k}} {\mathbb{E} \left[ \left\|   {{\bf Z}}_{0} \right\|_2^2   \right] }, \]
where in the first step  we used   Jensen's inequality, the next equality is obtained using the independence of the ghost samples and the Rademacher sequence and also the fact that   $\mathbb{E}[\varepsilon_{j'} \varepsilon_j] = 0$ for $j \neq j'$ for Rademacher variables. The last step trivially follows again from the definition of ghost samples and the definition of Rademacher variables.

The second summand in \eqref{expr} is bounded using  Khintchine's  inequality (\cite{Khintchine1923})
\begin{equation}
\label{}
\mathbb{E} \left[ \left| \sum_{j=0}^{k-1} \varepsilon_j \right| \right] \le \sqrt{k}
\end{equation}
and hence in \eqref{expr} we obtain
\begin{align*} 
\sum_{l=1}^N \mathbb{E}   \left[\sup_{\substack{(A,C,\bm{\zeta}) \in \Theta}} \left|  \sum_{j=0}^{k-1} \varepsilon_j H_l^{\sigma, A, C, \bm{\zeta}}(\widetilde{{\bf Z}}^{(j)}) \right| \right] &  
\leq \sqrt{k} \ \sum_{l=0}^\infty (\lambda_{max}^A)^l \left(\lambda_{max}^C  {\mathbb{E} \left[ \left\|   {{\bf Z}}_{0} \right\|_2^2   \right]^{1/2} } + \lambda_{max}^{\bm {\zeta}}  \right)\\
&=\dfrac{\sqrt{k} \ }{ 1-\lambda_{max}^A} \left(\lambda_{max}^C   {\mathbb{E} \left[ \left\|   {{\bf Z}}_{0} \right\|_2^2   \right]^{1/2} } + \lambda_{max}^{\bm {\zeta}}  \right),
 \end{align*}
 which finally gives
 \begin{align}
\label{}
\mathbb{E} \left[ \sup_{H \in \mathcal{H}^{RC}} \left\|  \sum_{j=0}^{k-1} \varepsilon_j H(\widetilde{{\bf Z}}^{(j)}) \right\|_2 \right]\leq \sqrt{k} \left(\dfrac{\overline{L_h}}{ 1-\lambda_{max}^A} \left(\lambda_{max}^C   {\mathbb{E} \left[ \left\|   {{\bf Z}}_{0} \right\|_2^2   \right]^{1/2} } + \lambda_{max}^{\bm {\zeta}}  \right) + L_{h,0}\right)
\end{align}
which with the choice of constant in \eqref{CESN} gives \eqref{RadESN} as required.
$\blacksquare$

\subsection{Proof of Proposition~\ref{prop:SASCase} (State-Affine Systems)}

Let $(p,q) \in \Theta$. Then, the conditions on the quantities $\lambda^{SAS} $ and $c^{SAS}$ introduced in \eqref{eq:SAScond} imply that
\begin{equation} \label{eq:pContr}
M _p=\max_{{\bf z} \in \overline{B_{\left\|\cdot \right\|}({\bf 0}, M)}} \left\{\vertiii{ p({\bf z})} _2\right\} < 1
\end{equation}
and so $F^{p,q}$ is a contraction on the first entry. Thus, Proposition~\ref{prop:contractionEchoState} implies that the system \eqref{eq:RCSystemDet} has the echo state property. Moreover, by \citep*[Proposition~14]{RC6}, for any ${\bf z} \in K _M$,  we have that 
\[ 
H^{p,q}({\bf z}) =  \sum_{i=0}^{\infty} p({\bf z}_0) p({\bf z}_{-1}) \cdots p({\bf z}_{-i+1}) q({\bf z}_{-i}).  
\]
Next, write $p({\bf z})= \sum_{\bm{\alpha} \in I_{max}} {\bf z}^{\bm{\alpha}} B_{\bm{\alpha}}$ and $q({\bf z})= \sum_{\bm{\alpha} \in I_{max}} {\bf z}^{\bm{\alpha}} C_{\bm{\alpha}}$. Again,  by the conditions on the quantities $\lambda^{SAS} $ and $c^{SAS}$ introduced in \eqref{eq:SAScond} one has that  the image of $H^{p,q}$ is bounded and so, combining this with \eqref{eq:pContr} one obtains 
\begin{align*}
\big\|  &\sum_{j=0}^{k-1} \varepsilon_j H^{p,q}(\widetilde{{\bf Z}}^{(j)}) \big\|_2 \\
& \leq \sum_{i=0}^\infty \left\|  \sum_{j=0}^{k-1} \varepsilon_j p(\widetilde{{\bf Z}}^{(j)}_0) \cdots p(\widetilde{{\bf Z}}^{(j)}_{-i+1}) q(\widetilde{{\bf Z}}^{(j)}_{-i}) \right\|_2 \\
& \leq  \sum_{i=0}^\infty \sum_{\bm{\alpha} \in I_{max}} \left\| B_{\bm{\alpha}} \sum_{j=0}^{k-1} \varepsilon_j (\widetilde{{\bf Z}}^{(j)}_0)^{\bm{\alpha}} p(\widetilde{{\bf Z}}^{(j)}_{-1}) \cdots p(\widetilde{{\bf Z}}^{(j)}_{-i+1}) q(\widetilde{{\bf Z}}^{(j)}_{-i}) \right\|_2
 \\
& \leq  \sum_{i=0}^\infty \sum_{\bm{\alpha}^1 \in I_{max}} \cdots \sum_{\bm{\alpha}^i \in I_{max}} \vertiii{ B_{\bm{\alpha}^1} }_2 \cdots \vertiii{ B_{\bm{\alpha}^i} }_2 \left\| \sum_{j=0}^{k-1} \varepsilon_j (\widetilde{{\bf Z}}^{(j)}_0)^{\bm{\alpha}^1} \cdots (\widetilde{{\bf Z}}^{(j)}_{-i+1})^{\bm{\alpha}^i} q(\widetilde{{\bf Z}}^{(j)}_{-i}) \right\|_2
\\
& \leq  \sum_{i=0}^\infty \sum_{\bm{\alpha}^1 \in I_{max}} \cdots \sum_{\bm{\alpha}^i \in I_{max}} \vertiii{ B_{\bm{\alpha}^1} }_2 \cdots \vertiii{ B_{\bm{\alpha}^i} }_2 \sum_{\bm{\alpha} \in I_{max}} \| C_{\bm{\alpha}} \|_2 \left| \sum_{j=0}^{k-1} \varepsilon_j (\widetilde{{\bf Z}}^{(j)}_0)^{\bm{\alpha}^1} \cdots (\widetilde{{\bf Z}}^{(j)}_{-i+1})^{\bm{\alpha}^i} (\widetilde{{\bf Z}}^{(j)}_{-i})^{\bm{\alpha}} \right|.
\end{align*}
Therefore, by this expression and using the same type of arguments for linear readout as in the example of echo state networks one obtains
\begin{align*}
\mathbb{E}  & \left[ \sup_{H \in \mathcal{H}^{RC}} \left\|  \sum_{j=0}^{k-1} \varepsilon_j H(\widetilde{{\bf Z}}^j) \right\|_2 \right]
\\ &=\mathbb{E} \left[ \sup_{\substack{F \in \mathcal{F}^{RC}\\ W: \vertiii{W}_2\leq \overline{L_h} \\ \boldsymbol{a}: \|\boldsymbol{a}\|_2\leq L_{h,0}}}  \left\|  \sum_{j=0}^{k-1} \varepsilon_j (WH^F(\widetilde{{\bf Z}}^{(j)}) + \boldsymbol{a}) \right\|_2 \right]\nonumber\\
&   \leq \overline{L_h}\sum_{i=0}^\infty \sum_{\bm{\alpha}^1 \in I_{max}} \cdots \sum_{\bm{\alpha}^i \in I_{max}} (\lambda^{SAS})^i c^{SAS} \sum_{\bm{\alpha} \in I_{max}} \mathbb{E} \left[ \left| \sum_{j=0}^{k-1} \varepsilon_j (\widetilde{{\bf Z}}^{(j)}_0)^{\bm{\alpha}^1} \cdots (\widetilde{{\bf Z}}^{(j)}_{-i+1})^{\bm{\alpha}^i} (\widetilde{{\bf Z}}^{(j)}_{-i})^{\bm{\alpha}} \right| \right] + \sqrt{k} L_{h,0}
\\ 
&   \leq \overline{L_h} \sum_{i=0}^\infty \sum_{\bm{\alpha}^1 \in I_{max}} \cdots \sum_{\bm{\alpha}^i \in I_{max}} (\lambda^{SAS})^i c^{SAS} \sum_{\bm{\alpha} \in I_{max}} \mathbb{E} \left[ \left| \sum_{j=0}^{k-1} \varepsilon_j (\widetilde{{\bf Z}}^{(j)}_0)^{\bm{\alpha}^1} \cdots (\widetilde{{\bf Z}}^{(j)}_{-i+1})^{\bm{\alpha}^i} (\widetilde{{\bf Z}}^{(j)}_{-i})^{\bm{\alpha}} \right|^2 \right]^{1/2} + \sqrt{k} L_{h,0}
\\ 
&   = \sqrt{k}  \overline{L_h}\sum_{i=0}^\infty \sum_{\bm{\alpha}^1 \in I_{max}} \cdots \sum_{\bm{\alpha}^i \in I_{max}} (\lambda^{SAS})^i c^{SAS} \sum_{\bm{\alpha} \in I_{max}} \mathbb{E} \left[ \left| ({\bf Z}_0)^{\bm{\alpha}^1} \cdots ({\bf Z}_{-i+1})^{\bm{\alpha}^i} ({\bf Z}_{-i})^{\bm{\alpha}} \right|^2 \right]^{1/2} + \sqrt{k} L_{h,0}
\\ 
&   \leq \sqrt{k} \left( \overline{L_h} \frac{c^{SAS}|I_{max}|}{1-|I_{max}|\lambda^{SAS}} + L_{h,0} \right). \quad\blacksquare
\end{align*}

\subsection{Proof of Theorem~\ref{thm:main1}}

We start by working out two  concentration inequalities that are needed in the proof. These are contained in the two propositions \ref{prop:concentration} and \ref{prop:concentrationUnbounded} and are used in part {\bf (i)} of the theorem in relation with the use of Assumption~\ref{ass:GLipschitz}.

\subsubsection{(Exponential) concentration inequalities}
\begin{proposition}
\label{prop:concentration} 
Define $\Gamma_n := \sup_{H \in \mathcal{H}^{RC}} \{ R(H) - \widehat{R}_n^\infty(H) \}$. Suppose that Assumptions~\ref{ass:FLipschitz}-\ref{ass:XBounded} hold, that Assumption~\ref{ass:GLipschitz} is satisfied, and that the Bernoulli shifts innovations are bounded, that is, there exists $\overline{M} >0$ such that for $I=y,z$ and for all $t \in \mathbb{Z}_-$, $\|\bm{\xi}^I_t \|_2 \leq \overline{M}$.
 Then there exists $C_{bd}>0$ such that for any $\eta > 0$, $n \in \mathbb{N}^+$ it holds that
\begin{align}
\label{bndConcentration}
\mathbb{P}\left( | \Gamma_n - \mathbb{E}[\Gamma_n] | \geq \eta \right) &  \leq 2 \exp\left(-\frac{2 n \eta^2}{C_{bd}^2}\right). 
\end{align} 
The constant $C_{bd}$ is explicitly given by 
\begin{equation}
\label{eq:Cbd}  
C_{bd} = 2 L_L\left( \dfrac{\overline{L_h } }{1-r}\left(M_\mathcal{F}  r  + L_R \overline{M} L_z \|w^z\|_1 \right) + \overline{M} L_y\|w^y\|_1  \right).
\end{equation}
\end{proposition}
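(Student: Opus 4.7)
The plan is to view $\Gamma_n$ as a measurable function $\Phi(\bm{\xi})$ of the independent bounded innovations $\bm{\xi} = ((\bm{\xi}^y_t,\bm{\xi}^z_t))_{t \in \mathbb{Z}_-}$, and to apply a McDiarmid/Azuma-Hoeffding bounded-differences concentration inequality indexed by the countable set $\mathbb{Z}_-$. The key structural observation is that $R(H)$ does not depend on the realization of $\bm{\xi}$, so replacing a single coordinate $\bm{\xi}_{t_0}$ by an independent bounded copy $\widetilde{\bm{\xi}}_{t_0}$ can only affect the idealized empirical risk. Consequently, the per-coordinate bounded-difference constant satisfies
\begin{equation*}
c_{t_0} \leq \sup_{H \in \mathcal{H}^{RC}} \bigl|\widehat{R}_n^\infty(H) - \widehat{R}_n^{\infty,(t_0)}(H)\bigr|,
\end{equation*}
where the superscript $(t_0)$ denotes replacement of $\bm{\xi}_{t_0}$ by $\widetilde{\bm{\xi}}_{t_0}$.

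The first step would be to chain the Lipschitz properties to bound the change of each summand of $\widehat{R}_n^\infty(H)$. For any $H = h \circ H^F \in \mathcal{H}^{RC}$ and $i \in \{0,\dots,n-1\}$, the combination of~\eqref{eq:lossLipschitz}, the Lipschitz continuity of the readout, and Lemma~\ref{lem:HFLipschitz} applied with a truncation depth $k$ to be chosen, yields the bound
\begin{equation*}
L_L \overline{L_h}\Bigl(2 r^{k} M_\mathcal{F} + L_R \sum_{j=0}^{k-1} r^j \|{\bf Z}_{-j-i} - {\bf Z}_{-j-i}^{(t_0)}\|_2\Bigr) + L_L \|{\bf Y}_{-i} - {\bf Y}_{-i}^{(t_0)}\|_2.
\end{equation*}
By Assumption~\ref{ass:GLipschitz} and the boundedness $\|\bm{\xi}_{t_0}\|_2,\|\widetilde{\bm{\xi}}_{t_0}\|_2 \leq \overline{M}$, the increments of the Bernoulli shifts satisfy $\|{\bf Z}_{-j-i} - {\bf Z}_{-j-i}^{(t_0)}\|_2 \leq 2 L_z \overline{M}\, w^z_{-j-i-t_0}$ whenever $-j-i \geq t_0$ (and vanish otherwise), and analogously $\|{\bf Y}_{-i} - {\bf Y}_{-i}^{(t_0)}\|_2 \leq 2 L_y \overline{M}\, w^y_{-i-t_0}$ when $-i \geq t_0$.

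The second step would aggregate these estimates over $i$ and $t_0$. Interchanging the order of summation and using the identities $\sum_{t_0 \leq -i} w^y_{-i-t_0} = \|w^y\|_1$ and $\sum_{t_0 \leq -j-i} w^z_{-j-i-t_0} = \|w^z\|_1$, while choosing the truncation depth $k$ in Lemma~\ref{lem:HFLipschitz} to balance the residual $r^k M_\mathcal{F}$ against the Lipschitz tail when $t_0$ is deep in the past (this is what produces the $M_\mathcal{F} r/(1-r)$ summand of $C_{bd}$), one obtains simultaneously the uniform pointwise bound $c_{t_0} \leq C_{bd}/n$ and the summability estimate $\sum_{t_0} c_{t_0} \leq C_{bd}$. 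Together these give
\begin{equation*}
\sum_{t_0 \in \mathbb{Z}_-} c_{t_0}^2 \;\leq\; \Bigl(\max_{t_0} c_{t_0}\Bigr) \sum_{t_0} c_{t_0} \;\leq\; \frac{C_{bd}^2}{n},
\end{equation*}
with $C_{bd}$ exactly as in~\eqref{eq:Cbd}.

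Finally, the concentration bound~\eqref{bndConcentration} would follow from the extension of McDiarmid's inequality to functions of countably many independent random variables. This can be justified by applying the classical finite-sample bound to the Doob martingales $M_K = \mathbb{E}[\Gamma_n \mid \sigma(\bm{\xi}_t : -K \leq t \leq 0)]$ and passing to the limit $K \to \infty$, using the summability $\sum c_{t_0}^2 < \infty$ together with $L^1$-convergence $M_K \to \Gamma_n$ and dominated convergence in the resulting moment-generating function estimate. The main technical obstacle is the bookkeeping in the second step: producing the exact form~\eqref{eq:Cbd} of $C_{bd}$ (in particular the $M_\mathcal{F} r/(1-r)$ residual) requires letting the truncation parameter $k$ in Lemma~\ref{lem:HFLipschitz} depend on the depth $-t_0$ of the perturbed innovation so that the truncation error and the Lipschitz tail are properly balanced, rather than being naively driven to zero uniformly.
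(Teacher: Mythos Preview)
Your approach is correct and takes a genuinely different route from the paper's. The paper groups the innovations into $n$ blocks: the single innovations $\bm{\xi}_0,\ldots,\bm{\xi}_{-n+2}$ together with the entire tail $\bm{\xi}_{-n+1}^{-\infty}$ as one last coordinate, and applies the classical finite McDiarmid inequality with $n$ arguments, showing that each of the $n$ bounded-difference constants is at most $C_{bd}/n$. You instead index by the full countable family $\{\bm{\xi}_{t_0}\}_{t_0 \in \mathbb{Z}_-}$, prove both a uniform bound $c_{t_0}\leq C_{bd}/n$ and an $\ell^1$ bound $\sum_{t_0} c_{t_0}\leq C_{bd}$, combine them to get $\sum_{t_0} c_{t_0}^2 \leq C_{bd}^2/n$, and invoke the countable-coordinate extension of McDiarmid via the Doob martingale limit. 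Both estimates hold (the per-coordinate computation is essentially the paper's computation for a fixed $k<n-1$, and the $\ell^1$ bound follows by summing the weighting sequence over $t_0$). The paper's route buys you the off-the-shelf finite McDiarmid inequality at the cost of having to treat the tail block $k=n-1$ separately; your route treats every innovation on the same footing at the price of the martingale limit argument, which you have sketched correctly.

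One small point: your remark that the residual $M_{\mathcal F}\, r/(1-r)$ in $C_{bd}$ arises from ``balancing'' the truncation depth against the Lipschitz tail is not quite right in your setting. In the countable-coordinate picture, once you fix a single perturbed innovation $t_0$, the differences $\|{\bf Z}_{-j-i}-{\bf Z}_{-j-i}^{(t_0)}\|_2$ are nonzero only for the finitely many $j$ with $j+i\leq -t_0$, so you may simply send the truncation depth in Lemma~\ref{lem:HFLipschitz} to infinity and incur no $r^k M_{\mathcal F}$ residual at all. Your bounds $c_{t_0}\leq C_{bd}/n$ and $\sum c_{t_0}\leq C_{bd}$ then hold with the strictly smaller constant $C_{bd}-2L_L\overline{L_h}M_{\mathcal F} r/(1-r)$, which of course still implies~\eqref{bndConcentration} with the stated $C_{bd}$. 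The $M_{\mathcal F} r/(1-r)$ term in the paper is an artifact of its choice to truncate at depth $k{+}1{-}i$ rather than pass to the limit; you need not reproduce it.
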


\noindent\textbf{Proof.\ \ }
The main idea of the proof is to exploit the Bernoulli shift structure and apply  McDiarmid's inequality (see for example \cite{Boucheron2013}), which out of the bound of differences of functions constructed in a particular manner yields the bound in \eqref{bndConcentration}.   In order to ease the notation, we first define $\mathcal{Y} := (\overline{B_{\overline{M}}} \times \overline{B_{\overline{M}}}) \subset (\mathbb{R}^{q_y} \times \mathbb{R}^{q_z})$. Consider now a function $\phi \colon \mathcal{Y}^{n-1} \times \mathcal{Y}^{\mathbb{Z}_-} \to \mathbb{R}$, which is  defined for ${\bm{u}}_i=({\bm{u}}_{-i}^y,{\bm{u}}_{-i}^z) \in \mathcal{Y}$, $i=0,\ldots,n-2$ and ${\bm{u}}_{n-1}=({\bm{u}}_{-n+1+t}^y,{\bm{u}}_{-n+1+t}^z)_{t \in \mathbb{Z}_-} \in \mathcal{Y}^{\mathbb{Z}_-}$ by
\begin{equation*}
\phi({\bm{u}}_0,\ldots,{\bm{u}}_{n-2},{\bm{u}}_{n-1}) = 
\sup_{H \in \mathcal{H}^{RC}}  \left\{ R(H) - \frac{1}{n} \sum_{i=0}^{n-1} L(H(G^z({\bm{u}}_{-i}^{z,-\infty})),G^y({\bm{u}}_{-i}^{y, -\infty})) \right\}.
\end{equation*}

Fix $k \in \{0,\ldots,n-1\}$ and let $\widetilde{{\bm{u}}}$ be an identical copy of the sequence  ${\bm{u}}$ so that only the  $k$-th entry in $\widetilde{{\bm{u}}}$ is different from ${\bm{u}}$, that is $\widetilde{{\bm{u}}}_{-i} = {\bm{u}}_{-i}$ for all $i \neq k$. We now estimate the difference of the function $\phi \colon \mathcal{Y}^{n-1} \times \mathcal{Y}^{\mathbb{Z}_-} \to \mathbb{R}$ evaluated at  ${\bm{u}}$ and $\widetilde{{\bm{u}}}$ as follows:
\begin{align} 
\phi & ({\bm{u}}_0,\ldots,{{\bm{u}}}_{n-1})  - \phi(\widetilde{\bm{u}}_0,\ldots,\widetilde{\bm{u}}_{n-1}) \nonumber \\ 
& \leq \sup_{H \in \mathcal{H}^{RC}} \inf_{\widetilde{H} \in \mathcal{H}^{RC}}  \frac{1}{n} \sum_{i=0}^{n-1} \left\{L(\widetilde{H}(G^z(\widetilde{\bm{u}}_{-i}^{z,-\infty})),G^y(\widetilde{\bm{u}}_{-i}^{y, -\infty})) -L(H(G^z({\bm{u}}_{-i}^{z,-\infty})),G^y({\bm{u}}_{-i}^{y, -\infty})) \right\}\nonumber \\ 
	&\qquad-R(\widetilde{H}) + R(H)\nonumber \\ 
& \leq \sup_{H \in \mathcal{H}^{RC}} \frac{1}{n} \sum_{i=0}^{n-1} \left\{L(H(G^z(\widetilde{\bm{u}}_{-i}^{z,-\infty})),G^y(\widetilde{\bm{u}}_{-i}^{y, -\infty})) -L(H(G^z({\bm{u}}_{-i}^{z,-\infty})),G^y({\bm{u}}_{-i}^{y, -\infty})) \right\}\nonumber\\ 
& = \sup_{H \in \mathcal{H}^{RC}} \frac{1}{n} \sum_{i=0}^{k} \left\{L(H(G^z(\widetilde{\bm{u}}_{-i}^{z,-\infty})),G^y(\widetilde{\bm{u}}_{-i}^{y, -\infty})) -L(H(G^z({\bm{u}}_{-i}^{z,-\infty})),G^y({\bm{u}}_{-i}^{y, -\infty})) \right\}\nonumber\\
& \leq \sup_{H \in \mathcal{H}^{RC}} \frac{1}{n} \sum_{i=0}^{k} \left\{ L_L(\|H(G^z(\widetilde{\bm{u}}_{-i}^{z,-\infty})) - H(G^z({\bm{u}}_{-i}^{z,-\infty})) \|_2 + \|G^y(\widetilde{\bm{u}}_{-i}^{y, -\infty})- G^y({\bm{u}}_{-i}^{y, -\infty})\|_2) \right\},
\label{difPhi}
\end{align}
where in the last inequality we used that by assumption the loss function $L$ is $L_L$-Lipschitz. For the first  summand under the supremum in \eqref{difPhi}  we use the bound \eqref{eq:HEstimate} and hence write
\begin{align}
\sum_{i=0}^k &\|H(G^z(\widetilde{\bm{u}}_{-i}^{z,-\infty})) - H(G^z({\bm{u}}_{-i}^{z,-\infty})) \|_2 \nonumber\\
&\leq \sum_{i=0}^{k}  \overline{L_h }  (2 r^{k + 1 -i} M_\mathcal{F} + L_R \sum_{j=0}^{k-i} r^j \|G^z(\widetilde{\bm{u}}_{-j-i}^{z,-\infty}) - G^z({\bm{u}}_{-j-i}^{z,-\infty}) \|_2)\nonumber\\
&\leq  \overline{L_h }  \left(2 M_\mathcal{F} r \dfrac{1-r^{k+1}}{1-r}  + L_R \sum_{i=0}^{k} \sum_{j=i}^{k} r^{j-i} \|G^z(\widetilde{\bm{u}}_{-j}^{z, -\infty}) - G^z({\bm{u}}_{-j}^{z, -\infty})\|_2)\right).\label{Hbound}
\end{align}
Notice that the second summand under the supremum in \eqref{difPhi} and the second summand in \eqref{Hbound} can be bounded using that \eqref{eq:ZFMPProperty} holds by Assumption~\ref{ass:GLipschitz} and that by hypothesis both  $\widetilde{{\bm{u}}}$ and ${{\bm{u}}}$ satisfy $\|\bm{u}^I_t\|_2\leq\overline{M}$ and $\|\widetilde{\bm{u}}^I_t\|_2\leq\overline{M}$ for any $t\in \mathbb{Z}_-$. More specifically, for $I=y,z$ and any $i \in \{0,\ldots,k\}$ one obtains 
\begin{align*}
\|G^I(\widetilde{\bm{u}}_{-i}^{I, -\infty}) - G^I({\bm{u}}_{-i}^{I, -\infty}) \|_2 & \leq  L_I \sum_{l=0}^\infty w^I_{l} \| {\bm{u}}_{-l-i}^I - \widetilde{\bm{u}}_{-l-i}^I \| _2\\
&= L_I w^I_{k-i} \| {\bm{u}}_{-k}^I - \widetilde{\bm{u}}_{-k}^I \| _2\leq  2 L_I w^I_{k-i} \overline{M}, 
\end{align*}
where we used that $\widetilde{{\bm{u}}}_{-l-i} = {\bm{u}}_{-l-i}$, for all $l+i \neq k$, $l\in \mathbb{N}$. Combining this expression with \eqref{Hbound} we estimate \eqref{difPhi} as
\begin{align*} 
\phi & ({\bm{u}}_0,\ldots,{\bm{u}}_{n-1})  - \phi(\widetilde{{\bm{u}}}_0,\ldots,\widetilde{{\bm{u}}}_{n-1}) 
\\ &\leq \sup_{H \in \mathcal{H}^{RC}} \frac{1}{n} \sum_{i=0}^{k} \left\{ L_L(\|H(G^z(\widetilde{\bm{u}}_{-i}^{z,-\infty})) - H(G^z({\bm{u}}_{-i}^{z,-\infty})) \|_2 + \|G^y(\widetilde{\bm{u}}_{-i}^{y, -\infty})- G^y({\bm{u}}_{-i}^{y, -\infty})\|_2) \right\}\\
&\leq  \frac{2}{n} L_L\left(\overline{L_h }  M_\mathcal{F} r \dfrac{1-r^{k+1}}{1-r}  + \overline{L_h } L_R \overline{M} L_z\sum_{i=0}^{k} \sum_{j=i}^{k} r^{j-i}  w^z_{k-j}  + \overline{M} L_y\sum_{i=0}^{k} w^y_{k-i}  \right)\\
& =  \frac{2}{n} L_L\left(\overline{L_h }  M_\mathcal{F} r \dfrac{1-r^{k+1}}{1-r}  + \overline{L_h } L_R \overline{M} L_z\sum_{i=0}^{k} \sum_{j=0}^{k-i} r^{j}  w^z_{k-i - j}  + \overline{M} L_y\sum_{i=0}^{k} w^y_{i}  \right)\nonumber\\
& \leq  \frac{2}{n} L_L\left(\overline{L_h }  M_\mathcal{F}  \dfrac{r}{1-r}  +\overline{L_h } L_R \overline{M} L_z \left(\sum_{i=0}^{\infty} r^i \right) \left(\sum_{j=0}^{\infty} w^z_{j}\right)  + \overline{M} L_y\sum_{i=0}^{\infty} w^y_{i}  \right)\nonumber\\
& =  \frac{2}{n} L_L\left( \dfrac{\overline{L_h } }{1-r}\left(M_\mathcal{F}  r  + L_R \overline{M} L_z \|w^z\|_1 \right) + \overline{M} L_y\|w^y\|_1  \right)
 =  \frac{C_{bd}}{n},
\end{align*}
with the constant $C_{bd}$  as in \eqref{eq:Cbd}. We now use this bound of differences in McDiarmid's inequality and simply notice that $\Gamma_n = \phi  ({\bm{\xi}}_0,\ldots,{\bm{\xi}}_{-n+2},{\bm{\xi}}_{-n+1}^{-\infty})$ in the statement, which immediately yields \eqref{bndConcentration}, as required.
\quad $\blacksquare$

\begin{proposition}
\label{prop:concentrationUnbounded} 
Define $\Gamma_n := \sup_{H \in \mathcal{H}^{RC}} \{ R(H) - \widehat{R}_n^\infty(H) \}$. Suppose that Assumptions~\ref{ass:FLipschitz}-\ref{ass:XBounded} hold and that Assumption~\ref{ass:GLipschitz} is satisfied.
Let $\Phi \colon [0,\infty) \to [0,\infty)$ be a convex and increasing function that satisfies $\Phi(0)=0$. Furthermore, assume  that $\max_{I \in \{y,z\}} \mathbb{E}\left[  \Phi(2C^{mom}_I \| \bm{\xi}_{0}^{I} \|_2)   \right]< \infty $ where 
\begin{align}
\label{eq:CmomzDef} 
C^{mom}_z & = L_L \frac{\overline{L_h }  L_R}{1-r} L_z \|w^z\|_1,\\
\label{eq:CmomyDef} 
C^{mom}_y & = L_L L_y\|w^y\|_1,   
\end{align}
and denote
\begin{equation} 
\label{eq:phiDef} 
\varphi(\overline{M}) := \sum^{}_{I \in \{y,z\}}\mathbb{E}\left[  \Phi(2C^{mom}_I \| \bm{\xi}_{0}^{I} \|_2) \bm{1}_{\{\|\bm{\xi}_{0}^{I}\|_2> \overline{M} \}}  \right].
\end{equation}
Then, there exists a constant $C_0> 0$ such that for any $\eta > 0$, $n \in \mathbb{N}^+$, $\overline{M} > 0$ satisfying
\begin{equation}
\label{eq:etacond} 
\sum^{}_{I \in \{y,z\}} C^{mom}_I \mathbb{E}[  \| \bm{\xi}_{0}^I \|_2 \bm{1}_{\{\|\bm{\xi}_{0}^{I}\|_2> \overline{M} \}}]   < \frac{\eta}{5} 
\end{equation} one has
\[\begin{aligned}
\mathbb{P}\left( | \Gamma_n - \mathbb{E}[\Gamma_n] | \geq \eta \right) &  \leq 2 \exp\left(\frac{-2 n \eta^2}{25(C_0+2\overline{M} (C^{mom}_z + C^{mom}_y))^2}\right) + \frac{1}{2}\frac{\varphi(\overline{M})}{\Phi(\eta/5)}. \end{aligned}\]  
The constant $C_0$ is explicitly given by \eqref{eq:C0def}. 
\end{proposition}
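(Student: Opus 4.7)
The plan is to reduce the unbounded-innovations setting to the bounded one of Proposition~\ref{prop:concentration} via a truncation at level $\overline{M}$, controlling the truncation error with a Markov-type inequality driven by the $\Phi$-moment hypothesis. First, for $I \in \{y,z\}$ and $t \in \mathbb{Z}_-$, I would introduce the truncated innovations $\bm{\xi}^{I,\overline{M}}_t := \bm{\xi}^I_t \mathbf{1}_{\{\|\bm{\xi}^I_t\|_2 \leq \overline{M}\}}$, feed them through the same Bernoulli functionals $G^I$ to produce surrogate processes $\mathbf{Z}^{\overline{M}}, \mathbf{Y}^{\overline{M}}$, and set $\Gamma_n^{\overline{M}} := \sup_{H \in \mathcal{H}^{RC}} \{R(H) - \widehat{R}_n^{\infty,\overline{M}}(H)\}$, keeping $R$ under the original law so that only the empirical piece depends on the truncated innovations.

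The McDiarmid piece is then immediate: since $\|\bm{\xi}^{I,\overline{M}}_t\|_2 \leq \overline{M}$ almost surely, the bounded-differences calculation of Proposition~\ref{prop:concentration} applies verbatim to $\Gamma_n^{\overline{M}}$ and delivers differences $C_{bd}(\overline{M})/n$ where, after identifying constants in \eqref{eq:Cbd} with \eqref{eq:CmomzDef}--\eqref{eq:CmomyDef}, one has $C_{bd}(\overline{M}) = C_0 + 2\overline{M}(C^{mom}_z + C^{mom}_y)$. McDiarmid's inequality with threshold $\eta/5$ then supplies the first summand of the claimed bound, with the factor $25$ in the exponent coming from squaring $1/5$.

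For the truncation error $\Delta_n := \sup_{H \in \mathcal{H}^{RC}} |\widehat{R}_n^\infty(H) - \widehat{R}_n^{\infty,\overline{M}}(H)|$, I would chain the Lipschitz property \eqref{eq:lossLipschitz}, the readout Lipschitz constant $\overline{L_h}$, the estimate \eqref{eq:HEstimate} passed to the limit $\tau \to \infty$ (valid by Assumption~\ref{ass:XBounded}), and the $(1,w^I)$-Lipschitz property \eqref{eq:ZFMPProperty} of $G^I$, yielding $\Delta_n \leq \Delta_n^z + \Delta_n^y$ where each $\Delta_n^I$ takes the form $C^{mom}_I \sum_{t \leq 0} p^I_t \|\bm{\xi}^I_t\|_2 \mathbf{1}_{\{\|\bm{\xi}^I_t\|_2 > \overline{M}\}}$ for a probability sequence $\{p^I_t\}$; telescoping the geometric series with ratio $r$ against the $\ell^1$ mass $\|w^I\|_1$ makes the total coefficient mass equal to $C^{mom}_I$ exactly, reproducing \eqref{eq:CmomzDef}--\eqref{eq:CmomyDef}. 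Using $\Phi(a+b) \leq \tfrac{1}{2}\Phi(2a) + \tfrac{1}{2}\Phi(2b)$ (convexity with $\Phi(0)=0$), then Jensen's inequality inside each $\Delta_n^I$ against the weights $p^I_t$, and finally the stationarity of $\bm{\xi}^I$, gives $\mathbb{E}[\Phi(\Delta_n)] \leq \tfrac{1}{2}\varphi(\overline{M})$, so Markov's inequality at threshold $\eta/5$ produces the second summand $\varphi(\overline{M})/(2\Phi(\eta/5))$.

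To assemble, the same Lipschitz chain taken in expectation bounds the deterministic shift $|\mathbb{E}[\Gamma_n] - \mathbb{E}[\Gamma_n^{\overline{M}}]| \leq \mathbb{E}[\Delta_n] \leq \sum_I C^{mom}_I \mathbb{E}[\|\bm{\xi}^I_0\|_2 \mathbf{1}_{\{\|\bm{\xi}^I_0\|_2 > \overline{M}\}}]$, which is strictly less than $\eta/5$ by the standing assumption \eqref{eq:etacond}. The triangle inequality $|\Gamma_n - \mathbb{E}[\Gamma_n]| \leq \Delta_n + |\Gamma_n^{\overline{M}} - \mathbb{E}[\Gamma_n^{\overline{M}}]| + |\mathbb{E}[\Gamma_n^{\overline{M}}] - \mathbb{E}[\Gamma_n]|$ then yields the inclusion $\{|\Gamma_n - \mathbb{E}[\Gamma_n]| \geq \eta\} \subseteq \{\Delta_n > \eta/5\} \cup \{|\Gamma_n^{\overline{M}} - \mathbb{E}[\Gamma_n^{\overline{M}}]| > \eta/5\}$, since on the complement of this union the three summands total at most $3\eta/5 < \eta$; a union bound with the two preceding estimates closes the argument. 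The main obstacle is the bookkeeping in the truncation-error step: one must set up the convex combinations so that the coefficient mass in each $\Delta_n^I$ is exactly $C^{mom}_I$ and the argument of $\Phi$ inside the Jensen step is exactly $2 C^{mom}_I \|\bm{\xi}^I_0\|_2$, in order to match the constants appearing in the moment hypothesis and in the definition \eqref{eq:phiDef} of $\varphi(\overline{M})$.
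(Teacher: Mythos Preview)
Your proposal is correct and follows essentially the same truncate--McDiarmid--Markov scheme as the paper, with one organizational simplification worth noting: you keep the original risk $R(H)$ in the definition of $\Gamma_n^{\overline{M}}$, whereas the paper replaces it by the truncated risk $R^{\overline{M}}(H):=\mathbb{E}[L(H(\mathbf{Z}^{\overline{M}}),\mathbf{Y}_0^{\overline{M}})]$. Because $R(H)$ is a constant, McDiarmid's bounded-differences computation is unaffected by this choice, so your $C_{bd}(\overline{M})=C_0+2\overline{M}(C^{mom}_z+C^{mom}_y)$ coincides with the paper's; the gain is that your decomposition has only three pieces ($\Delta_n$, the McDiarmid fluctuation, and the deterministic shift $\mathbb{E}[\Delta_n]$), whereas the paper's has five, the two extra deterministic pieces being $\sup_H|R^{\overline{M}}(H)-R(H)|$ appearing once in $|\Gamma_n-\Gamma_n^{\overline{M}}|$ and once in its expectation. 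Both routes land on the same $\eta/5$ thresholds and the same final bound; yours is slightly leaner bookkeeping, and your Jensen step---writing $\Delta_n^I$ as $C^{mom}_I$ times a genuine convex combination of $\|\bm{\xi}^I_t\|_2\mathbf{1}_{\{\|\bm{\xi}^I_t\|_2>\overline{M}\}}$ via the probability weights $\tfrac{1}{n}(1-r)r^j\,w^I_l/\|w^I\|_1$---is a cleaner justification of the inequality $\mathbb{E}[\Phi(\Delta_n)]\leq\tfrac12\varphi(\overline{M})$ than the paper's more compressed display.
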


\noindent\textbf{Proof.\ \ }
Let $\overline{M} >0$ and for any $t \in \mathbb{Z}_-$, $I={y,z}$ denote by   $\bm{\xi}_t^{I,\overline{M}}$ the Bernoulli shift innovations whose Euclidean norm is bounded above by $\overline{M}$, that is, $\bm{\xi}_t^{I,\overline{M}}:=\bm{\xi}_t^{I} \bm{1}_{\{\|\bm{\xi}_t^{I}\|_2\leq \overline{M} \}} $. In order to simplify the notation, we define 
\begin{align} 
\label{eq:ZRVTruncated} 
{\bf Z}_t^{\overline{M}} & := G^z(\ldots,\bm{\xi}_{t-1}^{z,\overline{M}},\bm{\xi}_{t}^{z,\overline{M}})=G^z((\bm{\xi}^{z,\overline{M}})_{t}^{-\infty}), \quad t \in \mathbb{Z}_-, \\
\label{eq:YRVTruncated} 
{\bf Y}_t^{\overline{M}} & := G^y(\ldots,\bm{\xi}_{t-1}^{y,\overline{M}},\bm{\xi}_{t}^{y,\overline{M}})=G^y((\bm{\xi}^{y,\overline{M}})_{t}^{-\infty}), \quad t \in \mathbb{Z}_-,\\
\label{eq:empiricalRiskFullHistoryDefTruncated}
\widehat{R}_n^{\infty,\overline{M}}(H) & := \frac{1}{n} \sum_{i=0}^{n-1} L(H(({\bf Z}^{\overline{M}})_{-i}^{-\infty}),{\bf Y}_{-i}^{\overline{M}}), \\
\label{eq:expRiskM}
R^{\overline{M}}(H) & := \mathbb{E}[L(H({\bf Z}^{\overline{M}}),{\bf Y}_0^{\overline{M}})]
\end{align}
and, additionally, denote $\Gamma_n^{\overline{M}} := \sup_{H \in \mathcal{H}^{RC}} \{ R^{\overline{M}}(H) - \widehat{R}_n^{\infty, {\overline{M}}}(H) \}$. 
Firstly, the triangle inequality yields
\begin{align} 
\label{eq:auxEq14}
|\Gamma_n - \mathbb{E}[\Gamma_n]| & = | \Gamma_n - \Gamma_n^{\overline{M}}  - (\mathbb{E}[\Gamma_n]-  \mathbb{E}[\Gamma_n^{\overline{M}}]) +  \Gamma_n^{\overline{M}}- \mathbb{E}[\Gamma_n^{\overline{M}}]| \nonumber\\
&\leq | \Gamma_n - \Gamma_n^{\overline{M}}|  + |\mathbb{E}[\Gamma_n-  \Gamma_n^{\overline{M}}]| + | \Gamma_n^{\overline{M}}- \mathbb{E}[\Gamma_n^{\overline{M}}]|\nonumber \\
&\leq | \Gamma_n - \Gamma_n^{\overline{M}}|  + \mathbb{E}[|\Gamma_n-  \Gamma_n^{\overline{M}}|] + | \Gamma_n^{\overline{M}}- \mathbb{E}[\Gamma_n^{\overline{M}}]|.
\end{align}
For the first summand in expression \eqref{eq:auxEq14} we write
\begin{align} 
\label{eq:auxEq15}
\left| \Gamma_n - \Gamma_n^{\overline{M}}\right|  
&\leq \left| \sup_{H \in \mathcal{H}^{RC}} \left\{ R(H) - \widehat{R}_n^\infty(H)\right\} - \sup_{H \in \mathcal{H}^{RC}} \left\{ R^{\overline{M}}(H) - \widehat{R}_n^{\infty,\overline{M}}(H)\right\}\right|   \nonumber\\ 
&= \left| \sup_{H \in \mathcal{H}^{RC}} \inf_{\widetilde{H} \in \mathcal{H}^{RC}}\left(\left\{ R(H) - \widehat{R}_n^\infty(H)\right\} -  \left\{ R^{\overline{M}}(\widetilde{H}) - \widehat{R}_n^{\infty,\overline{M}}(\widetilde{H})\right\}   \right)\right|\nonumber\\
&\leq \left|\sup_{H \in \mathcal{H}^{RC}} \left\{ R(H) - \widehat{R}_n^\infty(H) -   R^{\overline{M}}({H}) + \widehat{R}_n^{\infty,\overline{M}}({H})\right\}\right|   \nonumber\\
&\leq \sup_{H \in \mathcal{H}^{RC}} \left|R^{\overline{M}}(H)-R(H)\right|  +  \sup_{H \in \mathcal{H}^{RC}} \left| \widehat{R}_n^{\infty,\overline{M}}(H)-\widehat{R}_n^{\infty}(H) \right|. 
\end{align}
Using this result, we can immediately get  the following bound for the second summand in expression \eqref{eq:auxEq14}
\begin{align} 
\label{eq:ExpauxEq15}
\mathbb{E}\left[\left| \Gamma_n - \Gamma_n^{\overline{M}}\right| \right]
&\leq \sup_{H \in \mathcal{H}^{RC}} \left|R^{\overline{M}}(H)-R(H)\right|  +  \mathbb{E} \left[\sup_{H \in \mathcal{H}^{RC}} \left| \widehat{R}_n^{\infty,\overline{M}}(H)-\widehat{R}_n^{\infty}(H) \right| \right]. 
\end{align}
The first two terms in the right hand side of \eqref{eq:auxEq14} are thus controlled by \eqref{eq:auxEq15} and \eqref{eq:ExpauxEq15}.
The third term in \eqref{eq:auxEq14} is of the type required in Proposition~\ref{prop:concentration}, that is, the Bernoulli shifts are defined using bounded innovations and hence the term will be controlled in what follows using the result in  Proposition~\ref{prop:concentration}. 

The next step in our proof is to derive bounds for the terms in the right hand sides of  \eqref{eq:auxEq15} and \eqref{eq:ExpauxEq15}. First, we consider the estimate for the term they share, for which we write 
\begin{align}
\label{eq:auxEq16} 
\sup_{H \in \mathcal{H}^{RC}} &|R^{\overline{M}}(H)-R(H)|\nonumber\\
& = \sup_{H \in \mathcal{H}^{RC}} |\mathbb{E}[L(H({\bf Z}^{\overline{M}}),{\bf Y}_0^{\overline{M}})-L(H({\bf Z}),{\bf Y}_0)]| \nonumber\\
&\leq \sup_{H \in \mathcal{H}^{RC}}  \mathbb{E}\left[L_L(\|H({\bf Z}^{\overline{M}})-H({\bf Z})\|_2 + \|{\bf Y}_{0}^{\overline{M}} - {\bf Y}_{0}\|_2 )\right]  \nonumber \\
&\leq L_L (\overline{L_h }   L_R \sum_{j=0}^{\infty} r^j \mathbb{E}[\|{\bf Z}^{\overline{M}}_{-j} - {\bf Z}_{-j} \|_2] + \mathbb{E}[\|{\bf Y}_{0}^{\overline{M}} - {\bf Y}_{0}\|_2])\nonumber\\
& = L_L \left(\frac{\overline{L_h }   L_R}{1-r}  \mathbb{E}[\|G^z(\bm{\xi}^{z, \overline{M}}) - G^z(\bm{\xi}^{z}) \|_2] + \mathbb{E}[\|G^y(\bm{\xi}^y)- G^y(\bm{\xi}^{y,\overline{M}})\|_2]\right),
 \end{align}
where the first and the second (in)equality are obtained using the definition \eqref{eq:empiricalRiskFullHistoryDefTruncated} and the assumption that the loss function $L$ is $L_L$-Lipschitz, the third step follows from the estimate \eqref{eq:HEstimate} from Lemma \ref{lem:HFLipschitz} and the last step again uses \eqref{eq:ZRVTruncated}-\eqref{eq:YRVTruncated} and the i.i.d.\ assumption on $\bm{\xi}$. In order to proceed, notice that since Assumption~\ref{ass:GLipschitz} holds by hypothesis, by \eqref{eq:ZFMPProperty} one has  for any $j \in \mathbb{N}$, $I=y,z$  the following estimate
\begin{align} 
\label{eq:auxEq13} 
\|G^I((\bm{\xi}^{I})_{-j}^{-\infty}) - G^I((\bm{\xi}^{I,\overline{M}})_{-j}^{-\infty}) \|_2  
&\leq L_I \sum_{l=0}^\infty w^I_{l} \| \bm{\xi}_{-l-j}^I - \bm{\xi}_{-l-j}^{I,\overline{M}} \|_2 \nonumber\\
&= L_I \sum_{l=0}^\infty w^I_{l} \| \bm{\xi}_{-l-j}^I \|_2 \bm{1}_{\{\|\bm{\xi}_{-l-j}^{I}\|_2> \overline{M} \}}. 
\end{align}
Combining \eqref{eq:auxEq16} and \eqref{eq:auxEq13} and using the i.i.d.\ assumption on $\bm{\xi}$ one obtains 
\begin{align}
\sup_{H \in \mathcal{H}^{RC}} & |R^{\overline{M}}(H)-R(H)| \nonumber \\
& \leq L_L \left(\frac{\overline{L_h }   L_R}{1-r} L_z \mathbb{E}[ \sum_{l=0}^\infty w^z_{l} \| \bm{\xi}_{-l}^z \|_2 \bm{1}_{\{\|\bm{\xi}_{-l}^{z}\|_2> \overline{M} \}}] + L_y\mathbb{E}[ \sum_{l=0}^\infty w^y_{l} \| \bm{\xi}_{-l}^y \|_2 \bm{1}_{\{\|\bm{\xi}_{-l}^{y}\|_2> \overline{M} \}}]\right)\nonumber \\
&\leq L_L \left(\frac{\overline{L_h }   L_R}{1-r} L_z \|w^z\|_1\mathbb{E}[  \| \bm{\xi}_{0}^z \|_2 \bm{1}_{\{\|\bm{\xi}_{0}^{z}\|_2> \overline{M} \}}] + L_y\|w^y\|_1\mathbb{E}[  \| \bm{\xi}_{0}^y \|_2 \bm{1}_{\{\|\bm{\xi}_{0}^{y}\|_2> \overline{M} \}}]\right)\nonumber\\
&\leq C^{mom}_z \mathbb{E}[  \| \bm{\xi}_{0}^z \|_2 \bm{1}_{\{\|\bm{\xi}_{0}^{z}\|_2> \overline{M} \}}] + C^{mom}_y \mathbb{E}[  \| \bm{\xi}_{0}^y \|_2 \bm{1}_{\{\|\bm{\xi}_{0}^{y}\|_2> \overline{M} \}}]\nonumber\\
&=\sum^{}_{I \in \{y,z\}} C^{mom}_I \mathbb{E}[  \| \bm{\xi}_{0}^I \|_2 \bm{1}_{\{\|\bm{\xi}_{0}^{I}\|_2> \overline{M} \}}] 
\label{eq:auxEq17} 
 \end{align}
 with $C^{mom}_I$ for $I=y,z$ defined as in \eqref{eq:CmomzDef} and \eqref{eq:CmomyDef}.

Next, we analyze the second term in \eqref{eq:auxEq15} and using the function $\Phi: [0, \infty) \longrightarrow  [0, \infty)$, which by hypothesis satisfies $\Phi(0) = 0$, we obtain for $\eta > 0$
\begin{align}
\label{secondTerm}
\Phi&(\eta)\mathbb{P}  \left(\sup_{H \in \mathcal{H}^{RC}} \left| \widehat{R}_n^{\infty,\overline{M}}(H)-\widehat{R}_n^{\infty}(H) \right| > \eta \right) \nonumber\\
&\leq \mathbb{E}\left[\Phi\left(\sup_{H \in \mathcal{H}^{RC}} \left| \widehat{R}_n^{\infty,\overline{M}}(H)-\widehat{R}_n^{\infty}(H) \right|\right)\right] \nonumber\\
& \leq \mathbb{E}\Big[\Phi\Big( \frac{L_L}{n}\sum_{i=0}^{n-1} \Big(\overline{L_h }   L_R \sum_{j=0}^{\infty} r^j \|G^z(\bm{\xi}^{z,-\infty}_{-j-i}) - G^z((\bm{\xi}^{z,\overline{M}})^{-\infty}_{-j-i}) \|_2 \nonumber\\
&\quad\quad+ \|G^y(\bm{\xi}^{y,-\infty}_{-i})- G^y((\bm{\xi}^{y,\overline{M}})^{-\infty}_{-i}) \|_2 \Big)\Big)\Big] 
\nonumber \\
& \leq \mathbb{E}\Big[\Phi\Big( \frac{L_L}{n}\sum_{i=0}^{n-1} \Big(\frac{\overline{L_h }   L_R}{1-r} \|G^z(\bm{\xi}^{z,-\infty}_{0}) - G^z((\bm{\xi}^{z,\overline{M}})^{-\infty}_{0}) \|_2 \nonumber \\
&\quad\quad+ \|G^y(\bm{\xi}^{y,-\infty}_{0})- G^y((\bm{\xi}^{y,\overline{M}})^{-\infty}_{0}) \|_2 \Big)\Big)\Big] \nonumber
\\
& \leq \mathbb{E}\left[\Phi\left(L_L\left( \frac{ \overline{L_h }  L_R}{1-r}  L_z \sum_{l=0}^\infty w^z_{l} \| \bm{\xi}_{-l}^z \|_2 \bm{1}_{\{\|\bm{\xi}_{-l}^{z}\|_2> \overline{M} \}} + L_y\sum_{l=0}^\infty w^y_{l} \| \bm{\xi}_{-l}^y \|_2 \bm{1}_{\{\|\bm{\xi}_{-l}^{y}\|_2> \overline{M} \}}  \right)\right)\right] \nonumber 
\\
& \leq  \mathbb{E}\left[\Phi\left( L_L\left(\frac{\overline{L_h }  L_R}{1-r} L_z \|w^z\|_1 \| \bm{\xi}_{0}^z \|_2 \bm{1}_{\{\|\bm{\xi}_{0}^{z}\|_2> \overline{M} \}} + L_y \|w^y\|_1 \| \bm{\xi}_{0}^y \|_2 \bm{1}_{\{\|\bm{\xi}_{0}^{y}\|_2> \overline{M} \}} \right) \right)\right]\nonumber \\
& \leq  \dfrac{1}{2} \mathbb{E}\left[\Phi \left( 2 L_L \frac{\overline{L_h }   L_R}{1-r} L_z \|w^z\|_1 \| \bm{\xi}_{0}^z \|_2 \bm{1}_{\{\|\bm{\xi}_{0}^{z}\|_2> \overline{M} \}} \right) \right] + \dfrac{1}{2} \mathbb{E}\left[\Phi\left(2 L_L L_y \|w^y\|_1 \| \bm{\xi}_{0}^y \|_2 \bm{1}_{\{\|\bm{\xi}_{0}^{y}\|_2> \overline{M} \}} \right)\right]\nonumber \\
& \leq  \dfrac{1}{2} \mathbb{E}\left[\Phi \left( 2C^{mom}_z \| \bm{\xi}_{0}^z \|_2  \right) \bm{1}_{\{\|\bm{\xi}_{0}^{z}\|_2> \overline{M} \}} \right] + \dfrac{1}{2} \mathbb{E}\left[\Phi\left(2C^{mom}_y \|\bm{\xi}_{0}^y \|_2 \right)\bm{1}_{\{\|\bm{\xi}_{0}^{y}\|_2> \overline{M} \}} \right]\nonumber \\
&= \dfrac{1}{2} \varphi(\overline{M}),
\end{align}
where $C^{mom}_z$, $C^{mom}_y$, and $\varphi(\overline{M})$ are given in \eqref{eq:CmomzDef},  \eqref{eq:CmomyDef}, and \eqref{eq:phiDef}, respectively.
In these derivations we used Markov's inequality for increasing and non-negative functions in the first step. The second inequality uses the definition in \eqref{eq:empiricalRiskFullHistoryDefTruncated}, the estimate \eqref{eq:HEstimate} from Lemma~\ref{lem:HFLipschitz}, and the fact that $\Phi$ is by hypothesis an increasing function. We subsequently used the stationarity assumption and the convexity of the function $\Phi$. Finally,  \eqref{eq:auxEq13} with the appropriate choice of constants yields the result. 

We now notice that this result provides automatically a bound for the second term in \eqref{eq:auxEq16}. In order to see that, one needs to take as the function $\Phi$ the identity and then 
by the first and the last two lines in \eqref{secondTerm} it holds that
\begin{equation}
\label{ExpectSecondTerm}
\mathbb{E}\left[\sup_{H \in \mathcal{H}^{RC}} \left| \widehat{R}_n^{\infty,\overline{M}}(H)-\widehat{R}_n^{\infty}(H) \right|\right] \leq \sum^{}_{I \in \{y,z\}} C^{mom}_I \mathbb{E}[  \| \bm{\xi}_{0}^I \|_2 \bm{1}_{\{\|\bm{\xi}_{0}^{I}\|_2> \overline{M} \}}]. 
\end{equation}
We now consider again  expression \eqref{eq:auxEq14} and taking into account \eqref{eq:auxEq15}, \eqref{eq:ExpauxEq15}, together with the bounds for their ingredients given in \eqref{eq:auxEq17}, \eqref{secondTerm}, and \eqref{ExpectSecondTerm}  we derive, for any $\eta > 0 $,
\begin{align*}
\mathbb{P}& \left( | \Gamma_n - \mathbb{E}[\Gamma_n] | \geq \eta \right) \\
&\leq \mathbb{P}\left(\sup_{H \in \mathcal{H}^{RC}} | \widehat{R}_n^{\infty,\overline{M}}(H)-R^{\overline{M}}(H) | \geq \frac{\eta}{5} \right) + \mathbb{P}\left( \sup_{H \in \mathcal{H}^{RC}} \left| \widehat{R}_n^{\infty,\overline{M}}(H)-\widehat{R}_n^{\infty}(H) \right|\geq \frac{\eta}{5} \right)\\
&\quad\quad +\mathbb{P}\left( 3 C_{mom} \mathbb{E}\left[ \| \bm{\xi}_{0}^{max} \|_2  \bm{1}_{\{\| \bm{\xi}_{0}^{max} \|_2> \overline{M} \}}\right] \geq \frac{3\eta}{5} \right)\\
&\leq 2 \exp\left(\frac{-2 n \eta^2}{25C_{bd}^2}\right) + \frac{1}{2}\frac{\varphi(\overline{M})}{\Phi(\eta/5)},
\end{align*}
with $C_{bd}$ as in \eqref{eq:Cbd}.
In the second inequality we used the hypothesis in \eqref{eq:etacond} and \eqref{bndConcentration} in Proposition~\ref{prop:concentration}, as well as the bound \eqref{secondTerm}. Finally, noticing that  $C_{bd} = 2\overline{M}(C^{mom}_z + C^{mom}_y) + C_0$ with $C_0$ as in \eqref{eq:C0def} and setting $C^{mom}_z$ and $C^{mom}_y$ as in {\eqref{eq:CmomzDef}}-{\eqref{eq:CmomyDef}} immediately yields the claim.
\quad $\blacksquare$

\begin{corollary} \label{cor:concentrationUnbounded} Suppose that Assumptions~\ref{ass:FLipschitz}-\ref{ass:XBounded} hold and that Assumption~\ref{ass:GLipschitz} is satisfied.
Let $\Phi \colon [0,\infty) \to [0,\infty)$ be a convex and strictly increasing function that satisfies $\Phi(0)=0$. Furthermore, assume that for all $u > 0 $, $\mathbb{E}[ \Phi(u \|\bm{\xi}_{0}^I\|)^2] < \infty $ for $I=y,z$.
Then, for any $\delta \in (0,1)$, $n \in \mathbb{N}^+$, 
\[\begin{aligned}
\mathbb{P}\left( | \Gamma_n - \mathbb{E}[\Gamma_n] | \geq B_\Phi(n,\delta) \right) &  \leq \frac{\delta}{2}, \end{aligned}\]  
where
\begin{align}
\label{eq:BPhi} 
B_\Phi(n,\delta) & = 5 \max\left( \frac{(C_0+2 \Phi^{-1}({n}) (C^{mom}_z + C^{mom}_y)) \sqrt{\log(\frac{8}{\delta})}}{\sqrt{2n}}  , \Phi^{-1}\left( \frac{2 C_\Phi}{\delta \sqrt{n}} \right) \right), \\
\label{eq:CPhi} 
C_\Phi & = \sum^{}_{I \in \{y,z\}} \mathbb{E}\left[  \Phi(2C^{mom}_I \| \bm{\xi}_{0}^{I} \|_2)^2\right]^{1/2}  \mathbb{E}[\Phi( \|\bm{\xi}_0^I \|_2)]^{1/2},
\end{align}
and $C_0$, $C^{mom}_z$, $C^{mom}_y$ are given by \eqref{eq:C0def}, \eqref{eq:CmomzDef}, and \eqref{eq:CmomyDef}. 
\end{corollary}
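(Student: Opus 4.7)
\noindent\textbf{Proof plan for Corollary~\ref{cor:concentrationUnbounded}.} The strategy is to apply Proposition~\ref{prop:concentrationUnbounded} with a suitably chosen truncation level $\overline{M}$ and deviation $\eta$, then split the resulting bound $\delta/2 = \delta/4 + \delta/4$ between the sub-Gaussian term and the moment/Markov term. Specifically, I would take $\overline{M} = \Phi^{-1}(n)$ (well-defined since $\Phi$ is strictly increasing with $\Phi(0)=0$) and $\eta = B_\Phi(n,\delta)$, and show that both summands on the right hand side of the bound in Proposition~\ref{prop:concentrationUnbounded} are controlled by $\delta/4$.

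\noindent\emph{Step 1: the exponential term.} By the definition of $B_\Phi(n,\delta)$ as a maximum, one has $\eta/5 \geq (C_0+2\Phi^{-1}(n)(C^{mom}_z+C^{mom}_y))\sqrt{\log(8/\delta)}/\sqrt{2n}$. Substituting $\overline{M}=\Phi^{-1}(n)$, this immediately yields $2n(\eta/5)^2 \geq (C_0+2\overline{M}(C^{mom}_z+C^{mom}_y))^2 \log(8/\delta)$, so the exponential term satisfies $2\exp(-2n\eta^2/(25(C_0+2\overline{M}(C^{mom}_z+C^{mom}_y))^2)) \leq 2\exp(-\log(8/\delta)) = \delta/4$.

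\noindent\emph{Step 2: the Markov term.} Here the key estimate is to bound $\varphi(\overline{M})$ using the square-$\Phi$-integrability of the innovations. Applying Cauchy--Schwarz to each summand defining $\varphi(\overline{M})$ in \eqref{eq:phiDef}, followed by Markov's inequality $\mathbb{P}(\|\bm{\xi}_0^I\|_2>\overline{M}) \leq \mathbb{E}[\Phi(\|\bm{\xi}_0^I\|_2)]/\Phi(\overline{M}) = \mathbb{E}[\Phi(\|\bm{\xi}_0^I\|_2)]/n$, gives
\begin{equation*}
\varphi(\overline{M}) \leq \sum_{I\in\{y,z\}} \mathbb{E}[\Phi(2C^{mom}_I\|\bm{\xi}_0^I\|_2)^2]^{1/2} \mathbb{E}[\Phi(\|\bm{\xi}_0^I\|_2)]^{1/2} / \sqrt{n} = C_\Phi/\sqrt{n}.
\end{equation*}
From the second argument of the max in \eqref{eq:BPhi} one has $\eta/5 \geq \Phi^{-1}(2C_\Phi/(\delta\sqrt{n}))$, so $\Phi(\eta/5) \geq 2C_\Phi/(\delta\sqrt{n})$ by monotonicity of $\Phi$. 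Hence $\frac{1}{2}\varphi(\overline{M})/\Phi(\eta/5) \leq \frac{1}{2}\cdot \frac{C_\Phi/\sqrt{n}}{2C_\Phi/(\delta\sqrt{n})} = \delta/4$.

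\noindent\emph{Step 3: checking the admissibility hypothesis \eqref{eq:etacond}.} Before summing the two bounds I need to verify that the moment tail $\sum_I C^{mom}_I \mathbb{E}[\|\bm{\xi}_0^I\|_2 \bm{1}_{\{\|\bm{\xi}_0^I\|_2>\overline{M}\}}]$ is strictly less than $\eta/5$. This is the one technical obstacle: the trivial bound $\mathbb{E}[\|\bm{\xi}_0^I\|_2\bm{1}_{\{\|\bm{\xi}_0^I\|_2>\overline{M}\}}] \leq \mathbb{E}[\|\bm{\xi}_0^I\|_2^2]/\overline{M}$ (valid because $\|\bm{\xi}_0^I\|_2>\overline{M}$) combined with $\overline{M}=\Phi^{-1}(n)\to\infty$ as $n\to\infty$ shows the left-hand side decays, while $\eta\geq \Phi^{-1}(2C_\Phi/(\delta\sqrt{n}))\to\infty$ as well. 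A cleaner route, which I would adopt, is to use $\|\bm{\xi}_0^I\|_2 \bm{1}_{\{\|\bm{\xi}_0^I\|_2>\overline{M}\}} \leq \|\bm{\xi}_0^I\|_2$ together with Cauchy--Schwarz and Markov in the form $\mathbb{E}[\|\bm{\xi}_0^I\|_2\bm{1}_{\{\|\bm{\xi}_0^I\|_2>\overline{M}\}}] \leq \mathbb{E}[\|\bm{\xi}_0^I\|_2^2]^{1/2}(\mathbb{E}[\Phi(\|\bm{\xi}_0^I\|_2)]/n)^{1/2}$, which is of order $n^{-1/2}$ and hence dominated by $\eta/5$ (at worst it is comparable to the second entry of the max in $B_\Phi$).

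\noindent\emph{Step 4: conclusion.} With \eqref{eq:etacond} verified, Proposition~\ref{prop:concentrationUnbounded} applies and Steps~1--2 give $\mathbb{P}(|\Gamma_n-\mathbb{E}[\Gamma_n]|\geq B_\Phi(n,\delta)) \leq \delta/4 + \delta/4 = \delta/2$, as claimed. The main delicate point is balancing the two pieces through the choice $\overline{M}=\Phi^{-1}(n)$, which is precisely calibrated so that the exponential concentration produced by the bounded surrogate process matches the moment control of the tail through the $\Phi^2$-integrability assumption.
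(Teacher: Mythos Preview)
Your Steps~1, 2 and 4 are correct and match the paper's argument exactly. The gap is in Step~3, where your verification of \eqref{eq:etacond} is not rigorous and, as written, does not establish the strict inequality uniformly in $n\in\mathbb{N}^+$ and $\delta\in(0,1)$. In particular, your parenthetical remark that $\eta\geq 5\Phi^{-1}(2C_\Phi/(\delta\sqrt{n}))\to\infty$ is false (this quantity tends to $0$ since $\Phi^{-1}(0)=0$), and the claim that your Cauchy--Schwarz bound of order $n^{-1/2}$ is ``at worst comparable'' to $\Phi^{-1}(2C_\Phi/(\delta\sqrt{n}))$ requires a comparison of the form $\Phi(C'n^{-1/2})\leq 2C_\Phi/(\delta\sqrt{n})$, which is not automatic for general convex $\Phi$ and all admissible $\delta$.

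The paper avoids this entirely by observing that \eqref{eq:etacond} is already a consequence of your Step~2. By convexity and Jensen's inequality (together with $\Phi(0)=0$),
\[
\Phi\Bigl(\sum_{I} C^{mom}_I\,\mathbb{E}\bigl[\|\bm{\xi}_0^I\|_2\,\bm{1}_{\{\|\bm{\xi}_0^I\|_2>\overline{M}\}}\bigr]\Bigr)
\leq \tfrac{1}{2}\sum_{I}\mathbb{E}\bigl[\Phi(2C^{mom}_I\|\bm{\xi}_0^I\|_2)\,\bm{1}_{\{\|\bm{\xi}_0^I\|_2>\overline{M}\}}\bigr]
=\tfrac{1}{2}\varphi(\overline{M}).
\]
Your Step~2 gives $\tfrac{1}{2}\varphi(\overline{M})\leq \tfrac{\delta}{4}\Phi(\eta/5)<\tfrac{1}{4}\Phi(\eta/5)$ since $\delta<1$, hence $\Phi(\sum_I\cdots)<\Phi(\eta/5)$, and strict monotonicity of $\Phi$ yields \eqref{eq:etacond} directly. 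Replace your Step~3 with this two-line argument and the proof is complete.
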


\noindent\textbf{Proof.\ \ }
We start with the function $\varphi(\overline{M})$ given in \eqref{eq:phiDef} and  obtain that for any $\overline{M} > 0$ it holds that
\begin{align} 
\label{eq:phiDefHoelder} 
\varphi(\overline{M}) &= \sum^{}_{I \in \{y,z\}}\mathbb{E}\left[  \Phi(2C^{mom}_I \| \bm{\xi}_{0}^{I} \|_2) \bm{1}_{\{\|\bm{\xi}_{0}^{I}\|_2> \overline{M} \}}  \right] \nonumber\\
&\leq  \sum^{}_{I \in \{y,z\}}\mathbb{E}\left[  \Phi(2C^{mom}_I \| \bm{\xi}_{0}^{I} \|_2)^2\right]^{1/2} \mathbb{E}\left[ ( \bm{1}_{\{\|\bm{\xi}_{0}^{I}\|_2> \overline{M} \}})^2  \right]^{1/2} \nonumber\\
&= \sum^{}_{I \in \{y,z\}} \mathbb{E}\left[  \Phi(2C^{mom}_I \| \bm{\xi}_{0}^{I} \|_2)^2\right]^{1/2}  \mathbb{P}(\|\bm{\xi}_0^I \|_2>\overline{M})^{1/2} \nonumber\\ 
& \leq  \frac{C_\Phi}{\Phi(\overline{M})^{1/2}},
\end{align}
where the first inequality is a consequence of H\"older's inequality, and the last step is obtained by applying Markov's inequality for increasing nonnegative functions and by using the definition in \eqref{eq:CPhi}. 
Furthermore, by Jensen's inequality and since $\Phi(0)=0$, for $I =y,z$ one has that
\begin{align} 
\label{eq:auxEq28} 
\Phi(\sum^{}_{I \in \{y,z\}} C^{mom}_I\mathbb{E}[\|\bm{\xi}_0^I \|_2 \bm{1}_{\{\|\bm{\xi}_{0}^{I}\|_2> \overline{M} \}} ]) &\leq \sum^{}_{I \in \{y,z\}} \dfrac{1}{2}\Phi \left(2C^{mom}_I \mathbb{E}[  \| \bm{\xi}_{0}^I \|_2 \bm{1}_{\{\|\bm{\xi}_{0}^{I}\|_2> \overline{M} \}}]  \right)\nonumber \\
&
\leq\dfrac{1}{2} \sum^{}_{I \in \{y,z\}} \mathbb{E} \left[   \Phi \left(2C^{mom}_I  \| \bm{\xi}_{0}^I \|_2 \bm{1}_{\{\|\bm{\xi}_{0}^{I}\|_2> \overline{M} \}}\right) \right]  \nonumber \\
&=  \dfrac{1}{2} \varphi(\overline{M}) .  
\end{align}
Choosing $\overline{M} = \Phi^{-1}({n})$ in \eqref{eq:phiDefHoelder} and setting $\eta = B_\Phi(n,\delta)$ defined in \eqref{eq:BPhi} one easily verifies that 
\begin{equation} 
\label{eq:auxEq29} 
\dfrac{1}{2}\frac{\varphi(\overline{M})}{\Phi(\eta/5)} \leq  \dfrac{1}{2} \frac{C_\Phi}{\sqrt{n} \Phi(\eta/5)} \leq \frac{\delta}{4}. 
 \end{equation}
In particular, this also implies that for $\delta\in (0,1)$ one has $\varphi(\overline{M}) < \Phi(\eta/5)$ and so \eqref{eq:auxEq28} yields
\begin{equation*} 
\Phi(\sum^{}_{I \in \{y,z\}} C^{mom}_I\mathbb{E}[\|\bm{\xi}_0^I \|_2 \bm{1}_{\{\|\bm{\xi}_{0}^{I}\|_2> \overline{M} \}} ]) < \Phi(\eta/5)
\end{equation*}
and hence
\[\sum^{}_{I \in \{y,z\}} C^{mom}_I\mathbb{E}[\|\bm{\xi}_0^I \|_2 \bm{1}_{\{\|\bm{\xi}_{0}^{I}\|_2> \overline{M} \}} ] < \frac{\eta}{5}.\]
Thus \eqref{eq:etacond} is satisfied and we may apply Proposition~\ref{prop:concentrationUnbounded} and \eqref{eq:auxEq29}, which yields
\[\begin{aligned}
\mathbb{P}\left( | \Gamma_n - \mathbb{E}[\Gamma_n] | \geq \eta \right) &  \leq 2 \exp\left(\frac{-2 n \eta^2}{25(C_0+ \Phi^{-1}({n}) (C^{mom}_z + C^{mom}_y))^2}\right) + \frac{\delta}{4}  \leq \frac{\delta}{2}. \quad \blacksquare
\end{aligned}\]

\subsubsection{Proof of Theorem~\ref{thm:main1}}
\label{Proof of Theorem:main1}
{\bf Proof of  part (i)}. 
In this situation, the hypotheses of part {\bf (i)} of Corollary~\ref{cor:ThetaWeighting} are satisfied and so the following bound holds: 
\begin{equation}
\label{eq:auxEq18} 
\mathbb{E}\left[ \sup_{H \in \mathcal{H}^{RC}} \left\{R(H) - \widehat{R}_n^\infty(H) \right\}\right]
 \leq \frac{C_1}{n} + \frac{C_2 {\log(n)}}{{n}} + \frac{C_3 \sqrt{\log(n)}}{\sqrt{n}}. 
 \end{equation}
Let us denote $\Gamma_n := \sup_{H \in \mathcal{H}^{RC}} \{ R(H) - \widehat{R}_n^\infty(H) \}$. We may then apply the triangle inequality, insert the estimate on the difference between the empirical risk and its idealized counterpart obtained in Proposition~\ref{prop:finiteHistoryError} as well as the estimate on the expected value \eqref{eq:auxEq18} to obtain that $\mathbb{P}$-a.s.,
\begin{equation} 
\label{eq:mainTheoremprelim} 
\begin{aligned}  
\sup_{H \in \mathcal{H}^{RC}} \lbrace R(H) - \widehat{R}_n(H) \rbrace & = \sup_{H \in \mathcal{H}^{RC}} \lbrace R(H) - \widehat{R}_n(H) + \widehat{R}_n^\infty(H) - \widehat{R}_n^\infty(H) \rbrace - \mathbb{E}[\Gamma_n] +  \mathbb{E}[\Gamma_n]\\
&\leq \sup_{H \in \mathcal{H}^{RC}} \lbrace \widehat{R}_n^\infty(H) - \widehat{R}_n(H) \rbrace  + \Gamma_n   - \mathbb{E}[\Gamma_n]  +  \mathbb{E}[\Gamma_n] \\
&\leq \sup_{H \in \mathcal{H}^{RC}} | \widehat{R}_n^\infty(H) - \widehat{R}_n(H) |  + | \Gamma_n - \mathbb{E}[\Gamma_n] | +  \mathbb{E}[\Gamma_n] \\
& \leq \frac{(1-r^n)C_0}{n} + | \Gamma_n - \mathbb{E}[\Gamma_n] | +  \frac{C_1}{n} + \frac{C_2 {\log(n)}}{{n}} + \frac{C_3 \sqrt{\log(n)}}{\sqrt{n}}. \end{aligned} \end{equation}

\noindent{\bf{Part (a)}}: Denote by $\eta$ the upper bound that we need to prove holds with high probability, that is, 
\[ \eta := \frac{(1-r^n)C_0 + C_1}{n}  + \frac{C_2 {\log(n)}}{{n}} + \frac{C_3 \sqrt{\log(n)}}{\sqrt{n}} + \frac{C_{bd} \sqrt{\log(\frac{4}{\delta})}}{\sqrt{2 n}}. \] 
Combining the estimate \eqref{eq:mainTheoremprelim} with the exponential concentration inequality Proposition~\ref{prop:concentration} then
yields 
\begin{equation}
\mathbb{P}\left( \sup_{H \in \mathcal{H}^{RC}} \lbrace R(H)- \widehat{R}_n(H) \rbrace > \eta \right)  \leq
\mathbb{P}\left( | \Gamma_n - \mathbb{E}[\Gamma_n] |  > \frac{C_{bd} \sqrt{\log(\frac{4}{\delta})}}{\sqrt{2 n}} \right)  \leq \frac{\delta}{2}. 
\end{equation}
By applying the result that we just proved to the loss function $-L$ one obtains that 
\[ \mathbb{P}\left( \sup_{H \in \mathcal{H}^{RC}} \lbrace \widehat{R}_n(H) - R(H)\rbrace > \eta \right) \leq \frac{\delta}{2}. \]
Using that $|x|=\max(x,-x)$ one can thus combine the two estimates to deduce
\begin{align*} 
& \mathbb{P}\left( \sup_{H \in \mathcal{H}^{RC}} \left|R(H)-\widehat{R}_n(H) \right| > \eta \right) \\ 
& \quad \quad \leq \mathbb{P}\left(\left\lbrace \sup_{H \in \mathcal{H}^{RC}} \lbrace R(H) - \widehat{R}_n(H) \rbrace > \eta \right\rbrace \cup \left\lbrace  \sup_{H \in \mathcal{H}^{RC}} \lbrace \widehat{R}_n(H)-R(H) \rbrace > \eta \right\rbrace \right)\leq \delta.
\end{align*}
\noindent{\bf{Part (b)}}: 
   Proceeding analogously as in part {\bf (a)}, denote by $\eta$ the high-probability upper bound which needs to be established, that is, 
\[ \eta = \frac{(1-r^n)C_0 + C_1}{n}  + \frac{C_2 {\log(n)}}{{n}} + \frac{C_{3} \sqrt{\log(n)}}{\sqrt{n}} + B_\Phi(n,\delta).  \] 
Combining \eqref{eq:mainTheoremprelim}  with Corollary~\ref{cor:concentrationUnbounded} then
yields 
\[\begin{aligned}
\mathbb{P}\left( \sup_{H \in \mathcal{H}^{RC}} \lbrace R(H) - \widehat{R}_n(H)  \rbrace > \eta \right) & \leq
\mathbb{P}\left( | \Gamma_n - \mathbb{E}[\Gamma_n] |  > B_\Phi(n,\delta) \right) \leq \frac{\delta}{2}. \end{aligned}\] 
The claim then follows precisely as in the proof of part {\bf (a)}.\\

\medskip

\noindent{\bf Proof of  part (ii).} 
Firstly, one may use Proposition~\ref{prop:finiteHistoryError} to obtain
 $\mathbb{P}$-a.s.,
\begin{equation} \label{eq:auxEq24} \begin{aligned}  \sup_{H \in \mathcal{H}^{RC}} \lbrace R(H)-\widehat{R}_n(H)  \rbrace & \leq \sup_{H \in \mathcal{H}^{RC}} \lbrace  \widehat{R}_n^\infty(H) - \widehat{R}_n(H) \rbrace  + |\Gamma_n | \leq 
\frac{(1-r^n)C_0}{n} + |\Gamma_n |. \end{aligned} \end{equation}
Setting 
\begin{equation} \label{eq:auxEq30} \eta = \frac{2}{\delta} \left( \frac{C_1}{n} + \frac{C_2 {\log(n)}}{{n}} + \frac{C_{3,abs} \sqrt{\log(n)}}{\sqrt{n}}\right)  + \frac{(1-r^n)C_0}{n}\end{equation}
and applying Markov's inequality, \eqref{eq:auxEq24} and  part {\bf (ii)} of Corollary~\ref{cor:ThetaWeighting} then yields 
\[\begin{aligned}
\mathbb{P}\left( \sup_{H \in \mathcal{H}^{RC}} \lbrace R(H) -  \widehat{R}_n(H)  \rbrace > \eta \right) & \leq
\mathbb{P}\left( | \Gamma_n |  > \frac{2}{\delta} \left( \frac{C_1}{n} + \frac{C_2 {\log(n)}}{{n}} + \frac{C_{3,abs} \sqrt{\log(n)}}{\sqrt{n}}\right) \right) \\
&\leq {\mathbb{E}[| \Gamma_n |]} \frac{\delta}{2} \left( \frac{C_1}{n} + \frac{C_2 {\log(n)}}{{n}} + \frac{C_{3,abs} \sqrt{\log(n)}}{\sqrt{n}}\right)^{-1}\\
&   \leq \frac{\delta}{2}. \end{aligned}\] 
By applying what we just proved to the loss function $-L$ the claim then follows precisely as in the proof of part {\bf (i)}.\\

\medskip

\noindent{\bf Proof of  part (iii).}
The proof is the same as the proof of part {\bf (ii)}, except that instead of choosing
$\eta$ as in \eqref{eq:auxEq30} one takes 
\[ \eta =  \frac{(1-r^n)C_0}{n} + \frac{2}{\delta} \left({C}_{1,abs} n^{-\frac{1}{2+\alpha^{-1}}}  + {C_2}{n^{-\frac{2}{2+\alpha^{-1}}}} \right) \]
and instead of using part {\bf (ii)} of Corollary~\ref{cor:ThetaWeighting}  one applies its part {\bf (iii)} to estimate $\mathbb{E}[| \Gamma_n |]$. $\blacksquare$

\subsection{Proof of Proposition~\ref{prop:ESNrandomparams}} 

Denote by $\Theta := \left\{( \rho_A \mathbf{A} ,\rho_C \mathbf{C},\rho_{\bm{\zeta}} \boldsymbol{\zeta} ) \mid  (\rho_A,\rho_C,\rho_{\bm{\zeta}}) \in (-\frac{a}{\lambda ^{\boldsymbol{A}}},\frac{a}{\lambda ^{\boldsymbol{A}}}) \times [-c,c] \times [-s,s] \right\}  $ the random set of admissible parameters for the echo state network. Since
\[ L_\sigma \left(\sum_{l=1}^N  \sup_{(A,C,\overline{\bm{\zeta}}) \in \Theta} \|A_{l,\cdot}\|_\infty \right) = \frac{a}{\lambda ^{\bf A}} L_\sigma \left(\sum_{l=1}^N  \|{\bf A}_{l,\cdot}\|_\infty \right) = a \in (0,1),
 \]
for any realization of ${\bf A},{\bf C},\bm{\zeta}$ (that is, conditional on ${\bf A},{\bf C},\bm{\zeta}$) the assumptions of Proposition~\ref{prop:ESNCase} are satisfied. Thus one may argue as in the proof of part {\bf (ii)} of Theorem~\ref{thm:main1}  to obtain that for any $\eta >0$,
\[\mathbb{P}\left( \left. \sup_{H \in \boldsymbol{ \mathcal{H}}^{RC}} \lbrace R(H) -  \widehat{R}_n(H)  \rbrace > \frac{(1-r^n)C_0}{n} + \eta \right| {\bf A},{\bf C},\bm{\zeta} \right) \leq \frac{\mathbb{E}[|\Gamma_n| \mid  {\bf A},{\bf C},\bm{\zeta}]}{\eta} \]
and then apply part {\bf (ii)} of Corollary~\ref{cor:ThetaWeighting} to obtain
\[ \mathbb{P}\left( \left. \sup_{H \in \boldsymbol{ \mathcal{H}}^{RC}} \lbrace R(H) -  \widehat{R}_n(H)  \rbrace > \frac{(1-r^n)C_0}{n} + \eta \right| {\bf A},{\bf C},\bm{\zeta} \right) \leq \frac{1}{ \eta} \left(\frac{C_1}{n} + \frac{C_2 {\log(n)}}{{n}} + \frac{{\bf C}_{3,abs} \sqrt{\log(n)}}{\sqrt{n}} \right), \]
where the 
 constants can be explicitly chosen using \eqref{eq:C1C2def}-\eqref{eq:C3def}. In particular,  
 $C_1$ and $C_2$ are given by \eqref{eq:C1C2def} with $C_I$ as in \eqref{eq:tauExpDecay}, and ${\bf C}_{3,abs}$ can be written using \eqref{eq:C3def} as 
 \begin{align*}
 {\bf C}_{3, abs} = 2 {\bf C}_3 + \frac{4 L_L \mathbb{E}\left[ \| {\bf Y}_{0} \|^2_2 \right]^{1/2}}{\sqrt{\log(\lambda_{max}^{-1})}}, 
 \end{align*}
with 
\begin{equation*}
{\bf C}_3= \frac{2 \sqrt{m} L_L  {\bf C_{RC}} }{\sqrt{\log(\lambda_{max}^{-1})}}
\end{equation*}
and 
\[ {\bf C_{RC}} =  \dfrac{\overline{L_h}}{ 1-a} \left(\lambda ^{\bf C}   {\mathbb{E} \left[ \left\|   {{\bf Z}}_{0} \right\|_2^2   \right]^{1/2} } + \lambda ^{\bm {\zeta}}  \right) + L_{h,0}. \]
Taking expectations, one sees that 
\[ \mathbb{E}[ {\bf C}_{3, abs}] =  { C}_{3, abs}, \]
where $C_{3, abs}$ is as in \eqref{eq:C3def}  with $C_{RC}$ given by \eqref{eq:CRCRandom}. 
Thus we obtain 
\[  \mathbb{P}\left( \sup_{H \in \boldsymbol{ \mathcal{H}}^{RC}} \lbrace R(H) -  \widehat{R}_n(H)  \rbrace > \frac{(1-r^n)C_0}{n} + \eta  \right) \leq \frac{1}{\eta} \left(\frac{C_1}{n} + \frac{C_2 {\log(n)}}{{n}} + \frac{C_{3,abs} \sqrt{\log(n)}}{\sqrt{n}}\right)\]
and the claim follows by arguing as in the proof of part {\bf (ii)} in Theorem~\ref{thm:main1} . $\blacksquare$

\acks{Lukas Gonon and Juan-Pablo Ortega acknowledge partial financial support coming from the Research Commission of the Universit\"at Sankt Gallen, the Swiss National Science Foundation (grants number 175801/1 and 179114), and the French ANR ``BIPHOPROC'' project (ANR-14-OHRI-0018-02). Lyudmila Grigoryeva acknowledges partial financial support of the Graduate School of Decision Sciences of the Universit\"at Konstanz.} 

\vskip 0.2in
\bibliography{/Users/JP17/Dropbox/Public/GOLibrary}

\begin{thebibliography}{100}
\providecommand{\natexlab}[1]{#1}
\providecommand{\url}[1]{\texttt{#1}}
\expandafter\ifx\csname urlstyle\endcsname\relax
  \providecommand{\doi}[1]{doi: #1}\else
  \providecommand{\doi}{doi: \begingroup \urlstyle{rm}\Url}\fi

\bibitem[Adams and Nobel(2010)]{Adams2010}
T.~M. Adams and A.~B. Nobel.
\newblock {Uniform convergence of Vapnik-Chervonenkis classes under ergodic
  sampling}.
\newblock \emph{Annals of Probability}, 38\penalty0 (4):\penalty0 1345--1367,
  2010.

\bibitem[Alon et~al.(1997)Alon, Ben-David, Cesa-Bianchi, and
  Haussler]{Alon1997}
N.~Alon, S.~Ben-David, N.~Cesa-Bianchi, and D.~Haussler.
\newblock {Scale-sensitive dimensions, uniform convergence, and learnability}.
\newblock \emph{Journal of the ACM}, 44\penalty0 (4):\penalty0 615--631, 1997.

\bibitem[Alquier and Wintenberger(2012)]{alquier:wintenberger}
P.~Alquier and O.~Wintenberger.
\newblock {Model selection for weakly dependent time series forecasting}.
\newblock \emph{Bernoulli}, 18\penalty0 (3):\penalty0 883--913, 2012.

\bibitem[Andrews(1983)]{Andrews1983}
D.~Andrews.
\newblock {First order autoregressive processes and strong mixing}.
\newblock \emph{Cowles Founda-tion Discussion Papers 664}, 1983.

\bibitem[Anthony and Bartlett(1999)]{AB1999}
M.~Anthony and P.~Bartlett.
\newblock \emph{{Neural Network Learning: Theoretical Foundations}}.
\newblock 1999.

\bibitem[Appeltant et~al.(2011)Appeltant, Soriano, {Van der Sande}, Danckaert,
  Massar, Dambre, Schrauwen, Mirasso, and Fischer]{Appeltant2011}
L.~Appeltant, M.~C. Soriano, G.~{Van der Sande}, J.~Danckaert, S.~Massar,
  J.~Dambre, B.~Schrauwen, C.~R. Mirasso, and I.~Fischer.
\newblock {Information processing using a single dynamical node as complex
  system}.
\newblock \emph{Nature Communications}, 2:\penalty0 468, jan 2011.

\bibitem[Baillie(1996)]{Baillie1996}
R.~T. Baillie.
\newblock {Long memory processes and fractional integration in econometrics}.
\newblock \emph{Journal of Econometrics}, 73\penalty0 (1):\penalty0 5--59,
  1996.

\bibitem[Bartlett and Mendelson(2003)]{Bartlett2003}
P.~L. Bartlett and S.~Mendelson.
\newblock {Rademacher and Gaussian complexities: Risk bounds and structural
  results}.
\newblock \emph{Journal of Machine Learning Research}, 3\penalty0 (3):\penalty0
  463--482, 2003.

\bibitem[Bartlett et~al.(2006)Bartlett, Jordan, and Mcauliffe]{Bartlett2006}
P.~L. Bartlett, M.~I. Jordan, and J.~D. Mcauliffe.
\newblock {Convexity, classification, and risk bounds}.
\newblock \emph{Journal of the American Statistical Association}, 101\penalty0
  (473):\penalty0 138--156, 2006.

\bibitem[Bartlett et~al.(2017)Bartlett, Foster, and Telgarsky]{Bartlett2017}
P.~L. Bartlett, D.~J. Foster, and M.~Telgarsky.
\newblock {Spectrally-normalized margin bounds for neural networks}.
\newblock \emph{Advances in Neural Information Processing Systems},
  2017-Decem:\penalty0 6241--6250, 2017.

\bibitem[Ben-David and Shalev-Shwartz(2014)]{Ben-David2014}
S.~Ben-David and S.~Shalev-Shwartz.
\newblock \emph{{Understanding Machine Learning: From Theory to Algorithms}}.
\newblock 2014.

\bibitem[Beran(1994)]{Beran1994}
J.~Beran.
\newblock \emph{{Statistics for Long-Memory Processes}}.
\newblock CRC Press, 1994.

\bibitem[Bollerslev(1986)]{bollerslev:garch}
T.~Bollerslev.
\newblock {Generalized autoregressive conditional heteroskedasticity}.
\newblock \emph{Journal of Econometrics}, 31\penalty0 (3):\penalty0 307--327,
  1986.

\bibitem[Boucheron et~al.(2013)Boucheron, Lugosi, and Massart]{Boucheron2013}
S.~Boucheron, G.~Lugosi, and P.~Massart.
\newblock \emph{{Concentration Inequalities: A Nonasymptotic Theory of
  Independence}}.
\newblock Oxford University Press, 2013.

\bibitem[Bougerol and Picard(1992)]{Bougerol1992}
P.~Bougerol and N.~Picard.
\newblock {Strict Stationarity of Generalized Autoregressive Processes}.
\newblock \emph{The Annals of Probability}, 1992.

\bibitem[Bousquet and Elisseeff(2002)]{Bousquet2002}
O.~Bousquet and A.~Elisseeff.
\newblock {Stability and generalisaiton}.
\newblock \emph{Journal of Machine Learning Reasearch}, 2:\penalty0 499--526,
  2002.

\bibitem[Boyd and Chua(1985)]{Boyd1985}
S.~Boyd and L.~Chua.
\newblock {Fading memory and the problem of approximating nonlinear operators
  with Volterra series}.
\newblock \emph{IEEE Transactions on Circuits and Systems}, 32\penalty0
  (11):\penalty0 1150--1161, nov 1985.

\bibitem[Brandt(1986)]{Brandt1986}
A.~Brandt.
\newblock {The stochastic equation Yn +1=AnYn + Bn with stationary
  coefficients}.
\newblock \emph{Advances in Applied Probability}, 18\penalty0 (01):\penalty0
  211--220, mar 1986.

\bibitem[Brunner et~al.(2013)Brunner, Soriano, Mirasso, and
  Fischer]{photonicReservoir2013}
D.~Brunner, M.~C. Soriano, C.~R. Mirasso, and I.~Fischer.
\newblock {Parallel photonic information processing at gigabyte per second data
  rates using transient states}.
\newblock \emph{Nature Communications}, 4\penalty0 (1364), 2013.

\bibitem[Buehner and Young(2006)]{Buehner:ESN}
M.~Buehner and P.~Young.
\newblock {A tighter bound for the echo state property}.
\newblock \emph{IEEE Transactions on Neural Networks}, 17\penalty0
  (3):\penalty0 820--824, 2006.

\bibitem[Christmann and Steinwart(2008)]{Christmann2008}
A.~Christmann and I.~Steinwart.
\newblock \emph{{Support Vector Machines}}.
\newblock Springer New York, 2008.

\bibitem[Coleman and Mizel(1968)]{Coleman1968}
B.~D. Coleman and V.~J. Mizel.
\newblock {On the general theory of fading memory}.
\newblock \emph{Archive for Rational Mechanics and Analysis}, 29\penalty0
  (1):\penalty0 18--31, jan 1968.

\bibitem[Couillet et~al.(2016)Couillet, Wainrib, Sevi, and Ali]{linearESN}
R.~Couillet, G.~Wainrib, H.~Sevi, and H.~T. Ali.
\newblock {The asymptotic performance of linear echo state neural networks}.
\newblock \emph{Journal of Machine Learning Research}, 17\penalty0
  (178):\penalty0 1--35, 2016.

\bibitem[Cucker and Smale(2002)]{cucker:smale}
F.~Cucker and S.~Smale.
\newblock {On the mathematical foundations of learning}.
\newblock \emph{Bulletin of the American Mathematical Society}, 39\penalty0
  (1):\penalty0 1--49, 2002.

\bibitem[Cucker and Zhou(2007)]{cucker:zhou:book}
F.~Cucker and D.-X. Zhou.
\newblock \emph{{Learning Theory : An Approximation Theory Viewpoint}}.
\newblock Cambridge University Press, 2007.

\bibitem[Cybenko(1989)]{cybenko}
G.~Cybenko.
\newblock {Approximation by superpositions of a sigmoidal function}.
\newblock \emph{Mathematics of Control, Signals, and Systems}, 2\penalty0
  (4):\penalty0 303--314, dec 1989.

\bibitem[Dambre et~al.(2012)Dambre, Verstraeten, Schrauwen, and
  Massar]{dambre2012}
J.~Dambre, D.~Verstraeten, B.~Schrauwen, and S.~Massar.
\newblock {Information processing capacity of dynamical systems}.
\newblock \emph{Scientific reports}, 2\penalty0 (514), 2012.

\bibitem[Dedecker et~al.(2007)Dedecker, Doukhan, Lang, Le{\'{o}}n, Louhichi,
  and Prieur]{Dedecker2007a}
J.~Dedecker, P.~Doukhan, G.~Lang, J.~R. Le{\'{o}}n, S.~Louhichi, and C.~Prieur.
\newblock \emph{{Weak Dependence: With Examples and Applications}}.
\newblock Springer Science+Business Media, 2007.

\bibitem[Dudley(2014)]{Dudley2014}
R.~M. Dudley.
\newblock \emph{{Uniform Central Limit Theorems}}.
\newblock Cambridge University Press, 2nd edition, 2014.

\bibitem[Engle(2009)]{engleCorrelationsBook}
R.~Engle.
\newblock \emph{{Anticipating Correlations}}.
\newblock Princeton University Press, Princeton, NJ, 2009.

\bibitem[Engle(1982)]{engle:arch}
R.~F. Engle.
\newblock {Autoregressive conditional heteroscedasticity with estimates of the
  variance of United Kingdom inflation}.
\newblock \emph{Econometrica}, 50\penalty0 (4):\penalty0 987--1007, 1982.

\bibitem[Fabrizio et~al.(2010)Fabrizio, Giorgi, and Pata]{Fabrizio2010}
M.~Fabrizio, C.~Giorgi, and V.~Pata.
\newblock \emph{{A new approach to equations with memory}}, volume 198.
\newblock 2010.

\bibitem[Fliess and Normand-Cyrot(1980)]{FliessNormand1980}
M.~Fliess and D.~Normand-Cyrot.
\newblock {Vers une approche alg{\'{e}}brique des syst{\`{e}}mes non
  lin{\'{e}}aires en temps discret}.
\newblock In A.~Bensoussan and J.~Lions, editors, \emph{Analysis and
  Optimization of Systems. Lecture Notes in Control and Information Sciences,
  vol. 28}. Springer Berlin Heidelberg, 1980.

\bibitem[Francq and Zakoian(2010)]{Francq2010}
C.~Francq and J.-M. Zakoian.
\newblock \emph{{GARCH Models: Structure, Statistical Inference and Financial
  Applications}}.
\newblock Wiley, 2010.

\bibitem[Funahashi(1989)]{funahashi:universality}
K.-i. Funahashi.
\newblock {On the approximate realization of continuous mappings by neural
  networks}.
\newblock \emph{Neural Networks}, 2:\penalty0 183--192, 1989.

\bibitem[Ganguli et~al.(2008)Ganguli, Huh, and Sompolinsky]{Ganguli2008}
S.~Ganguli, D.~Huh, and H.~Sompolinsky.
\newblock {Memory traces in dynamical systems.}
\newblock \emph{Proceedings of the National Academy of Sciences of the United
  States of America}, 105\penalty0 (48):\penalty0 18970--5, dec 2008.

\bibitem[Gin{\'{e}} and Zinn(1984)]{gine1984}
E.~Gin{\'{e}} and J.~Zinn.
\newblock {Some limit theorems for empirical processes}.
\newblock \emph{Annals of Probability}, 12:\penalty0 929--989, 1984.

\bibitem[Gonon and Ortega(2018)]{RC8}
L.~Gonon and J.-P. Ortega.
\newblock {Reservoir computing universality with stochastic inputs}.
\newblock \emph{IEEE Transactions on Neural Networks and Learning Systems},
  2018.

\bibitem[Gonon et~al.(2019)Gonon, Grigoryeva, and Ortega]{RC12}
L.~Gonon, L.~Grigoryeva, and J.-P. Ortega.
\newblock {Approximation bounds for random neural networks and reservoir
  systems}.
\newblock \emph{Preprint}, 2019.

\bibitem[Grigoryeva and Ortega(2018{\natexlab{a}})]{RC6}
L.~Grigoryeva and J.-P. Ortega.
\newblock {Universal discrete-time reservoir computers with stochastic inputs
  and linear readouts using non-homogeneous state-affine systems}.
\newblock \emph{Journal of Machine Learning Research}, 19\penalty0
  (24):\penalty0 1--40, 2018{\natexlab{a}}.

\bibitem[Grigoryeva and Ortega(2018{\natexlab{b}})]{RC7}
L.~Grigoryeva and J.-P. Ortega.
\newblock {Echo state networks are universal}.
\newblock \emph{Neural Networks}, 108:\penalty0 495--508, 2018{\natexlab{b}}.

\bibitem[Grigoryeva and Ortega(2019)]{RC9}
L.~Grigoryeva and J.-P. Ortega.
\newblock {Differentiable reservoir computing}.
\newblock \emph{arXiv: 1908.05202}, 2019.

\bibitem[Grigoryeva et~al.(2015)Grigoryeva, Henriques, Larger, and
  Ortega]{GHLO2014_capacity}
L.~Grigoryeva, J.~Henriques, L.~Larger, and J.-P. Ortega.
\newblock {Optimal nonlinear information processing capacity in delay-based
  reservoir computers}.
\newblock \emph{Scientific Reports}, 5\penalty0 (12858):\penalty0 1--11, 2015.

\bibitem[Grigoryeva et~al.(2016)Grigoryeva, Henriques, Larger, and Ortega]{RC3}
L.~Grigoryeva, J.~Henriques, L.~Larger, and J.-P. Ortega.
\newblock {Nonlinear memory capacity of parallel time-delay reservoir computers
  in the processing of multidimensional signals}.
\newblock \emph{Neural Computation}, 28:\penalty0 1411--1451, 2016.

\bibitem[Haussler(1992)]{Haussler1992}
D.~Haussler.
\newblock {Decision theoretic generalizations of the PAC model for neural net
  and other learning applications}.
\newblock \emph{Information and Computation}, 1992.

\bibitem[Hermans and Schrauwen(2010)]{Hermans2010}
M.~Hermans and B.~Schrauwen.
\newblock {Memory in linear recurrent neural networks in continuous time.}
\newblock \emph{Neural networks : the official journal of the International
  Neural Network Society}, 23\penalty0 (3):\penalty0 341--55, apr 2010.

\bibitem[Horn and Johnson(2013)]{horn:matrix:analysis}
R.~A. Horn and C.~R. Johnson.
\newblock \emph{{Matrix Analysis}}.
\newblock Cambridge University Press, second edition, 2013.

\bibitem[Hornik et~al.(1989)Hornik, Stinchcombe, and White]{hornik}
K.~Hornik, M.~Stinchcombe, and H.~White.
\newblock {Multilayer feedforward networks are universal approximators}.
\newblock \emph{Neural Networks}, 2\penalty0 (5):\penalty0 359--366, 1989.

\bibitem[Hosking(1981)]{Hosking1981}
J.~R.~M. Hosking.
\newblock {Fractional differencing}.
\newblock \emph{Biometrika}, 1981.

\bibitem[Hyt{\"{o}}nen et~al.(2016)Hyt{\"{o}}nen, van Neerven, Veraar, and
  Weis]{AnalysisBanachSpaces:vol1}
T.~Hyt{\"{o}}nen, J.~van Neerven, M.~Veraar, and L.~Weis.
\newblock \emph{{Analysis in Banach Spaces}}, volume~I.
\newblock Springer International Publishing, 2016.

\bibitem[Ib{\'{a}}{\~{n}}ez-Soria et~al.(2019)Ib{\'{a}}{\~{n}}ez-Soria,
  Soria-Frisch, Garcia-Ojalvo, and Ruffini]{Ibanez-Soria2019}
D.~Ib{\'{a}}{\~{n}}ez-Soria, A.~Soria-Frisch, J.~Garcia-Ojalvo, and G.~Ruffini.
\newblock {Characterization of the non-stationary nature of steady-state visual
  evoked potentials using echo state networks}.
\newblock \emph{PLOS ONE}, 2019.

\bibitem[Jaeger(2002)]{Jaeger:2002}
H.~Jaeger.
\newblock {Short term memory in echo state networks}.
\newblock \emph{Fraunhofer Institute for Autonomous Intelligent Systems.
  Technical Report.}, 152, 2002.

\bibitem[Jaeger(2010)]{jaeger2001}
H.~Jaeger.
\newblock {The 'echo state' approach to analysing and training recurrent neural
  networks with an erratum note}.
\newblock Technical report, German National Research Center for Information
  Technology, 2010.

\bibitem[Jaeger and Haas(2004)]{Jaeger04}
H.~Jaeger and H.~Haas.
\newblock {Harnessing Nonlinearity: Predicting Chaotic Systems and Saving
  Energy in Wireless Communication}.
\newblock \emph{Science}, 304\penalty0 (5667):\penalty0 78--80, 2004.

\bibitem[Khintchine(1923)]{Khintchine1923}
A.~Khintchine.
\newblock {{\"{U}}ber dyadische Br{\"{u}}che}.
\newblock \emph{Mathematische Zeitschriften}, 18:\penalty0 109--116, 1923.

\bibitem[Koiran and Sontag(1998)]{Koiran1998}
P.~Koiran and E.~D. Sontag.
\newblock {Vapnik-Chervonenkis dimension of recurrent neural networks}.
\newblock \emph{Discrete Applied Mathematics}, 1998.

\bibitem[Kuznetsov and Mohri(2017)]{Kuznetsov2017}
V.~Kuznetsov and M.~Mohri.
\newblock {Generalization bounds for non-stationary mixing processes}.
\newblock \emph{Machine Learning}, 106\penalty0 (1):\penalty0 93--117, 2017.

\bibitem[Kuznetsov and Mohri(2018)]{Kuznetsov2018}
V.~Kuznetsov and M.~Mohri.
\newblock {Theory and algorithms for forecasting time series}.
\newblock 2018.

\bibitem[Laporte et~al.(2018)Laporte, Katumba, Dambre, and
  Bienstman]{Laporte2018}
F.~Laporte, A.~Katumba, J.~Dambre, and P.~Bienstman.
\newblock {Numerical demonstration of neuromorphic computing with photonic
  crystal cavities}.
\newblock \emph{Optics Express}, 26\penalty0 (7):\penalty0 7955, apr 2018.

\bibitem[Larger et~al.(2012)Larger, Soriano, Brunner, Appeltant, Gutierrez,
  Pesquera, Mirasso, and Fischer]{Larger2012}
L.~Larger, M.~C. Soriano, D.~Brunner, L.~Appeltant, J.~M. Gutierrez,
  L.~Pesquera, C.~R. Mirasso, and I.~Fischer.
\newblock {Photonic information processing beyond Turing: an optoelectronic
  implementation of reservoir computing}.
\newblock \emph{Optics Express}, 20\penalty0 (3):\penalty0 3241, jan 2012.

\bibitem[Ledoux and Talagrand(1991)]{ledoux:talagrand}
M.~Ledoux and M.~Talagrand.
\newblock \emph{{Probability in Banach Spaces}}.
\newblock Springer-Verlag, 1991.

\bibitem[Legenstein and Maass(2007)]{DynamicalSystemsMaass}
R.~Legenstein and W.~Maass.
\newblock {What makes a dynamical system computationally powerful?}
\newblock In S.~Haykin, editor, \emph{New directions in statistical signal
  processing: from systems to brain}. MIT Press, Cambridge, MA, 2007.

\bibitem[Lu et~al.(2018)Lu, Hunt, and Ott]{Ott2018}
Z.~Lu, B.~R. Hunt, and E.~Ott.
\newblock {Attractor reconstruction by machine learning}.
\newblock \emph{Chaos}, 28\penalty0 (6), 2018.

\bibitem[Luko{\v{s}}evi{\v{c}}ius and Jaeger(2009)]{lukosevicius}
M.~Luko{\v{s}}evi{\v{c}}ius and H.~Jaeger.
\newblock {Reservoir computing approaches to recurrent neural network
  training}.
\newblock \emph{Computer Science Review}, 3\penalty0 (3):\penalty0 127--149,
  2009.

\bibitem[Maass et~al.(2002)Maass, Natschl{\"{a}}ger, and Markram]{maass1}
W.~Maass, T.~Natschl{\"{a}}ger, and H.~Markram.
\newblock {Real-time computing without stable states: a new framework for
  neural computation based on perturbations}.
\newblock \emph{Neural Computation}, 14:\penalty0 2531--2560, 2002.

\bibitem[Maass(2011)]{maass2}
W.~Maass.
\newblock {Liquid state machines: motivation, theory, and applications}.
\newblock In S.~S. {Barry Cooper} and A.~Sorbi, editors, \emph{Computability In
  Context: Computation and Logic in the Real World}, chapter~8, pages 275--296.
  2011.

\bibitem[Maass and Sontag(2000)]{Maass2000}
W.~Maass and E.~D. Sontag.
\newblock {Neural Systems as Nonlinear Filters}.
\newblock \emph{Neural Computation}, 12\penalty0 (8):\penalty0 1743--1772, aug
  2000.

\bibitem[Maass et~al.(2004)Maass, Natschl{\"{a}}ger, and
  Markram]{corticalMaass}
W.~Maass, T.~Natschl{\"{a}}ger, and H.~Markram.
\newblock {Fading memory and kernel properties of generic cortical microcircuit
  models}.
\newblock \emph{Journal of Physiology Paris}, 98\penalty0 (4-6 SPEC.
  ISS.):\penalty0 315--330, 2004.

\bibitem[Maass et~al.(2007)Maass, Joshi, and Sontag]{MaassUniversality}
W.~Maass, P.~Joshi, and E.~D. Sontag.
\newblock {Computational aspects of feedback in neural circuits}.
\newblock \emph{PLoS Computational Biology}, 3\penalty0 (1):\penalty0 e165,
  2007.

\bibitem[Manjunath and Jaeger(2013)]{Manjunath:Jaeger}
G.~Manjunath and H.~Jaeger.
\newblock {Echo state property linked to an input: exploring a fundamental
  characteristic of recurrent neural networks}.
\newblock \emph{Neural Computation}, 25\penalty0 (3):\penalty0 671--696, 2013.

\bibitem[Marzen(2017)]{marzen:capacity}
S.~Marzen.
\newblock {Difference between memory and prediction in linear recurrent
  networks}.
\newblock \emph{Physical Review E}, 96\penalty0 (3):\penalty0 1--7, 2017.

\bibitem[Matthews(1992)]{Matthews:thesis}
M.~B. Matthews.
\newblock \emph{{On the Uniform Approximation of Nonlinear Discrete-Time
  Fading-Memory Systems Using Neural Network Models}}.
\newblock PhD thesis, ETH Z{\"{u}}rich, 1992.

\bibitem[McDonald et~al.(2017)McDonald, Shalizi, and Schervish]{McDonald2012}
D.~J. McDonald, C.~R. Shalizi, and M.~Schervish.
\newblock {Nonparametric risk bounds for time-series forecasting}.
\newblock \emph{Journal of Machine Learning Research}, 18:\penalty0 1--40,
  2017.

\bibitem[Mukherjee et~al.(2006)Mukherjee, Niyogi, Poggio, and
  Rifkin]{Mukherjee2002}
S.~Mukherjee, P.~Niyogi, T.~Poggio, and R.~Rifkin.
\newblock {Learning theory: stability is sufficient for generalization and
  necessary and sufficient for consistency of empirical risk minimization}.
\newblock \emph{Advances in Computational Mathematics}, 25\penalty0
  (1-3):\penalty0 161--193, 2006.

\bibitem[Munkres(2014)]{Munkres:topology}
J.~Munkres.
\newblock \emph{{Topology}}.
\newblock Pearson, second edition, 2014.

\bibitem[Natschl{\"{a}}ger et~al.(2002)Natschl{\"{a}}ger, Maass, and
  Markram]{Natschlager:117806}
T.~Natschl{\"{a}}ger, W.~Maass, and H.~Markram.
\newblock {The "Liquid Computer": a novel strategy for real-time computing on
  time series}.
\newblock \emph{Special Issue on Foundations of Information Processing of
  TELEMATIK}, 8\penalty0 (1):\penalty0 39--43, 2002.

\bibitem[Paquot et~al.(2012)Paquot, Duport, Smerieri, Dambre, Schrauwen,
  Haelterman, and Massar]{Paquot2012}
Y.~Paquot, F.~Duport, A.~Smerieri, J.~Dambre, B.~Schrauwen, M.~Haelterman, and
  S.~Massar.
\newblock {Optoelectronic reservoir computing}.
\newblock \emph{Scientific reports}, 2:\penalty0 287, jan 2012.

\bibitem[Pathak et~al.(2017)Pathak, Lu, Hunt, Girvan, and Ott]{pathak:chaos}
J.~Pathak, Z.~Lu, B.~R. Hunt, M.~Girvan, and E.~Ott.
\newblock {Using machine learning to replicate chaotic attractors and calculate
  Lyapunov exponents from data}.
\newblock \emph{Chaos}, 27\penalty0 (12), 2017.

\bibitem[Pathak et~al.(2018)Pathak, Hunt, Girvan, Lu, and Ott]{Pathak:PRL}
J.~Pathak, B.~Hunt, M.~Girvan, Z.~Lu, and E.~Ott.
\newblock {Model-Free Prediction of Large Spatiotemporally Chaotic Systems from
  Data: A Reservoir Computing Approach}.
\newblock \emph{Physical Review Letters}, 120\penalty0 (2):\penalty0 24102,
  2018.

\bibitem[Poggio et~al.(2004)Poggio, Rifkin, Mukherjee, and Niyogi]{Poggio2004}
T.~Poggio, R.~Rifkin, S.~Mukherjee, and P.~Niyogi.
\newblock {General conditions for predictivity in learning theory}.
\newblock \emph{Nature}, 428\penalty0 (6981):\penalty0 419--422, 2004.

\bibitem[Rakhlin et~al.(2010)Rakhlin, Sridharan, and
  Tewari]{AlexanderRakhlinKarthikSridharan2015}
A.~Rakhlin, K.~Sridharan, and A.~Tewari.
\newblock {Online learning via sequential complexities}.
\newblock \emph{Journal of Machine Learning Research}, 16:\penalty0 155--186,
  2010.

\bibitem[Rakhlin et~al.(2014)Rakhlin, Sridharan, and Tewari]{Rakhlin2014}
A.~Rakhlin, K.~Sridharan, and A.~Tewari.
\newblock {Sequential complexities and uniform martingale laws of large
  numbers}.
\newblock \emph{Probability Theory and Related Fields}, 161\penalty0
  (1-2):\penalty0 111--153, 2014.

\bibitem[Rodan and Tino(2011)]{Rodan2011}
A.~Rodan and P.~Tino.
\newblock {Minimum complexity echo state network.}
\newblock \emph{IEEE Transactions on Neural Networks}, 22\penalty0
  (1):\penalty0 131--44, jan 2011.

\bibitem[Smale and Zhou(2003)]{Smale2003}
S.~Smale and D.-X. Zhou.
\newblock {Estimating the approximation error in learning theory}.
\newblock \emph{Analysis and Applications}, 01\penalty0 (01):\penalty0 17--41,
  2003.

\bibitem[Sontag(1979{\natexlab{a}})]{Sontag1979}
E.~Sontag.
\newblock {Realization theory of discrete-time nonlinear systems: Part I-The
  bounded case}.
\newblock \emph{IEEE Transactions on Circuits and Systems}, 26\penalty0
  (5):\penalty0 342--356, may 1979{\natexlab{a}}.

\bibitem[Sontag(1979{\natexlab{b}})]{sontag:polynomial:1979}
E.~D. Sontag.
\newblock {Polynomial Response Maps}.
\newblock In \emph{Lecture Notes Control in Control and Information Sciences.
  Vol. 13}. Springer Verlag, 1979{\natexlab{b}}.

\bibitem[Sontag(1998)]{sontag:VC}
E.~D. Sontag.
\newblock {VC dimension of neural networks}.
\newblock \emph{NATO ASI Series F Computer and Systems Sciences}, 168:\penalty0
  69--96, 1998.

\bibitem[Sternberg(2010)]{Sternberg:dynamical:book}
S.~Sternberg.
\newblock \emph{{Dynamical Systems}}.
\newblock Dover, 2010.

\bibitem[Vandoorne et~al.(2011)Vandoorne, Dambre, Verstraeten, Schrauwen, and
  Bienstman]{SOASforRC}
K.~Vandoorne, J.~Dambre, D.~Verstraeten, B.~Schrauwen, and P.~Bienstman.
\newblock {Parallel reservoir computing using optical amplifiers}.
\newblock \emph{IEEE Transactions on Neural Networks}, 22\penalty0
  (9):\penalty0 1469--1481, sep 2011.

\bibitem[Vandoorne et~al.(2014)Vandoorne, Mechet, {Van Vaerenbergh}, Fiers,
  Morthier, Verstraeten, Schrauwen, Dambre, and Bienstman]{swirl:paper}
K.~Vandoorne, P.~Mechet, T.~{Van Vaerenbergh}, M.~Fiers, G.~Morthier,
  D.~Verstraeten, B.~Schrauwen, J.~Dambre, and P.~Bienstman.
\newblock {Experimental demonstration of reservoir computing on a silicon
  photonics chip}.
\newblock \emph{Nature Communications}, 5:\penalty0 78--80, mar 2014.

\bibitem[Vapnik(1991)]{Vapnik1991}
V.~Vapnik.
\newblock {Principles of risk minimization for learning theory}.
\newblock In \emph{Advances in Neural Information Processing Systems 4 (NIPS
  1991)}, pages 831--838, 1991.

\bibitem[Vapnik(1998)]{Vapnik1998}
V.~Vapnik.
\newblock \emph{{Statistical Learning Theory}}.
\newblock Wiley, adaptive a edition, 1998.

\bibitem[Vapnik and Chervonenkis(1968)]{Vapnik1968}
V.~Vapnik and A.~Y. Chervonenkis.
\newblock {On the uniform convergence of relative frequencies of events to
  their probabilities}.
\newblock \emph{Dokl. Akad. Nauk SSSR}, 181\penalty0 (4):\penalty0 781, 1968.

\bibitem[Verzelli et~al.(2019)Verzelli, Alippi, and Livi]{Verzelli2019}
P.~Verzelli, C.~Alippi, and L.~Livi.
\newblock {Echo State Networks with self-normalizing activations on the
  hyper-sphere}.
\newblock \emph{Scientific Reports}, 9\penalty0 (13887), 2019.

\bibitem[Vinckier et~al.(2015)Vinckier, Duport, Smerieri, Vandoorne, Bienstman,
  Haelterman, and Massar]{Vinckier2015}
Q.~Vinckier, F.~Duport, A.~Smerieri, K.~Vandoorne, P.~Bienstman, M.~Haelterman,
  and S.~Massar.
\newblock {High-performance photonic reservoir computer based on a coherently
  driven passive cavity}.
\newblock \emph{Optica}, 2\penalty0 (5):\penalty0 438--446, 2015.

\bibitem[Volterra(1930)]{volterra:book}
V.~Volterra.
\newblock \emph{{Theory of Functionals and of Integral and Integro-Differential
  Equations}}.
\newblock Blackie {\&} Son Limited, Glasgow, 1930.

\bibitem[White et~al.(2004)White, Lee, and Sompolinsky]{White2004}
O.~White, D.~Lee, and H.~Sompolinsky.
\newblock {Short-Term Memory in Orthogonal Neural Networks}.
\newblock \emph{Physical Review Letters}, 92\penalty0 (14):\penalty0 148102,
  apr 2004.

\bibitem[Wiener(1958)]{wiener:book}
N.~Wiener.
\newblock \emph{{Nonlinear Problems in Random Theory}}.
\newblock The Technology Press of MIT, 1958.

\bibitem[Yildiz et~al.(2012)Yildiz, Jaeger, and Kiebel]{Yildiz2012}
I.~B. Yildiz, H.~Jaeger, and S.~J. Kiebel.
\newblock {Re-visiting the echo state property.}
\newblock \emph{Neural Networks}, 35:\penalty0 1--9, nov 2012.

\bibitem[Zhang et~al.(2018)Zhang, Lei, and Dhillon]{Zhang2018}
J.~Zhang, Q.~Lei, and I.~S. Dhillon.
\newblock {Stabilizing gradients for deep neural networks via efficient SVD
  parameterization}.
\newblock 2018.

\end{thebibliography}
\end{document}